\title{CVPR2020 Activation}
\author{Eng-Jon Ong}
\date{November 2020}
\begin{document}

\maketitle

\section{Introduction}
In recent years, deep neural networks (DNN) have shown excellent performance across a wide range of tasks, for example, image classification and retrieval.
One element that is almost ubiquitous in state-of-the-art DNN architectures is the use of aggregation processes, where deep features are combined together into a more compact representation, with the intuition that stronger (i.e. higher valued) features represent important information. 
The aggregation process can occur from the local level to global levels. One of the earliest use of aggregation was by Fukushima \cite{Fukushima1980}, where average pooling was introduced to reduce the size of convolutional images. Later, Weng et al. \cite{maxpool_weng} introduced the max-pooling layer in the Cresceptron framework, whereby activation values of a patch is replaced by its maximum value. The aggregation process was then extended by Lin et al. \cite{lin2014network} to act on a more global level, where the all the convolutional activations of a filter were replaced by a single aggregated average value. This brought the advantage of improved robustness to overfitting and spatial translations of the input. Greater emphasis can be placed on strong features in the aggregation process by using more generalised activation pooling methods \cite{GEM,HusainPAMI}, where the global average pooling layer was sandwiched between two non-linear transformations. This was found to have significantly improved the accuracy of the DNN when applied on image retrieval tasks. 

It can be argued that the pooling process, is as important as the feature extracting convolutional filters in a deep net. This applies especially to the global pooling since it determines the separability of features that ultimately defines the accuracy of a DNN. As an example, it was found in \cite{HusainPAMI} that the retrieval or classification performance of DNNs correlates strongly with an increase in the KL divergence of its outputs.
%
In order to better understand the usefulness of aggregation, we need to understand the quality of the aggregated features. One way of achieving this is by using information theoretic methods such as KL divergence. In order to do this, we need knowledge about the distributions of activations throughout the aggregation process. To our knowledge, this paper presents a first attempt in the theoretical study of the distributions of {\em aggregated} features. We will also show that these models agree well with results from DNN models used in practice.
\section{Related Work}
The theoretical study of the statistics of layers in DNN is currently an active area of research.
The connection between Gaussian Processes (GP) and a single fully connected layer of infinite neurons was initially established by Neal \cite{nealthesis}. 
Recently, the link between Gaussian Processes and neural networks with multiple hidden layers was shown by Lee et al. \cite{lee18} and Mathews et al. \cite{matthews2018gaussian}. This was further extended to cover deep nets with convolutional layers by Novak et al. \cite{Novak19}. However, the validity of the above work rests on the assumption of infinite channels and hidden units. Unfortunately, practical DNNs in use at present contain finite hidden units and convolutional channels. Recently, Brock et al. \cite{brock2021characterizing} investigated how the mean and variances of layer outputs changes across a DNN. However, they did not explicitly model the distribution itself.
Since the number of filters used in convolutional layers are often large (thousands), the law of large numbers can often be used to approximate pre-activation values distributions using Gaussians. This is because the linear combination of a large number of distributions tends to result in a Gaussian distribution. However, this does not apply in a non-linear aggregation block, where responses from filters are individually transformed non-linearly. Our work directly addresses this issue by approaching it in terms of transformation of probability distributions. 
%
%
%
\subsection{Contributions and Overview}
To our knowledge, there does not exist any work on theoretically modelling the output distributions of DNN layers involved in the aggregation process. 
\begin{itemize}
\item We carry out a theoretical study how the distribution of information changes across the aggregation layers as illustrated in Fig. \ref{fig:nn_overview} (Section \ref{sec:agg_blocks}), where we propose a novel mixed distribution for modelling the outputs of the last convolutional layer.  
\item We show that the convolutional layer outputs are non-Gaussian due to ReLU activation. These output distributions are typically immediately consolidated into a Gaussian distribution by subsequent convolutional layers. However, this does not occur in an aggregation block. As such, our proposed distribution plays an important role in determining the distributions of DNN layers in the aggregation block.
\item We also propose a formulation for analytically predicting the KL-divergence of the outputs of a DNN based solely on the last convolutional layer distributions.
\item Our novel mathematical formulation allows us to theoretically link the role the covariance between convolutional filters and features plays in the KL-divergence of class specific distributions of a DNN output nodes. We also show this connection experimentally.
\end{itemize}
We perform experiments and show that our mathematical formulations can be used to accurately predict the distributions for layers involved in the aggregation of deep features (Section \ref{sec:exp}), before concluding in Section \ref{sec:conclusions}.

\begin{figure*}[t!]
\begin{center}
\includegraphics[width = 1\linewidth]{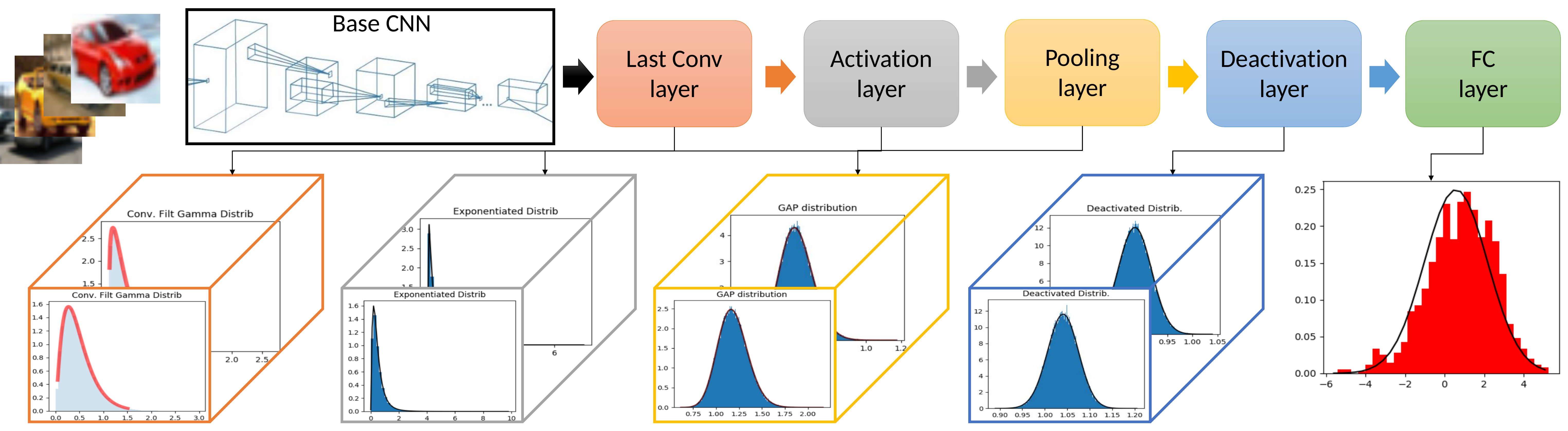} 
\end{center}
\caption{An overview of how the distribution of values from the last convolutional layer (for images of a specific label) is transformed by a nonlinear activation layer, followed by the pooling layer, deactivation layer, before being combined together into an output distribution.}
\label{fig:nn_overview}
\end{figure*}

\section{Aggregation DNN Classifiers}
In this paper, we consider DNN architectures that have the following structure: base-network $\rightarrow$ aggregation block $\rightarrow$ fully connected classification layer.

The base network typically consists of a sequence of convolutional blocks, and can be somewhat shallow (AlexNet \cite{krizhevsky2012imagenet}, VGG \cite{simonyan2014very}) to very deep (ResNet101 \cite{resnet}, Xception \cite{Xception}). 
The final convolutional layer then produces an output tensor that is passed to the aggregation block, where features are aggregated together to form a compact global image signature, also denoted as a {\em global descriptor}. The popular feature aggregation methods are Global max pooling (GMP) \cite{MAC}, Global average pooling (GAP) \cite{lin2014network}, NetVLAD \cite{NetVLAD}, Generalized mean pooling (GEMP) \cite{GEM}, Region of interest pooling (ROIP) \cite{ROIP} and REMAP \cite{REMAP}. In a classification DNN, this global descriptor is then passed to a fully connected classification layer for producing the output result.

Our work is concerned with understanding the statistics of the aggregation block and how information is transformed within it. We consider the case where the final convolutional layer features undergo non-linear amplification before the aggregation process, and also non-linear ``deactivation'' after aggregation.

\subsection{Non-linear Activated Aggregation}
\label{sec:nonlin_act_agg}
Let $F$ be the number of filters in the last convolutional layer. Given an input image, the last convolutional layer will produce an output tensor $\mathbf{T}$ of size $W\times H \times F$. The elements of $\mathbf{T}$ is denoted as $T_{ijk}$ where $i \in \{1,2,...,W\}$, $j \in \{1,2,...,H\}$ and $k \in \{1,2,...,F\}$. Additionally, the total number of output pixels in the convolutional image of each filter is written as $R = WH$.

In most state of the art DNNs, the last convolutional layer features are aggregated, typically using global average pooling (GAP), giving a $F$-dimensional ``global descriptor'' vector $\mathbf{d} = (d_1,d_2....d_F)$ of the input image, where its elements are:
\begin{equation}
    d_f = \frac{1}{R} \sum^N_i\sum^N_j T_{ijf}
\end{equation}

The vector $\mathbf{g}$ is given as input to a fully connected layer for classification.

In our work, we consider the case where the non-linear exponential transformation is applied {\em before} aggregation using GAP. Specifically, each element of the last convolutional layer tensor $\mathbf{T}$ is independently transformed as follows:
\begin{equation}
g(x) = \alpha ( \exp(\beta x ) - 1 )
\label{eq:scaled_exp_trans}
\end{equation}

The result is then passed through the GAP operation to produce an exponentially amplified global descriptor $\mathbf{e} = (e_1,e_2,...,e_F)$, where the elements are:
\[
e_f = \frac{1}{R} \sum^N_i\sum^N_j g( T_{ijf} )
\]

The exponential amplification of tensor elements can produce values that are very large. To overcome this issue, a ``deactivation'' process is performed by raising the amplified global descriptor to a small power $\gamma$ (less than 1): 
\begin{equation}
    s(x) = x^\gamma
\end{equation}

This produces the following non-linearly aggregated descriptor $\mathbf{e'} = (e'_1, e'_2,...,e'_F)$ that is used for classification, where:
\[
e'_f = s( e_f )
\]
An illustration of the aggregation DNN described above is shown in Figure \ref{fig:nn_overview}. We also show how the activation distribution changes across the different aggregation block layers. For the rest of the paper, we will use the above DNN architecture.

\section{Statistics of Aggregation Blocks} \label{sec:agg_blocks}

In this section, we propose a novel formulation for the statistics of different operations in an aggregation block. This allows us to study how the distributions of deep features changes as they move through the different aggregation block processes.
This will provide an important link between features of the last convolutional layer and the KL-divergence of two classes at the output layer (Section \ref{sec:class_kl}).
We start by proposing a suitable statistical model that describes the distribution of output values of the last convolutional layer of a DNN.

\subsection{Last Convolutional Layer Mixed-Distribution}
\label{sec:last_conv_mix_distrib}
In this section, we propose a probabilistic model capturing how the last convolutional layer outputs are distributed, particularly when ReLU is used as an activation function. We have found that when a convolutional layer uses ReLU, its output values will not be Gaussian.
In our case, when these values are passed on to subsequent amplification, the result will remain non-Gaussian. As we will see in Section \ref{sec:agg_blocks}, the distribution of output values throughout other layers in the aggregation block tend to remain non-Gaussian.

Based on observations, we propose to model each filter's responses in the last convolutional layer that are greater than 0 using the Gamma distribution:
 \begin{equation}
     f_\Gamma(x) = \frac{1}{\Gamma(a)s^a}x^{a-1}\exp(-x/s)
     \label{eq:gamma_pdf}
 \end{equation}
where $a$ and $s$ are the shape and scale parameters respectively. Here, $\Gamma(z)$ denotes the complex valued Gamma function, a continuous analogue of the factorial function, which we use extensively in this paper, and is defined as:
\[
 \Gamma(z) = \int^\infty_0 x^{z-1}e^{-x} dx
\]

%
The use of ReLU also has the effect of concentrating all the mass of the pdf at 0 and negative values of $x$ to a single point of $x = 0$. In order to capture such a behaviour, we use the following mixed probability distribution defined in the range $[0,\infty)$:
\begin{equation}
f(x) = \left\{ \begin{array}{cc}
     p &, \textrm{if}\;\; x = 0 \\
     (1-p)\frac{1}{\Gamma(a)s^a}x^{a-1}\exp(-x/s)& , \textrm{if}\;\; x > 0
\end{array}\right.
\label{eq:zero_gamma_pdf}
\end{equation}
where $p$ is the probability of values less or equal to 0. The Gamma distribution used for values $x > 0$ with a location parameter of 0 was found to give a good fit to observed data whilst being simpler to work with. 

\subsubsection{Mean and Variance}
To use this distribution, it is important that its mean and variance is known. To start, let $X$ be a random variable with the probability distribution function (pdf) of $f(x)$ (Eq. \ref{eq:zero_gamma_pdf}).
We find that the mean of $X$ is the weighted mean of the Gamma distributed component:
\begin{eqnarray}
\mu_f & = & \mathbb{E}[X] 
 = \int^\infty_0 f(x)x dx \nonumber\\
& = & \int^\infty_0 (1-p)\frac{1}{\Gamma(a)s^a}x^{a-1}\exp(-x/s)x dx \nonumber \\
& = & (1-p)as 
\label{eq:zero_gamma_mean}
\end{eqnarray}
The last line above uses the fact that the mean of a Gamma distribution is $as$. Next, to find the variance, denoted as $\sigma_f^2$, we first note that:
\begin{equation}
\sigma^2 = \mathbb{E}[X^2] - (\mathbb{E}[X])^2
\label{eq:gen_var}
\end{equation}

To proceed, we will need the second moment of $X$:
\begin{eqnarray}
\mathbb{E}[X^2] & = & \int^\infty_0 f(x) x^2 dx \nonumber \\
& = & \int^\infty_0 (1-p)\frac{1}{\Gamma(a)s^a}x^{a-1}\exp(-x/s)x^2 dx \nonumber \\
& = & (1-p)a(a+1)s^2
\label{eq:zero_gamma_2nd_mom}
\end{eqnarray}
The last equation above is obtained by using moment generating functions on the Gamma distribution (details of the derivation in Appendix \ref{app:mix_gamma_scnd_mnt_deriv}). Then, by substituting Eq. \ref{eq:zero_gamma_2nd_mom} and Eq. \ref{eq:zero_gamma_mean} into Eq. \ref{eq:gen_var}, we have the variance as:
\begin{eqnarray}
\sigma^2_f& = & (1-p)a(a+1)s^2 - (1-p)^2a^2s^2 \nonumber \\
& = & (1-p)as^2( a - (1-p)a ) \nonumber \\
& = & (1-p)as^2(1+ap)
\end{eqnarray}

\subsection{Distribution of Non-linear Activation}
We now describe how a non-linear activation on mixed-Gamma tensor values will transform its distribution. In particular, we consider the scaled exponential transform: $g(x) = \alpha( \exp(\beta x ) - 1$ (Eq. \ref{eq:scaled_exp_trans}).

We have seen in Section \ref{sec:last_conv_mix_distrib} how the distribution of each last convolutional layer filter output is modelled as a mixture of a discrete point distribution at 0 and Gamma distribution for non-zero positive values. Consequently, if this zero-Gamma mixed distribution is exponentiated, the result will be a new mixed distribution, with a discrete component at 1 and an exponentiated Gamma distribution $h(x)$ (which we describe next in Section \ref{sec:exp_gamma}) for all values greater than 1. In other words, we have the new distribution $h_m$, 
which we call the ``Zero-ExpGamma'' distribution and is defined for $x \geq 0$:
\begin{eqnarray}
h_m(x) = \left\{
\begin{array}{cc}
     p & ,\textrm{if}\;\; x = 1 \\
     (1-p)h(x) & ,\textrm{if}\;\; x > 1
\end{array}
\right.
\label{eq:zero_exp_gamma_pdf}
\end{eqnarray}

\subsubsection{Exponentiated Gamma Distributions}
\label{sec:exp_gamma}
We start by determining the distribution for an exponentially transformed Gamma distribution. Suppose a random variable $X$ follows the Gamma distribution described in Eq. \ref{eq:gamma_pdf}. We would like to know what the distribution of this random variable will be when it is exponentially transformed.
\begin{proposition}
Let $X$ be a random variable with distribution $f_\Gamma(x)$ in Eq. \ref{eq:gamma_pdf}. Let $g(x) = \alpha( \exp(\beta x) - 1)$ with $\alpha, \beta \in \mathbb{R}$, then $g(X)$ follows the following distribution:
\[
h(x) = \frac{1}{\Gamma(a)\beta^as^a}(\ln (1+x/\alpha))^{a-1}(1+x/\alpha)^{-\left(\frac{\beta s +1}{\beta s}\right)}
\]
\label{thm:exp_gamma}
\end{proposition}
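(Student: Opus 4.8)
The plan is to treat this as a routine change-of-variables (transformation of random variables) computation, since on the support of $X$ the map $g$ is smooth and strictly monotone. First I would establish invertibility: for the relevant regime $\alpha,\beta>0$ (the amplification setting, in which the formula's $\ln(1+x/\alpha)$ and the powers make sense), $g(x)=\alpha(\exp(\beta x)-1)$ is a strictly increasing bijection from $(0,\infty)$ onto $(0,\infty)$, so $Y=g(X)$ is again supported on $(0,\infty)$. Solving $y=\alpha(\exp(\beta x)-1)$ for $x$ gives the inverse $g^{-1}(y)=\tfrac1\beta\ln(1+y/\alpha)$.

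Next I would apply the density transformation formula $h(y)=f_\Gamma\!\big(g^{-1}(y)\big)\,\big|\tfrac{d}{dy}g^{-1}(y)\big|$. The derivative is $\tfrac{d}{dy}g^{-1}(y)=\tfrac{1}{\beta(\alpha+y)}$, and substituting $x=g^{-1}(y)$ into the Gamma density of Eq.~\ref{eq:gamma_pdf} produces three factors: the constant $\tfrac{1}{\Gamma(a)s^a}$, the term $\big(\tfrac1\beta\ln(1+y/\alpha)\big)^{a-1}$ coming from $x^{a-1}$, and $\exp\!\big(-\tfrac{1}{\beta s}\ln(1+y/\alpha)\big)$ coming from $\exp(-x/s)$.

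The one genuinely substantive step is the algebraic simplification. I would rewrite $\exp\!\big(-\tfrac{1}{\beta s}\ln(1+y/\alpha)\big)$ as the power $(1+y/\alpha)^{-1/(\beta s)}$, then multiply in the Jacobian factor, whose $(\alpha+y)^{-1}=\alpha^{-1}(1+y/\alpha)^{-1}$ part merges with that power to give the combined exponent $-\tfrac{\beta s+1}{\beta s}$; collecting the $\beta^{-(a-1)}$ from the $x^{a-1}$ term with the $\beta^{-1}$ from the Jacobian yields $\beta^{-a}$, and renaming $y$ to $x$ gives the claimed $h(x)$. I do not expect a real obstacle here: the only places needing care are the monotonicity and support bookkeeping for $g$ and the tracking of the $\alpha$- and $\beta$-dependent constants through the simplification — in particular one should verify the overall normalizing constant, since the transformed density must still integrate to one.
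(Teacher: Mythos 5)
Your proof is correct and follows exactly the same route as the paper's own proof in Appendix~\ref{app:exp_gamma_pdf}: invert $g$, apply $h(y)=f_\Gamma\!\left(g^{-1}(y)\right)\left|\tfrac{d}{dy}g^{-1}(y)\right|$, and simplify the resulting powers of $(1+y/\alpha)$ and $\beta$. One remark: the Jacobian you compute correctly contributes an overall factor $1/\alpha$, so the density you actually obtain is $\frac{1}{\alpha\Gamma(a)\beta^a s^a}\left(\ln(1+x/\alpha)\right)^{a-1}(1+x/\alpha)^{-\left(\frac{\beta s+1}{\beta s}\right)}$, which matches the appendix's (correctly normalized) version of the proposition rather than the main-text statement, which omits the $1/\alpha$.
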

The proof of this theorem is a straightforward application of the transformations of variables method commonly used to determine the distributions of transformed random variables. It requires that the transformation function has a differentiable inverse, which is the case here. Details of the proof can be found in Appendix \ref{app:exp_gamma_pdf}.
The mean of $h(x)$ is:
\[
\mu_h = \alpha\left(\frac{1}{(1- \beta s)^a} - 1\right)
\]
The variance of this distribution is:
\begin{equation*}
\sigma^2_h = \frac{\alpha^2}{(1-2\beta s)^a} - \frac{\alpha^2}{(1-\beta s)^{2a}}
\end{equation*}
The derivation of the above mean and variance can be found in Appendix \ref{app:zero_expgamma_meanvar}. 

\subsubsection{Zero-ExpGamma Mean and Variance}
The mean and variance of the mixed zero-ExpGamma distribution can be obtained in a similar manner to that of the zero-Gamma distribution. 
The mean of $h_m(x)$ is:
\begin{eqnarray}
\mu_{m} 
& = & \int^\infty_0 h_m(x)x dx 
 =  \int^\infty_0 (1-p)h(x) x dx \nonumber\\
& = & (1-p)\mu_h = \alpha(1-p)\left(\frac{1}{(1 - \beta s)^a} - 1\right)
\label{eq:zero_expgamma_mean}
\end{eqnarray}

The variance of $h_m(x)$ is found in the same manner as the previous distributions, by finding the second moment and subtracting the squared mean from it. The result is (derivation in Appendix \ref{app:zero_expgamma_var}):
\begin{eqnarray}
\sigma_m^2& = & \alpha^2(1-p)\left[ \frac{1}{(1-2\beta s)^a} - \frac{1-p}{(1-\beta s)^{2a}} - \right. \nonumber \\
&& \left. \frac{2p}{(1-\beta s)^a} + p \right]
\label{eq:zero_expgamma_var}
\end{eqnarray}


\begin{figure*}
    \centering
    \includegraphics[width = 0.95\linewidth]{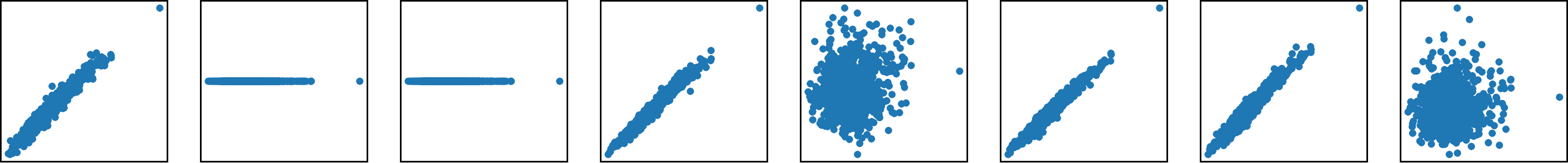}
    \caption{Scatter plots of values from 8 random pairs of pixels from the last convolutional outputs from the CIFAR dataset. This shows that the last convolutional layer output values can be strongly correlated and therefore not independent.  }
    \label{fig:gap_corr}
\end{figure*}
\subsection{Probability Distribution of GAP Features}
We are now in position to determine the distribution of global average pooled features. We assume that the scaled exponential activation function was used on the last convolutional layer outputs, so that the resulting ``pixels'' each follow the zero-ExpGamma distribution.
Importantly, we cannot assume that convolutional image pixels are independent. In fact, we have found that the convolutional image pixels are correlated both within the same filter and between different filters. This is illustrated in Fig. \ref{fig:gap_corr}. Therefore, accounting for these correlations is crucial when we are calculating the variance of the GAP features.

The convolutional output of a particular filter can be modelled as a collection of $R$ (i.e. total pixels in convolutional image) dependent random variables. The GAP operation has the effect of taking the average of $R$ random variables.

We denote the above $R$ random variables as $X_1,X_2,...,X_R$. 
The zero-expGamma distribution of these random variables is given in Eq. \ref{eq:zero_exp_gamma_pdf}. We denote their respective means and variance using $\mu_m$ and $\sigma^2_m$.
The GAP operation will produce a new random variable, $X_S$, that is the average of $X_1,X_2...,X_R$: 
\[
X_S = \frac{1}{R}\sum^R_{i=1}X_i
\]

The exact distribution of the exponentially activated GAP features can be very complicated. To overcome this, we describe (Section \ref{sec:gap_pdf_approx}) how it is possible to approximate the ``GAP feature'' distribution as a Gamma distribution. However, to determine the shape and scale parameter of this distribution we will need the mean and variance of $X_S$. We start with the mean, denoted as $\mu_{X_S}$, which is obtained using the linearity property of the expectation operator:
\begin{eqnarray}
\mu_{X_S} & = & \mathbb{E}[X_S] 
=\mathbb{E}\left[\frac{1}{R}\sum^R_{i=1}X_i \right]\nonumber \\
& = & \frac{1}{R} \sum^R_{i=1} \mathbb{E}[X_i] 
= \mu_m
\label{eq:gapsum_mean}
\end{eqnarray}
In other words, the mean is unchanged by the averaging operation. 

\subsubsection{GAP Feature Mean and Variance Prediction}
\label{sec:gap_feat_var_pred}
Since the pixels of each filter are correlated, the variance of the GAP distribution will also depend on the covariance matrix of the convolutional pixels of a filter. 
The presence of correlation between output pixels in the last convolutional layer means that the variance of the GAP distribution will depend on the covariance matrix of these pixel values.
Unfortunately, determining the exact covariance matrix of a sum of correlated R.V.s will require knowing their joint distribution. 
However, we have found that a 2nd order Taylor expansion of the covariance matrix circumvents this issue and provides a good approximation. 

To start, let the elements of the $R\times R$ covariance matrix between the random variables $X_1,X_2,...,X_R$ be denoted as $Cov_S(i,j)$, with $i,j\in\{1,2,...,R\}$ where:
\begin{eqnarray}
Cov_S(i,j) & = & \mathbb{E}[(X_i - \mu_{X_i})(X_j - \mu_{X_j})] \nonumber \\
&=& \mathbb{E}[X_iX_j] - \mu_{X_i}\mu_{X_j}
\label{eq:gap_cov_mat}
\end{eqnarray}

Our task now is to calculate $\mathbb{E}[X_iX_j]$. 
Suppose that $X_i$ is the result of the non-linear activation of the random variable $W_i$ (e.g. $W_i$ is some pixel of the last convolutional layer) using the function $g(x)$, that is: $X_i = g(W_i)$. Next, the 2nd order Taylor expansion of $g(x)$ at $x = 0$ gives $g(x) \approx g(0) + g'(0)x + g''(0)x^2/2$. 
Inserting this expansion into $\mathbb{E}[X_iX_j]$ gives:
\begin{eqnarray}
\mathbb{E}[X_iX_j] & = & \mathbb{E}[g(W_i)g(W_j)] \\
& \approx & \mathbb{E}\left[\left(g(0) + g'(0)W_i + \frac{g''(0)W_j^2}{2}\right)\times \right.\nonumber\\
&&\left. \left(g(0) + g'(0)W_j + \frac{g''(0)W_j^2}{2}\right)\right] \nonumber\\
& = & \mathbb{E}\left[g^2(0) + A(W_i + W_j) + BW_iW_j + \right.\nonumber\\
&& \left.C(W_i^2W_j + W_iW_j^2) + DW_i^2W_j^2\right] \nonumber\\
& = & g^2(0) + A(\mathbb{E}[W_i] + \mathbb{E}[W_j]) + B\mathbb{E}[W_iW_j] + \nonumber\\
&& C(\mathbb{E}[W_i^2W_j] + \mathbb{E}[W_iW_j^2] ) + D\mathbb{E}[W_i^2W_j^2]
\label{eq:exp_xi_xj}
\end{eqnarray}
with constants $A = g(0)g'(0)$, $B = (g'(0))^2$, $C = g'(0)g''(0)/2$ and $D = g''(0)^2/4$. 

We are now faced with the issue of calculating the expectation: $\mathbb{E}[W_i^a,W_j^b]$ where $a,b \in {0,1,2}$. We can achieve this {\em without} knowledge of the joint distribution between $W_i$ and $W_j$ by using the relationship: $\mathbb{E}[W_i^aW_j^b] = Cov(W_i^a,W_j^b) - \mu_{W_i}\mu_{W_j}$. Here, the covariance matrix $Cov(W_i^a,W_j^b)$ and means $\mu_{W_i},\mu_{W_j}$ can all be directly computed using the data from the last convolutional layer outputs. Since we will be using the approximation of $\mathbb{E}[X_iX_j]$, we should also use the approximation of $\mu_{X_i}$ and $\mu_{X_j}$ too, where:
\begin{equation}
\mu_{X_i} \approx g(0) + g'(0)W_i \frac{g''(0)}{2}W_i^2
\label{eq:gap_mu_approx}
\end{equation}
with $\mu_{X_j}$ obtained in a similar manner. In our case where $g(x) = \alpha( e^{\beta x} - 1)$, we have: $g'(x) = \alpha\beta e^{\beta x}$ and $g''(x) = \alpha\beta^2e^{\beta x}$, so that $g(0) = 0, g'(0) = \alpha\beta$ and $g''(0) = \alpha\beta^2$.

Finally, plugging the approximations of $\mathbf{E}[X_iX_j]$ (Eq. \ref{eq:exp_xi_xj} and $\mu_{X_i}, \mu_{X_j}$ (Eq. \ref{eq:gap_mu_approx}) back into Eq. \ref{eq:gap_cov_mat} will give us an estimate of the $R\times R$ covariance matrix of different pixels in a particular filter.


We can now calculate the variance. This is the sum of the variances of $X_i$ scaled by a factor of $1/R$, with a correction factor based on the covariance matrix added in:
\begin{eqnarray}
\sigma^2_{X_S} &=& Var(X_S) \nonumber \\
&= & Var\left(\frac{1}{R}\sum^R_{i=1}X_i \right) + \sum^R_{\substack{i,j=1\\ i\neq j}}Cov_S(i,j) \nonumber \\
& = & \sum^R_{i=1}\frac{1}{R^2} Var(X_i) + \sum^R_{\substack{i,j=1\\ i\neq j}}Cov_S(i,j) \nonumber \\
& = &\frac{1}{R}\sigma^2_m + \sum^R_{\substack{i,j=1\\ i\neq j}}Cov_S(i,j)
\label{eq:gapsum_var}
\end{eqnarray}

\subsubsection{GAP Feature Distribution Approximation}
\label{sec:gap_pdf_approx}
We now find an approximation for the distribution of $X_S$. 
First, the zero values of $X_S$ will have a probability of $p^N$. 
We find that the Gamma distribution provides a good approximation when we are considering the sum of exponentiated Gamma distributions. In fact, the higher the resolution of the convolutional image, the better the approximation.
Thus, we will assume that the non-zero values of $X_S$ will follow a Gamma distribution. 

To obtain the shape and scale parameters of this Gamma distribution, denoted as $a_S$ and $s_S$ respectively,
we will use $\mu_{X_S}$ (Eq. \ref{eq:gapsum_mean}) and $\sigma^2_{X_S}$(Eq. \ref{eq:gapsum_var}).
We find that $\mu_{X_S} = a_Ss_S$ and $\sigma^2_{X_S} = a_Ss^2_S$. 
Algebraic rearrangement then yields:
\[
s_S = \frac{\sigma_{X_S}^2}{\mu_{X_S}},\;\;\textrm{and}\;\;a_S = \frac{\mu_{X_S}^2}{\sigma_{X_S}^2}
\]

%

Thus, each dimension of the GAP feature vector will have the following distribution:
\[
f_S(x) = \left\{
\begin{array}{cc}
     p^N &, \textrm{if}\;\; x = 0  \\
     \frac{1-p^N}{\Gamma(a_S)s_S^{a_S}}x^{a_S -1}\exp(-x/s_S)& , \textrm{if}\;\; x > 0 
\end{array}
\right.
\]
At this point, we note that $p^N$ will be extremely small (since $N$ tends to be tens to hundreds for typical convolutional image sizes), and we will take it to be 0, and $1-p^N$ will be taken to be 1. Consequently, we can ignore the $x=0$ branch of the above distribution, giving a simpler form of:
\begin{equation}
    f_S(x) = \frac{1}{\Gamma(a_S)s_S^{a_S}}x^{a_S-1}\exp(-x/s_S)
\label{eq:gap_pdf}
\end{equation}
\subsection{GAP Features Covariance Matrix}
In order to predict the variances of DNN layers after the GAP layer, we will require the covariance matrix of the GAP features. 
We approximate this covariance matrix using a 2nd order Taylor expansion.

Let the number of GAP features be $N$, giving an $N\times N$ GAP feature covariance matrix, and let the random variable of each GAP feature be denoted to be $Y_i$ where $i = 1,2,...,N$. Then, each GAP feature, $Y_i$ say, is the result of averaging $R$ number of convolutional pixel random variables $(X_{i,1},X_{i,2},...,X_{i,R})$ that have been transformed into $(g(X_{i,1}),g(X_{i,2}),...,g(X_{i,R}))$ using some function $g(x)$, so that:
\[
Y_i = \frac{1}{R}\sum^R_{k=1} g\left(X_{i,k}\right)
\]
The covariance between two GAP features, $Y_i$ and $Y_j$ is:
\begin{eqnarray}
Cov_G( i, j ) & = & \mathbb{E}[(Y_i - \mu_{Y_i})(Y_j - \mu_{Y_j})]\\ 
& = & \mathbb{E}\left[\left(\frac{1}{R}\sum^R_{k=1} g\left(X_{i,k}\right)- \mu_{Y_i}\right) \right.\times \nonumber \\
&& \left.\left(\frac{1}{R}\sum^R_{k=1} g\left(X_{j,k}\right)- \mu_{Y_j}\right) \right] \nonumber\\
&=& \frac{1}{R^2}\mathbb{E}\left[\left(\sum^R_{k=1}g(X_{i,k})\right) \left(\sum^R_{k=1}g(X_{j,k})\right)\right] - \nonumber\\
&&\mu_{Y_i}\mu_{Y_j} \nonumber\\
& = & \frac{1}{R^2}\mathbb{E}\left[\sum^R_{k=1}\sum^R_{l=1}g(X_{i,k})g(X_{j,l})\right] - \mu_{Y_i}\mu_{Y_j} \nonumber\\
& = & \frac{1}{R^2}\left(\sum^R_{k=1}\sum^R_{l=1}\mathbb{E}\left[g(X_{i,k})g(X_{j,l})\right]\right) - 
\mu_{Y_i}\mu_{Y_j} \nonumber
\label{eq:gap_cov}
\end{eqnarray}

We now need to find an accurate estimate of
$\mathbb{E}[g(X_{i,k})g(X_{j,l})]$. For this, we again perform a second order Taylor expansion on $g(x)$ around $x=0$, so that $g(x) \approx a + bx + cx^2$, where for conciseness, we have written the expansion coefficients as: $a = g(0), b = g'(0)$ and $c = g''(0)/2$. For more clarity, we will denote $U = X_{i,k}$ and $W = X_{j,l}$. This gives:
\begin{eqnarray*}
\mathbb{E}[g(U)g(W)] & \approx & \mathbb{E}\left[ (a + bU + cU^2)(a + bW + cW^2)
\right] \\
& = & \mathbb{E}\left[ a^2 + ab(U+W) + ac(U^2 + W^2) +\right. \\
&& \left. b^2UW + bc(U^2W + UW^2) + c^2U^2W^2 \right]\\
& = & a^2 + ab(\mathbb{E}[U]+\mathbb{E}[W]) +\\
&&ac(\mathbb{E}[U^2] + \mathbb{E}[W^2]) + \\
&& b^2\mathbb{E}[UW] + bc(\mathbb{E}[U^2W] + \mathbb{E}[UW^2]) + \\ && c^2\mathbb{E}[U^2W^2]
\end{eqnarray*}
From the last equation above, we can see that a second order approximation of the GAP feature covariances can be obtained through first and second moments, as well as cross-covariances of the random variables of last convolutional layer output pixels. As such, they can be directly computed from a training set of last convolutional layer responses. The cross-covariances of the form $\mathbb{E}[U^aW^b]$, with $a,b \in \{0,1,2\}$ can be computed in the similar manner as those used in the GAP feature variance prediction (Section \ref{sec:gap_feat_var_pred}).

\subsection{Deactivation Layer Distribution}
The non-linear exponential operator used to amplify the tensor values before GAP can result in very large values after GAP. To address this issue, the deactivation layer is usually used. 
We start by describing the deactivation layer output values distribution.
\begin{proposition}
Let $X$ be a random variable with distribution $f_S(x)$ in Eq. \ref{eq:gap_pdf}. Let $g(x) = x^\gamma$ with $\gamma \in \mathbb{R}$, 
then $g(X)$ follows the following distribution (proof in Appendix \ref{app:deact_pdf_proof}):
\[
w(x) = \frac{1}{\gamma\Gamma(a)s^a}x^{\frac{a-1}{\gamma}}\exp\left(-\frac{x^\gamma}{s}\right)
\]

\label{thm:deact_pdf}
\end{proposition}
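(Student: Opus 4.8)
The statement is precisely of the kind handled by the change-of-variables (transformation of a random variable) technique already invoked for Proposition~\ref{thm:exp_gamma}, and I would proceed in the same way. The deactivation map $g(x) = x^{\gamma}$ is, for $\gamma \in \mathbb{R}\setminus\{0\}$, a strictly monotone bijection of $(0,\infty)$ onto $(0,\infty)$ that is continuously differentiable with differentiable inverse $g^{-1}(y) = y^{1/\gamma}$. This is exactly the hypothesis required to apply the density transformation formula
\[ w(y) \;=\; f_S\!\bigl(g^{-1}(y)\bigr)\,\left| \tfrac{d}{dy}\, g^{-1}(y)\right| . \]

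The steps, in order, would be: (i) record the inverse $g^{-1}(y) = y^{1/\gamma}$ and its derivative $\tfrac{d}{dy} g^{-1}(y) = \tfrac{1}{\gamma}\, y^{1/\gamma - 1}$; (ii) substitute $g^{-1}(y)$ into the Gamma density $f_S$ of Eq.~\ref{eq:gap_pdf}, replacing the argument throughout by $y^{1/\gamma}$; (iii) multiply by the Jacobian factor $\tfrac{1}{|\gamma|}\, y^{1/\gamma - 1}$; and (iv) collect the powers of $y$ contributed by the $x^{a-1}$ term and by the Jacobian, and rewrite the exponent, to reach the closed form $w(x)$ in the statement, finally relabelling $y$ as $x$. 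The support of $w$ is $(0,\infty)$, matching the fact that $g$ maps $(0,\infty)$ onto itself.

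There is no genuinely hard step here; the content is bookkeeping. The only place that needs care is the algebra of exponents when the power of $y$ from $x^{a-1}$ is combined with the $y^{1/\gamma - 1}$ coming from the Jacobian, together with correctly carrying the constant $\gamma$ (and, being scrupulous about signs, the absolute value for $\gamma<0$, though in the deactivation setting $0<\gamma<1$). As a sanity check I would confirm that $w$ integrates to $1$ on $(0,\infty)$: the substitution that undoes $g$ converts $\int_0^{\infty} w(x)\,dx$ back into the normalization integral of the original Gamma density, which equals $1$ by the definition of $\Gamma(\cdot)$. I would also remark that, since $g(0)=0$, if one retains the discrete atom of mass $p^N$ at $0$ in $f_S$ instead of discarding it, that atom is carried to $0$ under $g$, while the continuous part transforms exactly as above.
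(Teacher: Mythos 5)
Your method is exactly the paper's: Appendix \ref{app:deact_pdf_proof} applies the same change-of-variables formula $w(y)=f_S\bigl(g^{-1}(y)\bigr)\,\bigl|\tfrac{d}{dy}g^{-1}(y)\bigr|$ that was used for Proposition \ref{thm:exp_gamma}, so the plan is sound. Two cautions, however. First, if you actually execute your step (iv), the factor $x^{a-1}$ in $f_S$ contributes $y^{(a-1)/\gamma}$ and the Jacobian contributes $y^{1/\gamma-1}$, so their product is $y^{a/\gamma-1}=y^{(a-\gamma)/\gamma}$, and the exponential becomes $\exp(-y^{1/\gamma}/s)$; the bookkeeping therefore lands on $w(x)=\tfrac{1}{\gamma\Gamma(a)s^a}\,x^{(a-\gamma)/\gamma}\exp(-x^{1/\gamma}/s)$, which is \emph{not} the displayed formula $x^{(a-1)/\gamma}\exp(-x^{\gamma}/s)$ (the two agree only for $\gamma=1$). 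The printed statement contains typos --- the paper's own Appendix \ref{app:deact_mean_var} integrates the density $x^{(a-\gamma)/\gamma}\exp(-x^{1/\gamma}/s)$ when computing $\mu_w$ --- so you should not use ``reaching the closed form in the statement'' as your success criterion; your normalization sanity check is the right one and it validates the corrected form. Second, be aware that the appendix actually proves a slightly different proposition, for the shifted map $g(x)=(x+\epsilon)^{\gamma}$ with $\epsilon>0$, which forces a binomial expansion of $(x^{1/\gamma}-\epsilon)^{a-1}$ and yields an infinite-series density; your unshifted treatment matches the main-text statement, is cleaner, and recovers the appendix's expression in the limit $\epsilon\to 0$, so the difference is presentational rather than substantive.
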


The mean ($\mu_w$) and variance $\sigma_w^2$ of the deactivation layer are as follows (derivations in Appendix \ref{app:deact_mean_var}):
\begin{eqnarray}
    \mu_w & = & \frac{s^\gamma}{\Gamma(a)}\Gamma( \gamma + a ) \\
\label{eq:deact_mu}
    \sigma^2_w & = & \frac{s^{2\gamma}}{\Gamma(a)}\left( \Gamma(a + 2\gamma) - \frac{\Gamma(a+\gamma)^2}{\Gamma(a)}\right)
\label{eq:deact_sigma}
\end{eqnarray}

\subsubsection{Deactivation Feature Covariance}
We find that determining the variances of outputs in layers after the deactivation layer will require the deactivation feature covariances, especially in the case of a fully connected layer.
For this, we give an approximation of the covariance matrix for the $N$ deactivated GAP features. Let $X,Y$ denote two random variables of two GAP features. In the case where our deactivation function is $g(x) = x^\gamma$, we are seeking an approximation of $Cov( X^\gamma, Y^\gamma )$, which can be written as:
\begin{eqnarray}
Cov_D(X^\gamma, Y^\gamma) & = & \mathbb{E}[X^\gamma Y^\gamma] - \mathbb{E}[X^\gamma]\mathbb{E}[Y^\gamma]
\label{eq:deact_cov}
\end{eqnarray}
We have found that in this case, a first order Taylor approximation sufficed, with the advantage of only requiring expectations that can be directly computed from training examples from the last convolutional layer outputs. Thus, taking the expansion of $g(x) = (x + \epsilon)^\gamma$ at $x = \epsilon$ gives us:
$g(x) \approx g(\epsilon) + \gamma(2\epsilon)^{\gamma - 1}(x-\epsilon)$, and for brevity, we will write $g(x) \approx c + dx$, with constants $d = \gamma(2\epsilon)^{\gamma - 1}, c = g(\epsilon)=(2 - d)\epsilon, $. 
The first order approximation for the mean of the deactivated random variable $X$ is:
\[
\mathbb{E}[X^\gamma] \approx c + d\mathbb{E}[X]
\]
where $\mathbb{E}[X]$ is the mean of the GAP feature associated with random variable $X$. In a similar manner, we can also obtain the approximation for $\mathbb{E}[Y]$.
Next, we can approximate the joint expectation $\mathbb{E}[X^\gamma Y^\gamma]$ as follows:
\begin{eqnarray*}
\mathbb{E}[X^\gamma Y^\gamma] & = & \mathbb{E}[(c + dX)(c + dY)] \\
& = & c^2 + cd( \mathbb{E}[X] + \mathbb{E}[Y] ) + d^2\mathbb{E}[XY]
\end{eqnarray*}
where $\mathbb{E}[XY]$ can be calculated using the covariance matrix from the GAP variables given in Eq. \ref{eq:gap_cov}, where suppose $X$ and $Y$ corresponds to the $i^{th}$ and $j^{th}$ GAP features respectively:
\[
\mathbb{E}[XY] = Cov_G( i, j ) - \mu_X\mu_Y
\]
and where $\mu_X, \mu_Y$ are the mean of the GAP feature random variables $X$ and $Y$ respectively.

\section{Application: Predicting Classification KL-Divergence of DNNs}
\label{sec:class_kl}
In this section, we detail the distribution of a DNN output node, and in the process mathematically link it with the aggregation layer distributions. This gives us the important ability to analytically predict the KL divergence of the DNN output nodes from the last convolutional layer outputs.  

\subsection{Output Layer PDF Approximation as Normal Distribution}
We first determine the distribution of values from a DNN output layer.
Here, we assume that the GAP layer is immediately followed by a fully connected (FC) classification layer. We will consider the distribution of this FC layer at {\em pre-activation} (i.e. before the softmax or sigmoid activation).

The use of an FC layer after the GAP layer means that the elemennts of the GAP feature vector will be linearly combined together. Let us now represent each element of the GAP feature as a random variable following a distribution of the form $f_S(x)$ (Eq. \ref{eq:gap_pdf}). Then, the R.V. of each output layer node can is a linear combination of GAP random variables.

We typically find that the number of GAP features ($F$, i.e. the number of filters in the last convolutional layer) is typically high, ranging from hundreds to thousands. Thus, we can use the Central Limit theorem to approximate the distribution of a DNN output node as a Normal distribution. This is extremely convenient since we only need to know the mean, variance and covariance matrix of the GAP elements.


Suppose we have $N$ deactivated GAP features, each represented by random variables $G_i, i = 1,2,...,N$, with their respective mean and variances denoted as $\mu_{G,i}$ (Eq. \ref{eq:deact_mu}) and $\sigma^2_{G,i}$ (Eq. \ref{eq:deact_sigma}). 

Suppose the fully connected classification layer has the following weights on these $N$ deactivated GAP features: $\{\alpha_i\}_{i=1}^N$. Then, let $O$ denote the random variable of the output layer before sigmoid activation, where:
$O = \sum^N_{i=1} \alpha_iG_i$.

Then, by the linearity of the expectation operator, the mean of $O$ is:
\begin{equation}
\mu_O = \sum^N_{i=1} \alpha_i\mu_{G,i}
\label{eq:out_mu}
\end{equation}

Similarly, it can be shown that the variance is:
\begin{equation}
\sigma^2_O = \sum^N_{i=1} \alpha^2_i \sigma_{G,i}^2 + 2\sum_{1\leq i < j \leq N} \alpha_i\alpha_jCov_D(i,j)
\label{eq:out_sigma}
\end{equation}
where $Cov_D(i,j)$ is the covariance of the $i^{th}$ and $j^{th}$ deactivated GAP features given in Eq. \ref{eq:deact_cov}.
Thus, by the central limit theorem, the distribution of $O$ can be approximated by the normal distribution:
\[
f(x) = \frac{1}{\sqrt{2\sigma_O}}\exp\left(
- \frac{(x - \mu_O)^2}{2\sigma_O^2}
\right)
\]

\subsection{KL-Divergence of Positive and Negative Class Output Values}
Using the normal distribution approximation from the previous section, we can now directly find the KL divergence between the positive and negative classes for the pre-sigmoid outputs of a DNN. 
In general, suppose we are given two normal distributions, say $p_-(x)$ and $p_+(x)$, with mean and standard deviation from Eq. \ref{eq:out_mu} and \ref{eq:out_sigma}, $(\mu_-,\sigma_-)$ and $(\mu_+,\sigma_+)$ respectively, their KL divergence has a particularly simple form:
\begin{eqnarray}
    K( p_+, p_- ) &=& \log( \sigma_- ) - \log( \sigma_+ ) + \nonumber \\ 
    &&\frac{\sigma_+^2 + (\mu_+ - \mu_-)^2}{2\sigma_-} - \frac{1}{2}
\label{eq:kl_div_func}
\end{eqnarray}


\section{Experimental Results and Analysis}
\label{sec:exp}
In this section, we describe experiments carried out that were aimed at validating the correctness of our proposed theoretical description of the output value distributions for the different layers in the aggregation block. This was done on two datasets: CIFAR10 and Flowers. 
They were chosen to show that our proposed distributions is valid across a range of different problems and dataset sizes. We first give details of the datasets, DNN models and training parameters used in Section \ref{sec:exp_dataset}. Following this, we use the trained DNN and its last convolutional layer responses from the dataset images to predict the distributions of the different aggregation layers. This allows us to predict the KL-divergence values of each classification output node, as described in Section \ref{sec:exp_kl_pred}. Following this, we further analyse how accurate the predicted distributions are in Section \ref{sec:exp_agg_layer_stats}. Our formulation also allows us to see how the correlation between deep features affects the KL divergence of a DNN in Section \ref{sec:exp_cov_deep_feat}.

\subsection{Datasets and DNN Training}
\label{sec:exp_dataset}

\subsection*{CIFAR10 Dataset}
The CIFAR10 dataset\cite{cifar} consists of 60000 small colour images of resolution 32x32 pixels. There are 10 classes, with 6000 images per class. The DNN used consisted of four convolutional blocks followed by a non-linear aggregation block and finally a fully connected classification layer (10 outputs) with softmax output. The convolutional block consisted of convolution->maxpool layers, with batch normalisation used at the end of each block. The number of filters used in the four convolutional blocks are  32,32,64 and 64 (first block to last block). The exponential activation layer was used and initialised with parameters $\alpha=1, \beta=0.01$ and power deactivation layer initialised with $\gamma = 0.8$.

The Adam algorithm with categorical loss was first used to warm up the convolutional layers and classification layer of the DNN. This was achieved  by temporarily connecting the last convolutional layer directly to the final fully connected classification layer. 
In the warmup stage, 40 epochs was used. Following this, the above temporary connection was removed. It was found that a further 5 epochs using Stochastic Gradient Descent (SGD) with learning rate of 0.0001 was sufficient to finetune the activation block parameters in an end-to-end manner.

\subsection*{Flowers dataset}
The Flowers dataset contains 3700 images belonging to five classes. We divide the dataset into 80\%-20\% training and validation split. 

The Activation CNN consists of baseline ImageNet trained EfficientNetB0 network \cite{EFFNET}. We proceed by removing the last pooling layer, prediction layer and loss layer from EfficientNetB0. The last convolutional layer output is passed to the Exponential based activation layer, followed by Global Average Pooling (GAP) layer, de-activation layer and fully-connected (FC) layer. 
The Activation CNN is fine-tuned on flowers dataset using cross-entropy loss. The images are resized to 224 $\times$ 224 pixels before passing through the network.  Optimization is performed by the Stochastic Gradient Descent (SGD) algorithm with momentum $0.9$, learning rate of $10^{-2}$ and weight decay of $5\times 10^{-4}$. The exponential activation layer is initialised with parameters $\alpha=8, \beta=0.1$ and and power deactivation layer initialised with $\gamma = 0.5$.



\subsection{Prediction of DNN KL-Divergence}
\label{sec:exp_kl_pred}
We have carried out experiments to predict the KL-divergence of DNNs. The following experimental setup was used: 
For each dataset, all of which are multi-class problems, we cycle through each class and set it as the ``positive'' class and group the remaining classes as ``negative''. The outputs of the last convolutional layer when the DNN is presented with the positive class is recorded and the mixed zero-Gamma distribution fitted. This allows us to predict the output value distributions for the activation, GAP, deactivation layers as well as the Gaussian distribution of the positive class corresponding output node. The same is also done for the negative class examples. Following this, the {\em predicted} KL divergence between the DNN output for the positive and negative classes was calculated. 

For comparison purposes, we also computed the {\em observed} KL divergence. This was achieved by recording the DNN output node corresponding to the positive class when positive examples are presented. A Gaussian was then be fitted to these observations. Similarly, another Gaussian was fitted to the output values when negative examples are presented to the DNN. Finally the observed KL divergence was calculated using these observation-based Gaussians.

We can see the predicted and observed Gaussians for the classification layer output nodes in Figure \ref{fig:cifar_out_gauss} for the CIFAR10 dataset and Figure \ref{fig:flowers_gauss} for the Flowers dataset. We can see that for the CIFAR10 dataset, the predicted Gaussians shown in solid lines is almost exactly the same as the observed Gaussians, shown in broken lines of the same colour. Additionally, the histogram obtained from corresponding observed data also shows a close match to the Gaussian distribution. This was the case across all 10 classes of the CIFAR10 dataset. We find similar results for the Flowers dataset. However, we find here there is a slight deviation between the predicted Gaussian variances compared with the corresponding observed Gaussians. 

We can then compare how well our predicted KL divergence matches to the observed KL divergence by means of a scatter plot, as shown in Figure \ref{fig:cifar_flower_kl_div}. Here the x-axis represents the observed KL-divergence values and the y-axis represents the predicted KL-divergenvce values. We can see in Figure \ref{fig:cifar_flower_kl_div} a), for the CIFAR10 dataset, there is good agreement between the predicted and observed KL divergence across all the different positive classes. In the case of the Flowers dataset, in general there is an over prediction of the KL divergence. However, larger observed KL-divergence values correlate well with larger predicted KL divergence values. The reason for this overestimation of the predicted KL divergence values is due to the slight underestimation of the Gaussian variances, as will be detailed in the next section.

\begin{figure}
    \centering
    \begin{tabular}{cc}
    \includegraphics[width=0.5\linewidth]{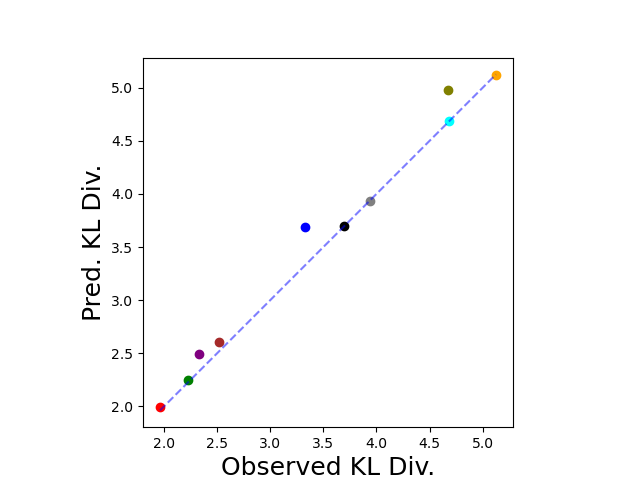} &
    \includegraphics[width=0.5\linewidth]{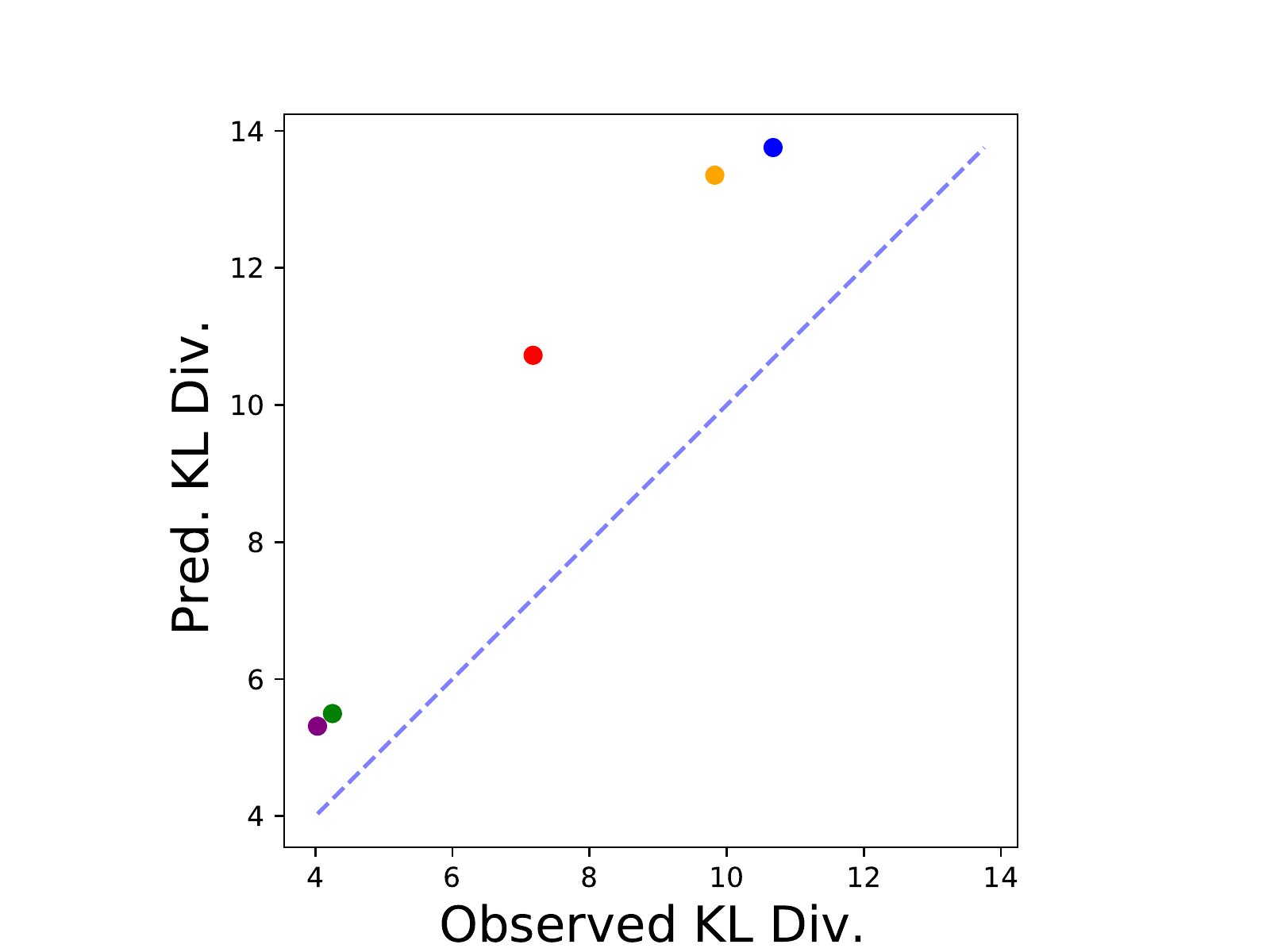}
    \\
    (a) CIFAR10 & (b) Flowers 
    \end{tabular}
    \caption{The KL-Divergence of the positive and negative distributions from the output values of output nodes for different classes. The different colours represent the different classes set as the positive class. The x-axis and y-axis reprsent the observed and predicted KL divergences respectively. Each point of the scatter plot shows the (observed KL, predicted KL) pair. The diagonal line shows how close our predicted KL divergence is to the observed KL. (a) shows the results for the CIFAR10 dataset and (b) for the Flowers dataset.}
    \label{fig:cifar_flower_kl_div}
\end{figure}

\begin{figure*}
\begin{tabular}{ccccc}
     \includegraphics[width=0.18\linewidth]{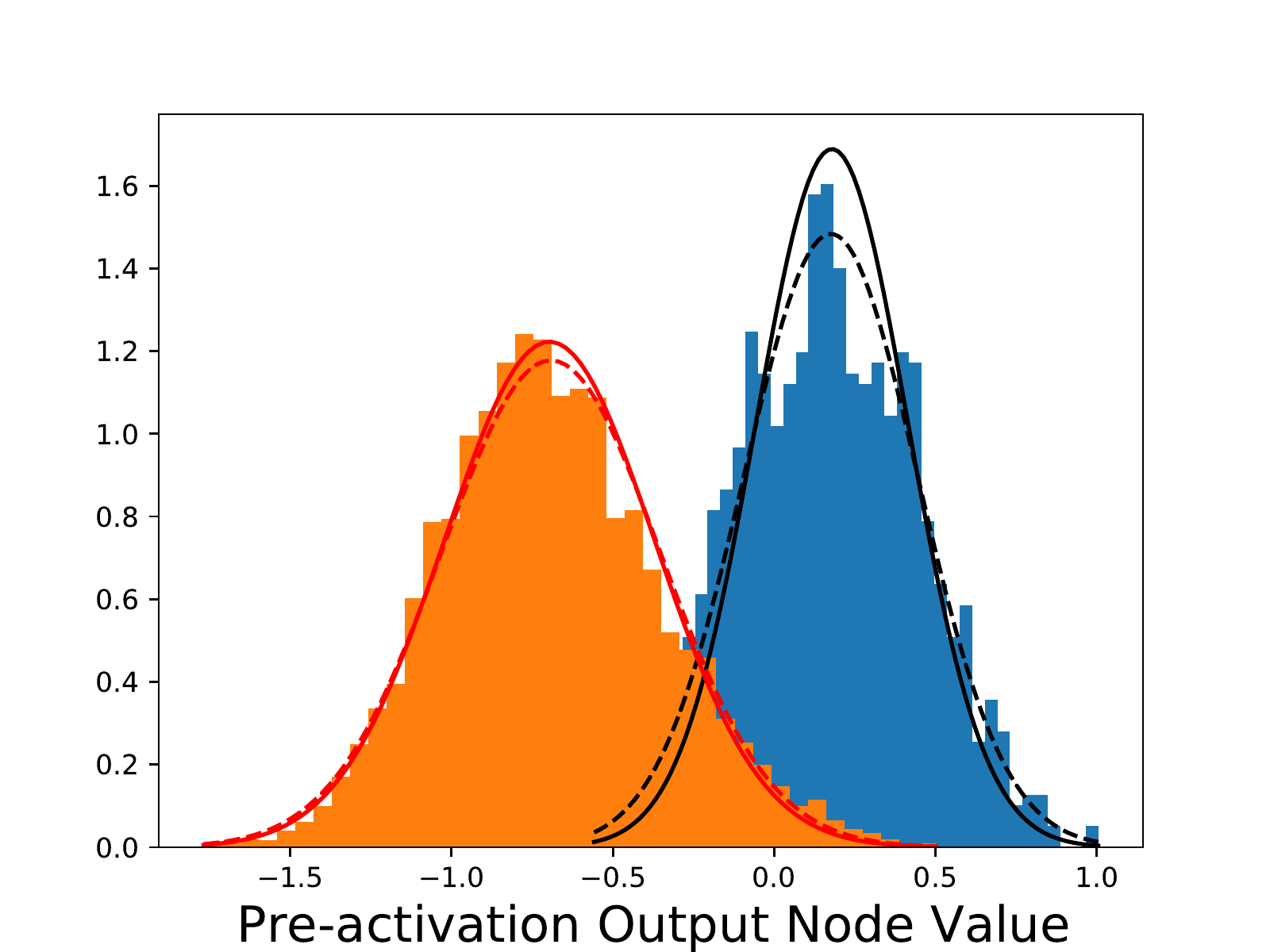} &
     \includegraphics[width=0.18\linewidth]{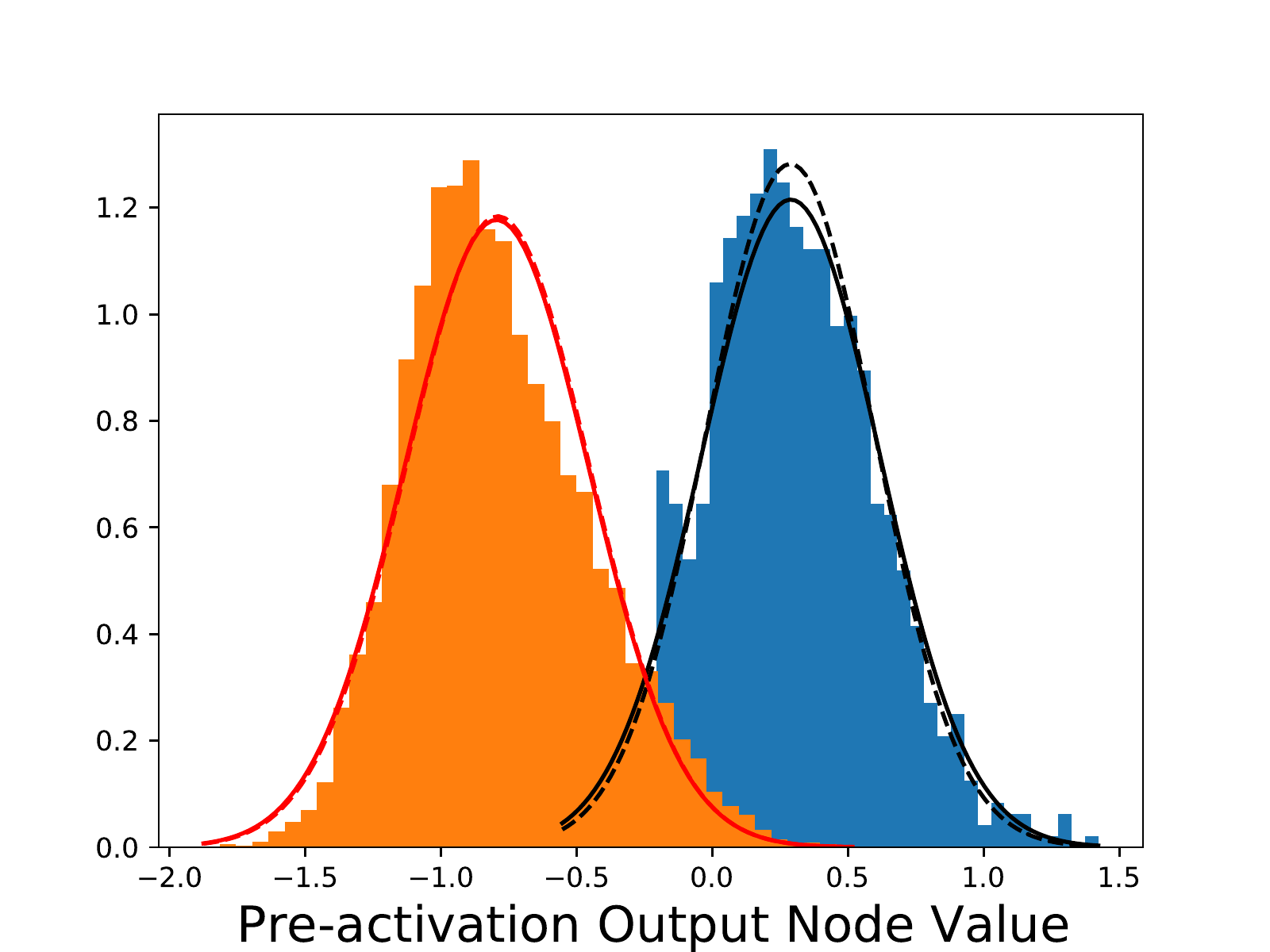}&
     \includegraphics[width=0.18\linewidth]{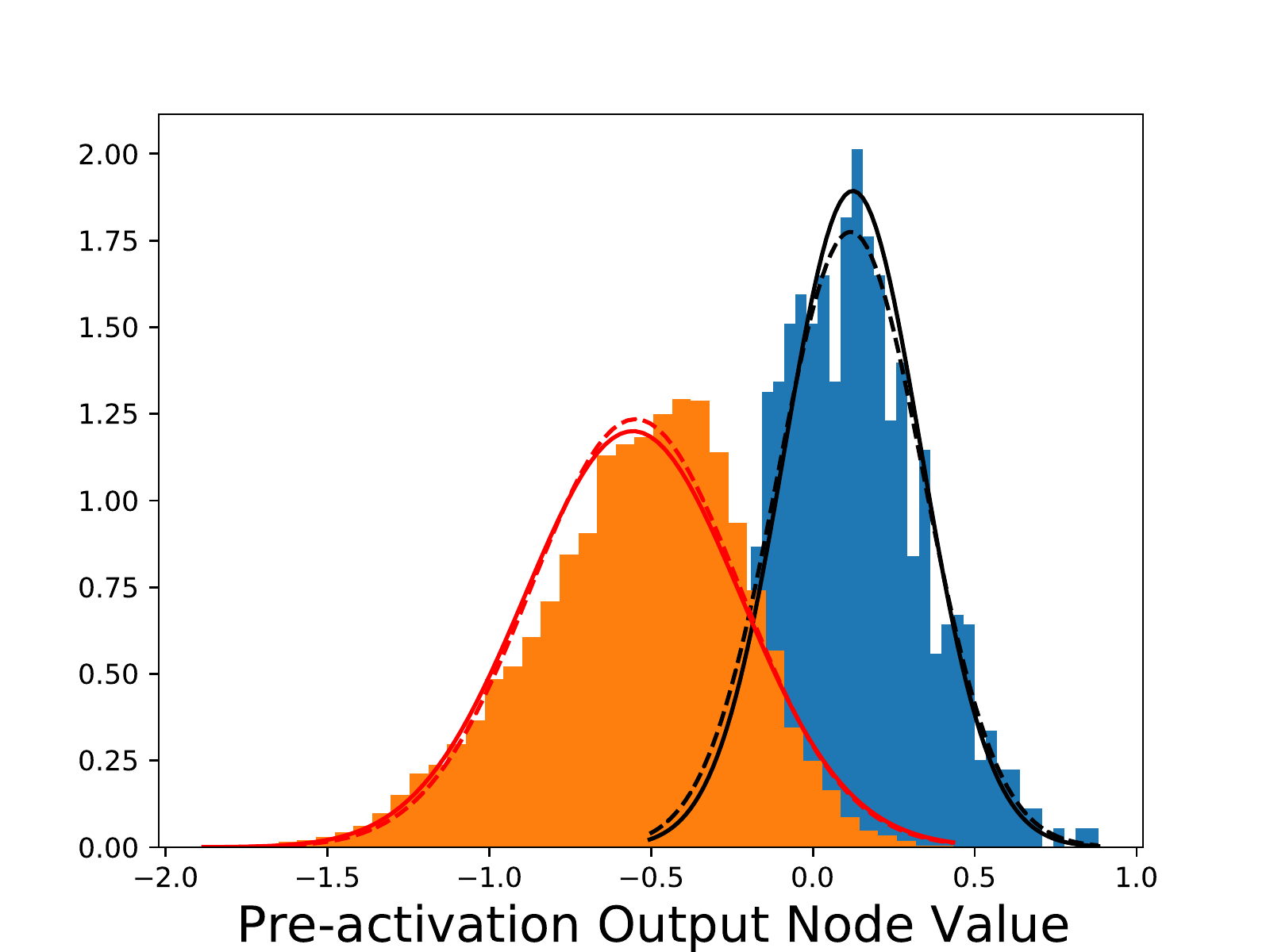}&
     \includegraphics[width=0.18\linewidth]{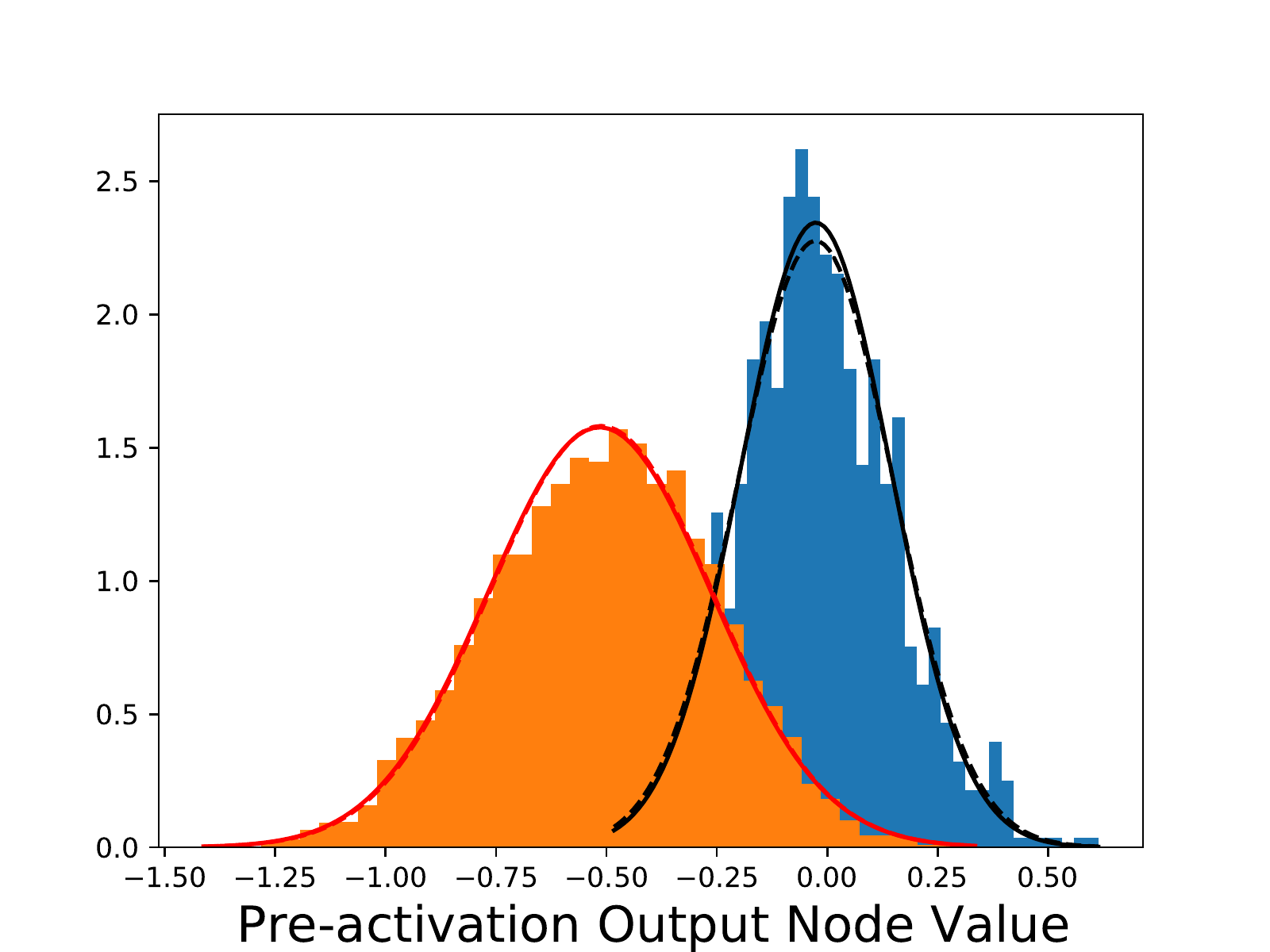}&
     \includegraphics[width=0.18\linewidth]{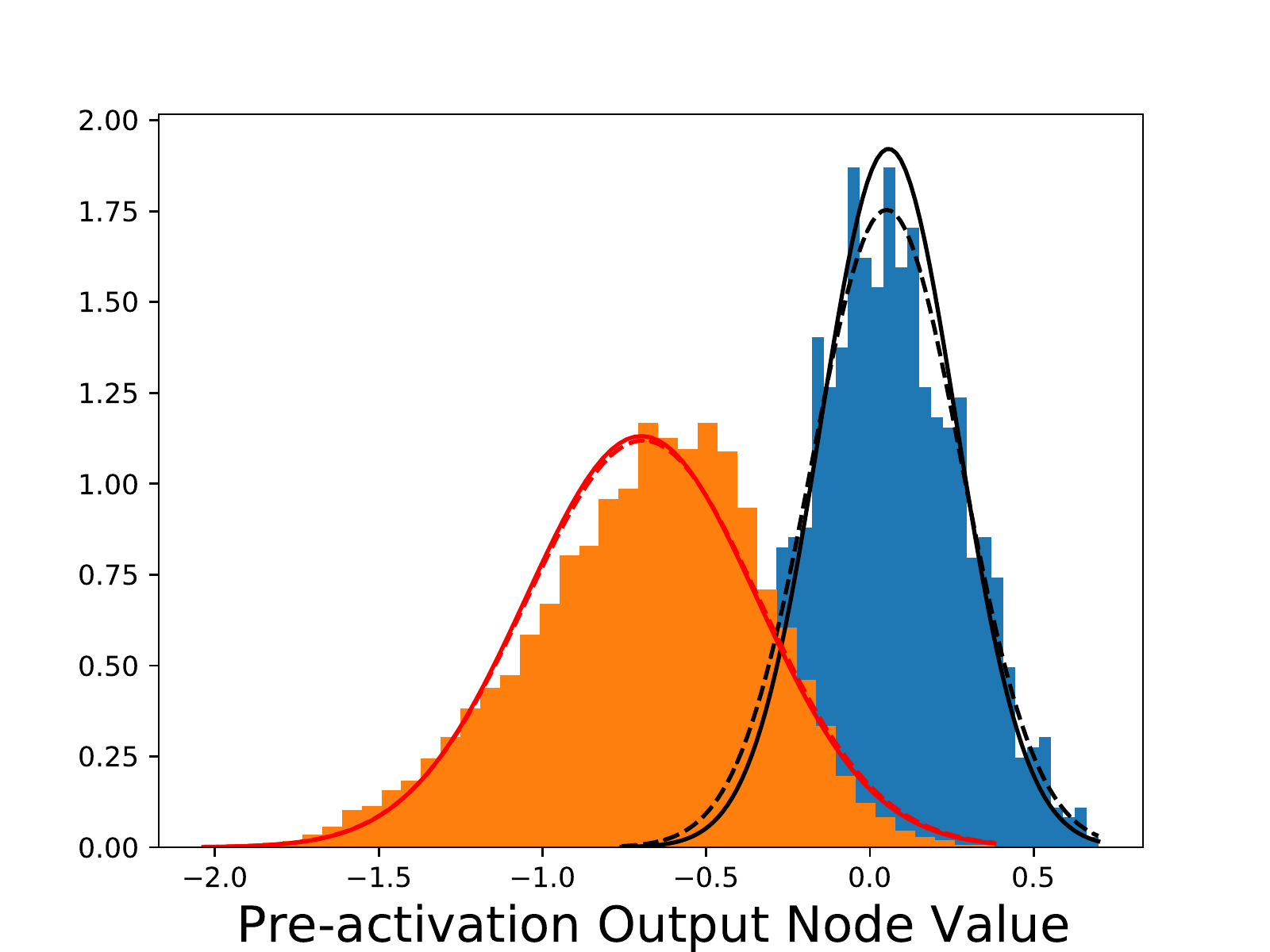}\\
     Cls. 1 & Cls. 2 & Cls. 3 & Cls. 4 & Cls. 5 \\
    \includegraphics[width=0.18\linewidth]{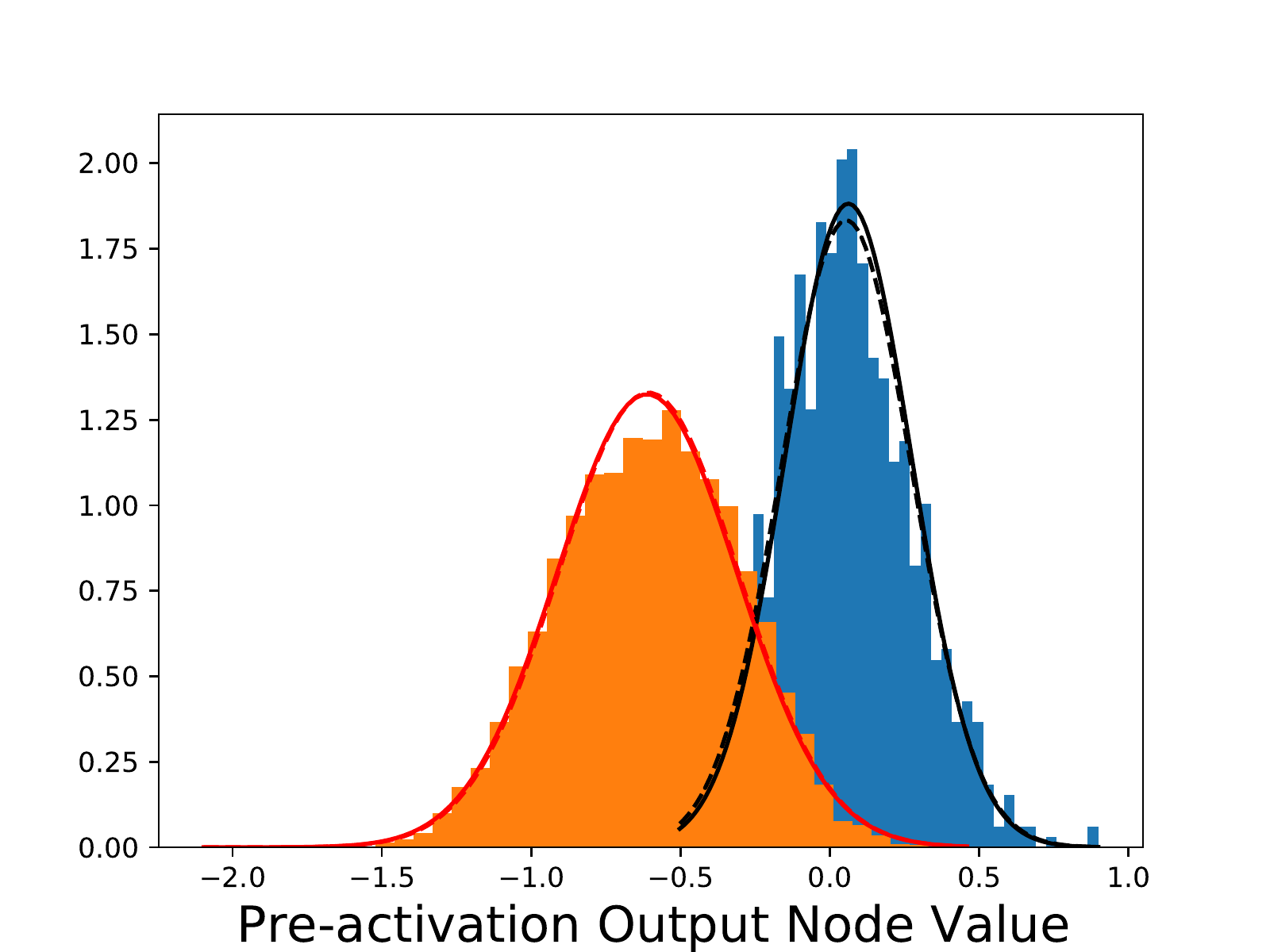} &
     \includegraphics[width=0.18\linewidth]{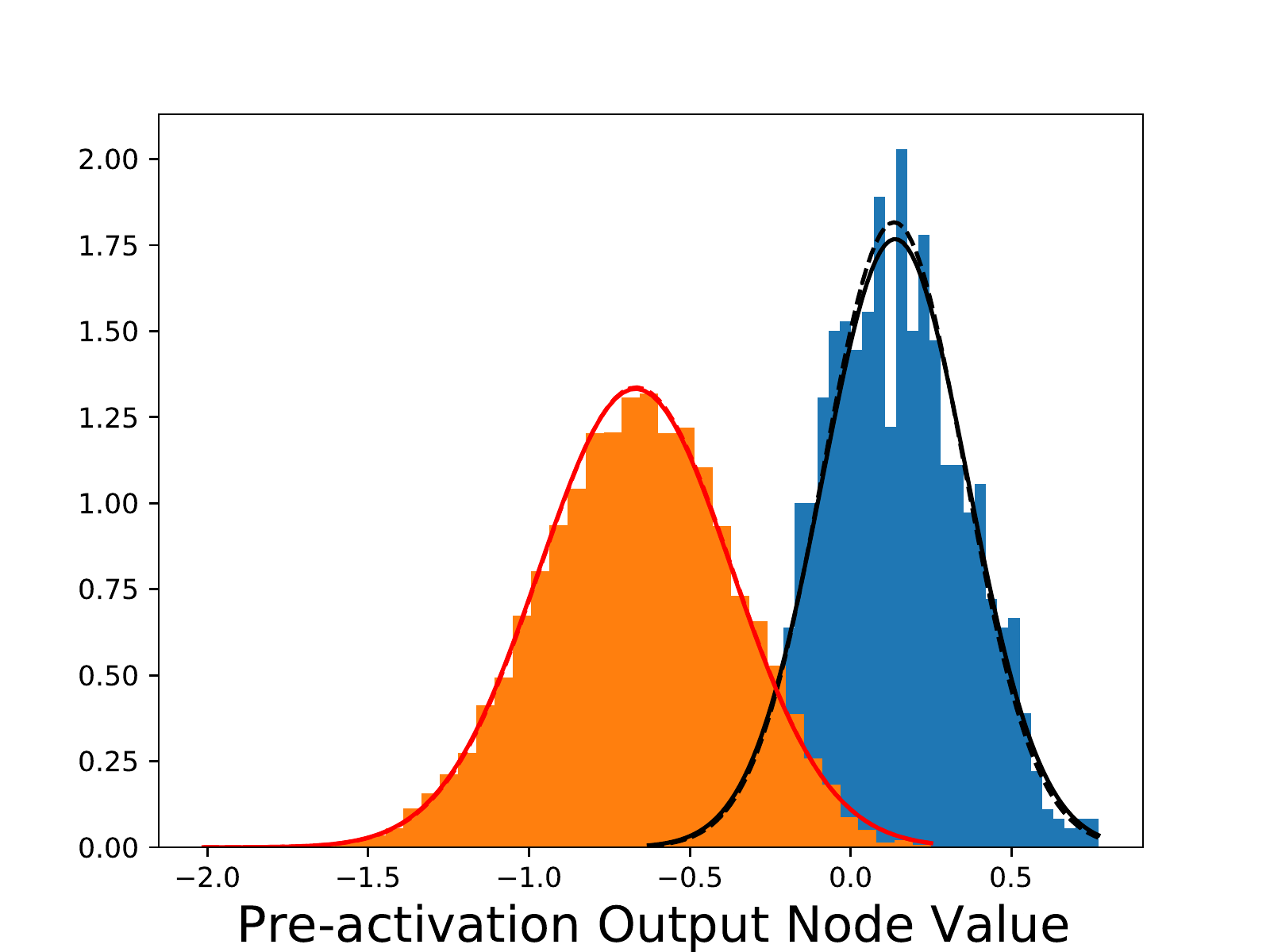}&
     \includegraphics[width=0.18\linewidth]{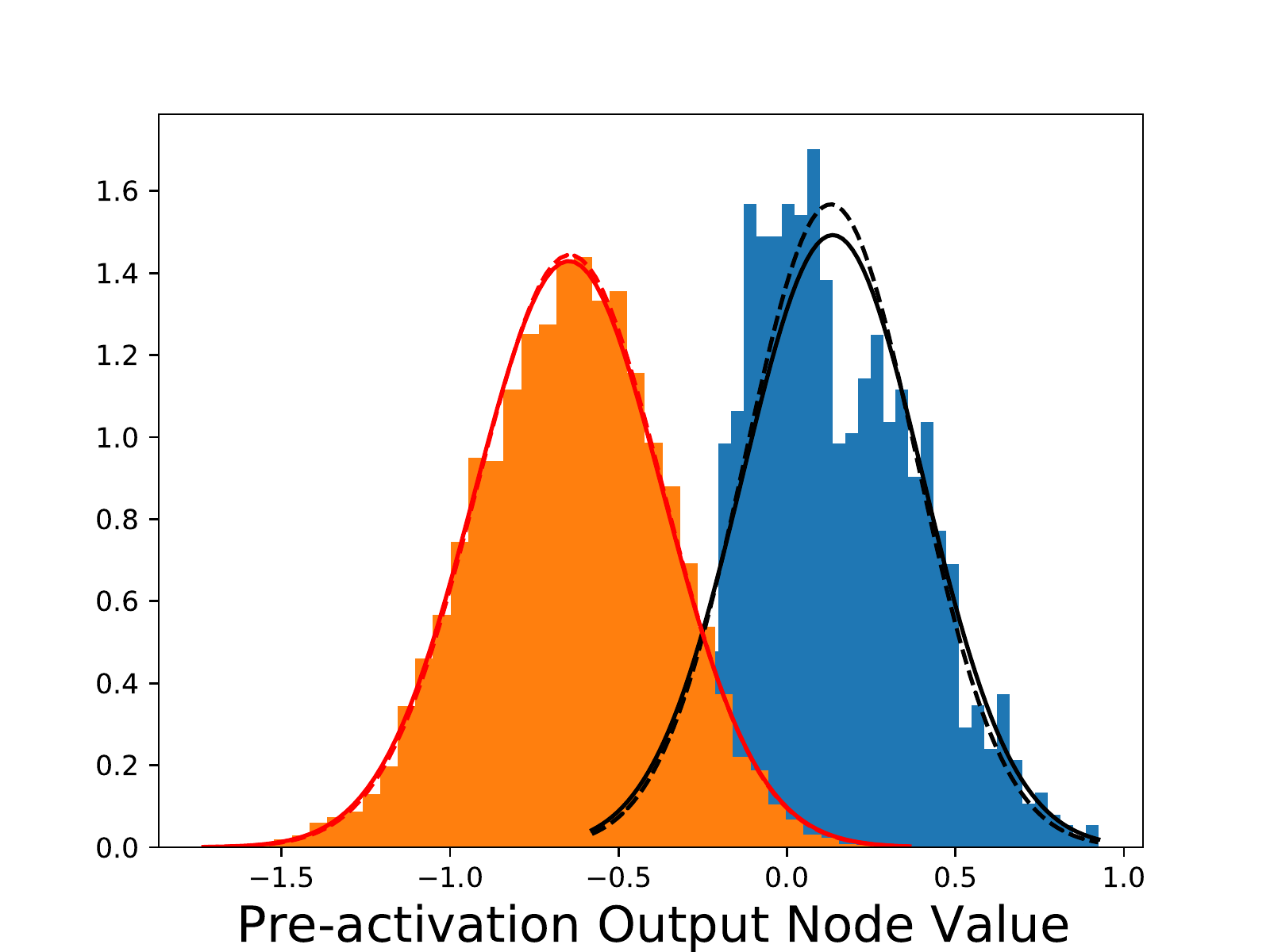}&
     \includegraphics[width=0.18\linewidth]{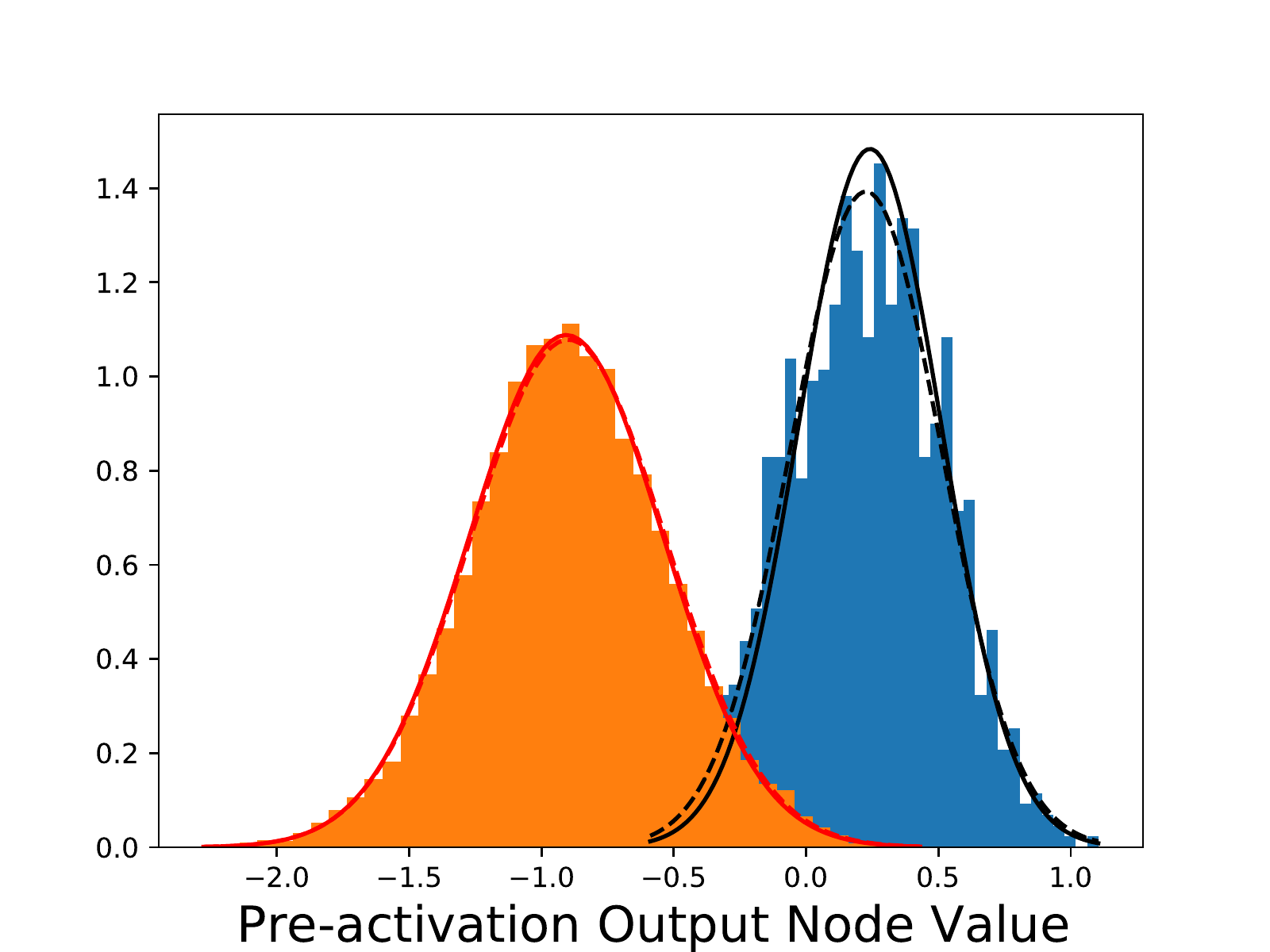}&
     \includegraphics[width=0.18\linewidth]{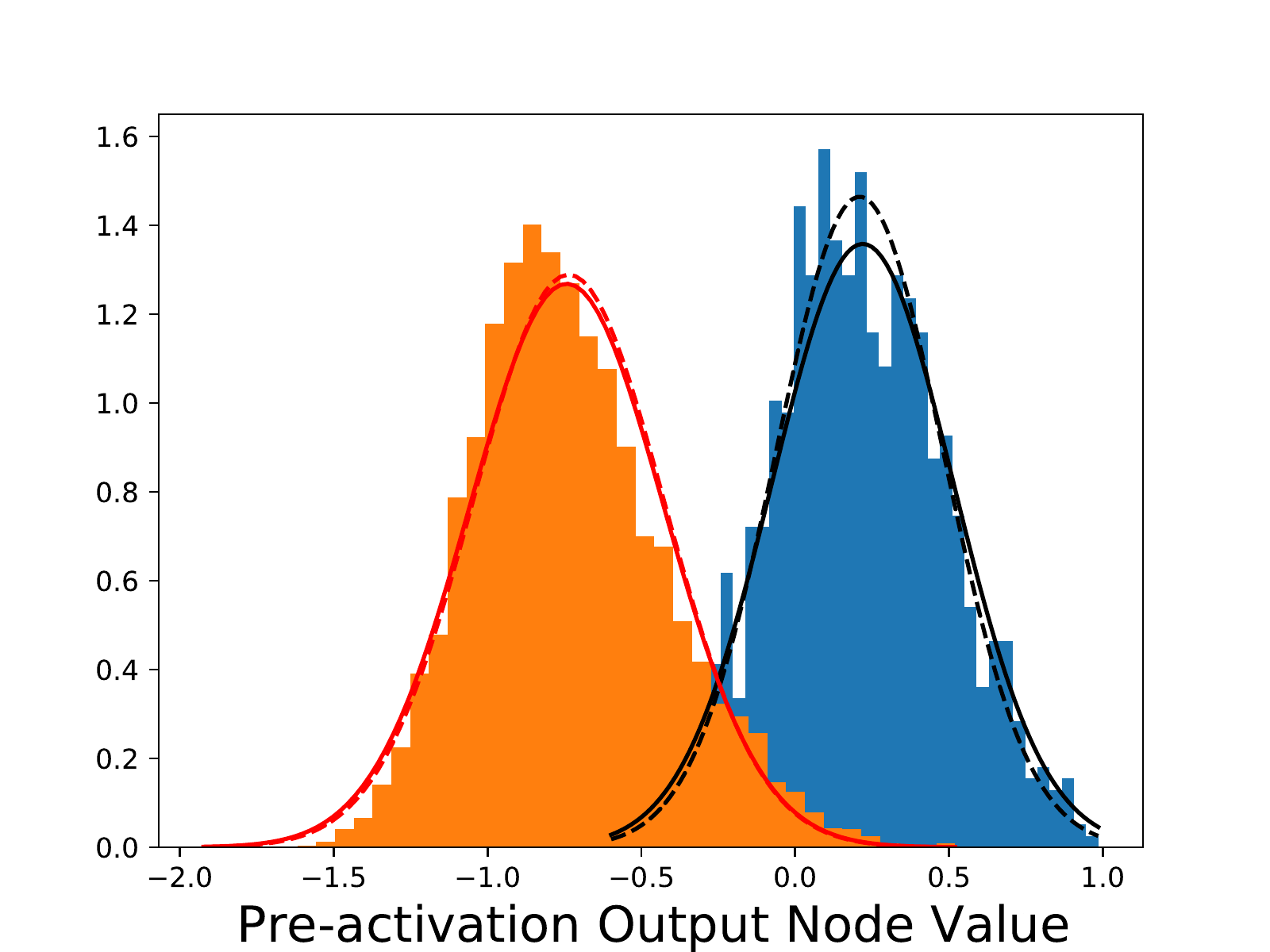} \\
     Cls. 6 & Cls. 7 & Cls. 8 & Cls. 9 & Cls. 10 \\
\end{tabular}
\caption{CIFAR10: Classification layer node (pre-softmax activation) distributions. The predicted distribution is shown as a solid curve and is shown against the histogram obtained by direct observation of the corresponding output node values. The Gaussian fitted to the observed output values is shown in dotted lines, whilst the solid line is the predicted Gaussian based solely on last convolutional layer values. The red curve shows the distribution for the responses from the ``negative examples'' (i.e. examples with labels that do not match the output node class) and black curve (blue histogram) for ``positive examples'' (images with label corresponding to output node class).}
\label{fig:cifar_out_gauss}
\end{figure*}

\begin{figure*}
    \centering
    \begin{tabular}{ccccc}
    \includegraphics[width=0.18\linewidth]{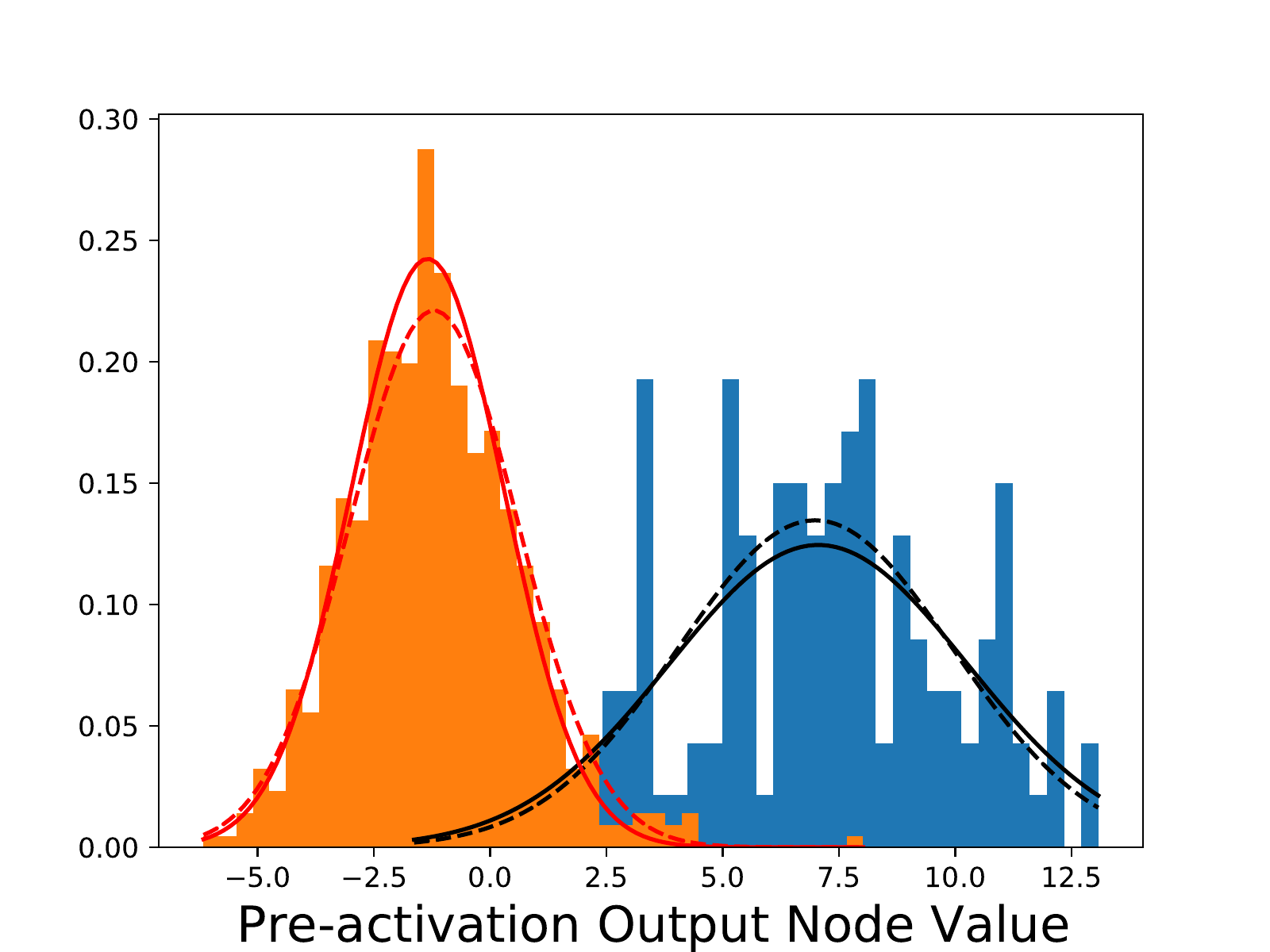}     &  
    \includegraphics[width=0.18\linewidth]{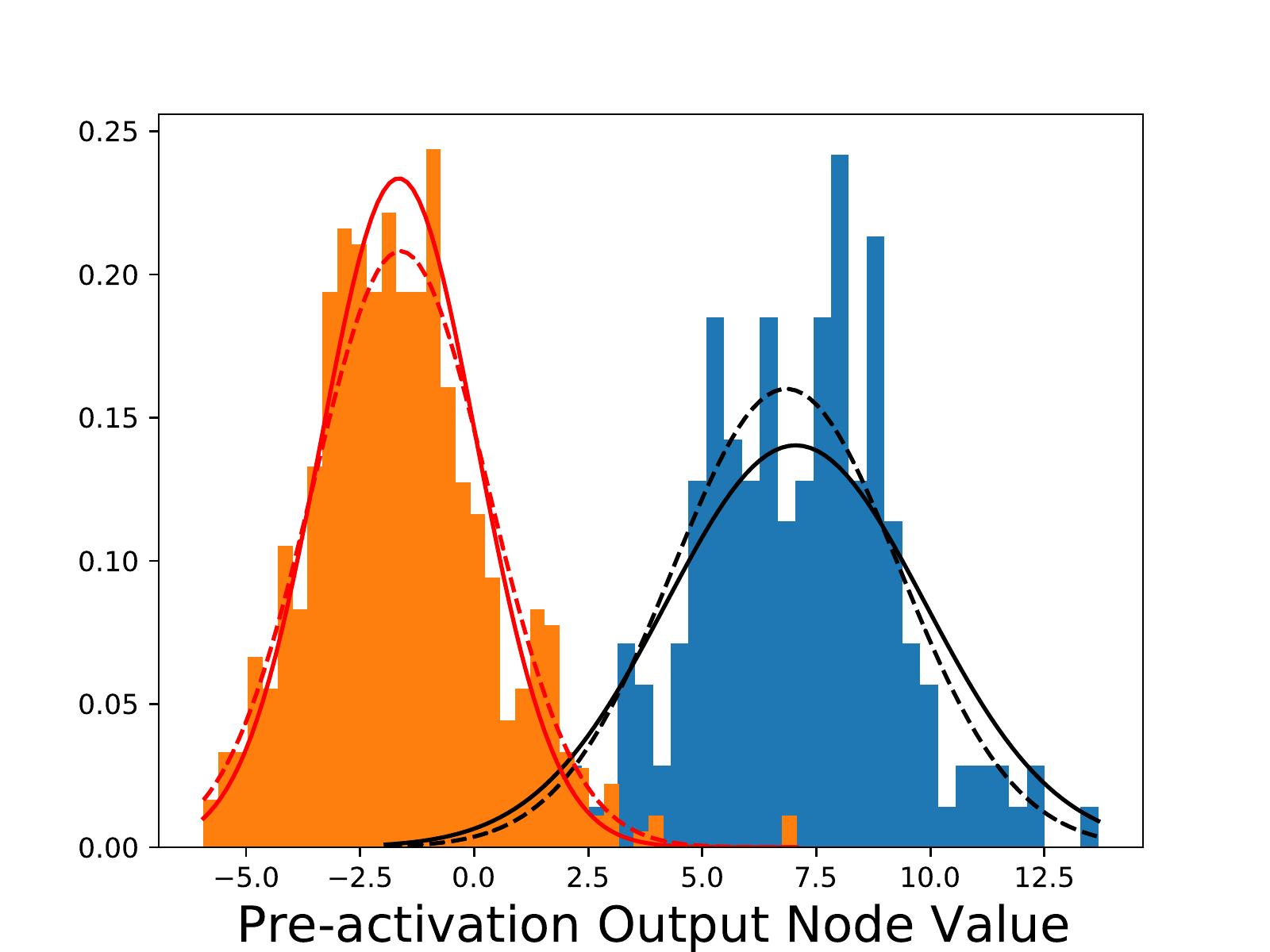}     &
    \includegraphics[width=0.18\linewidth]{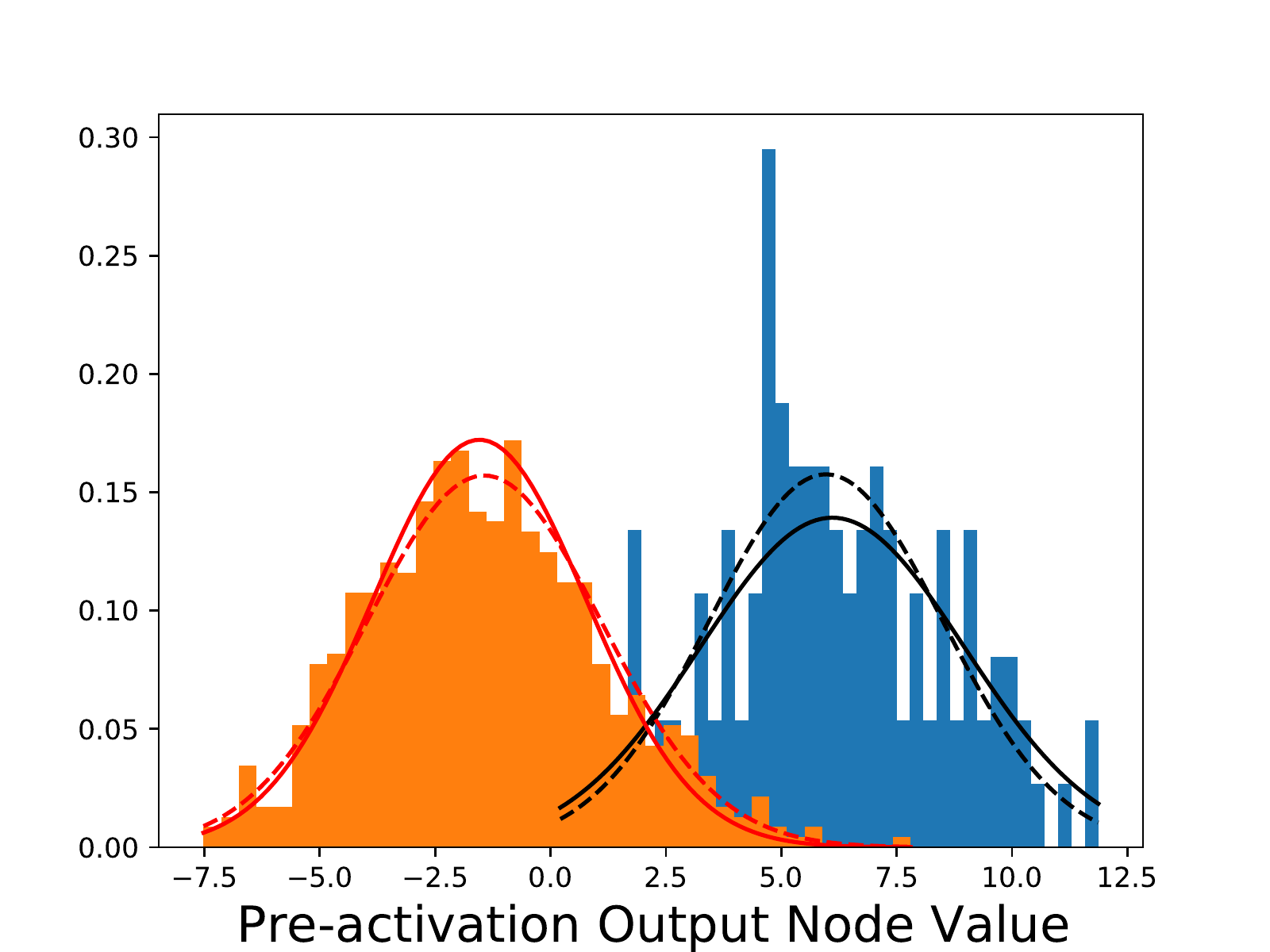}     &
    \includegraphics[width=0.18\linewidth]{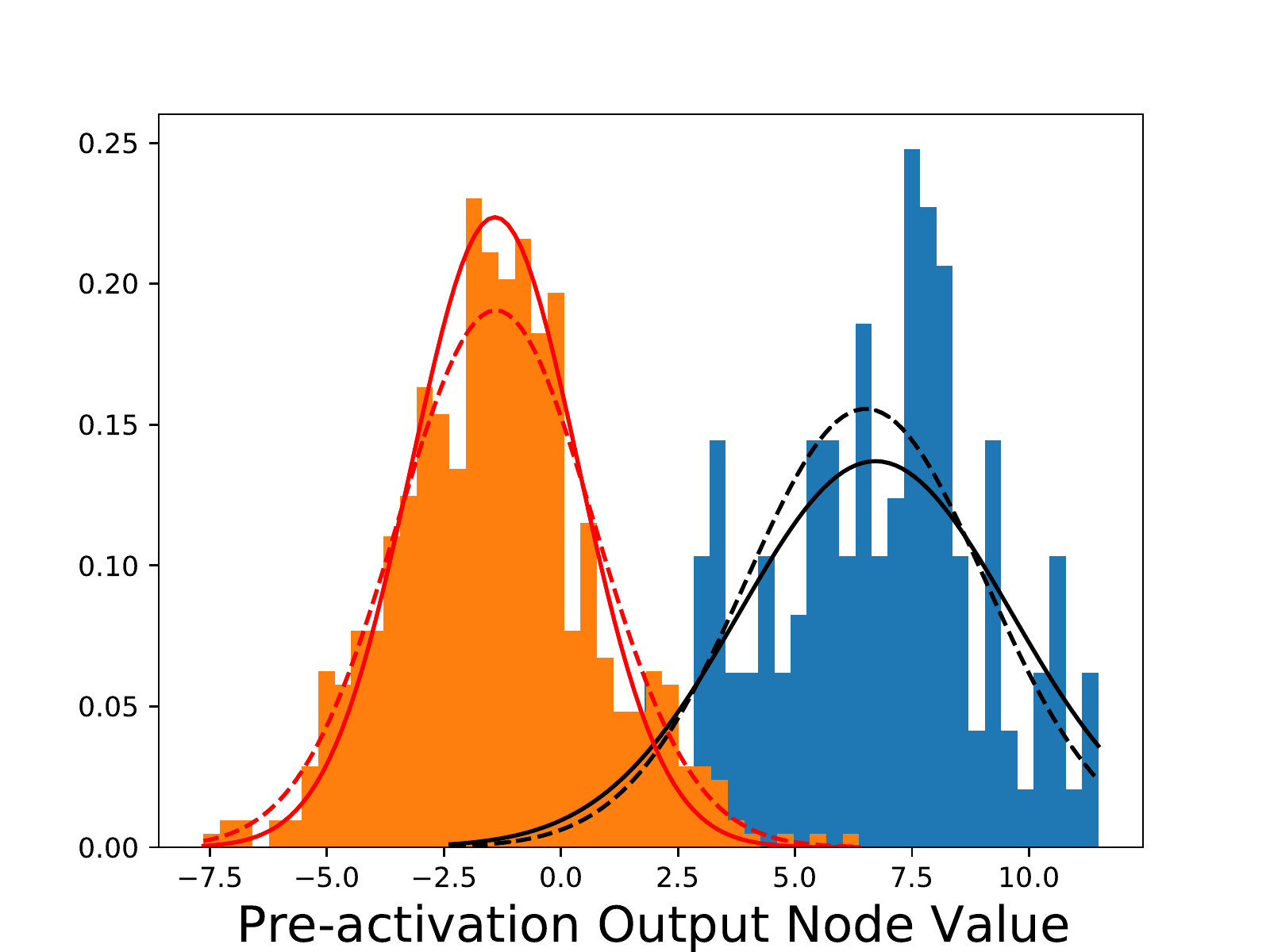}     &
    \includegraphics[width=0.18\linewidth]{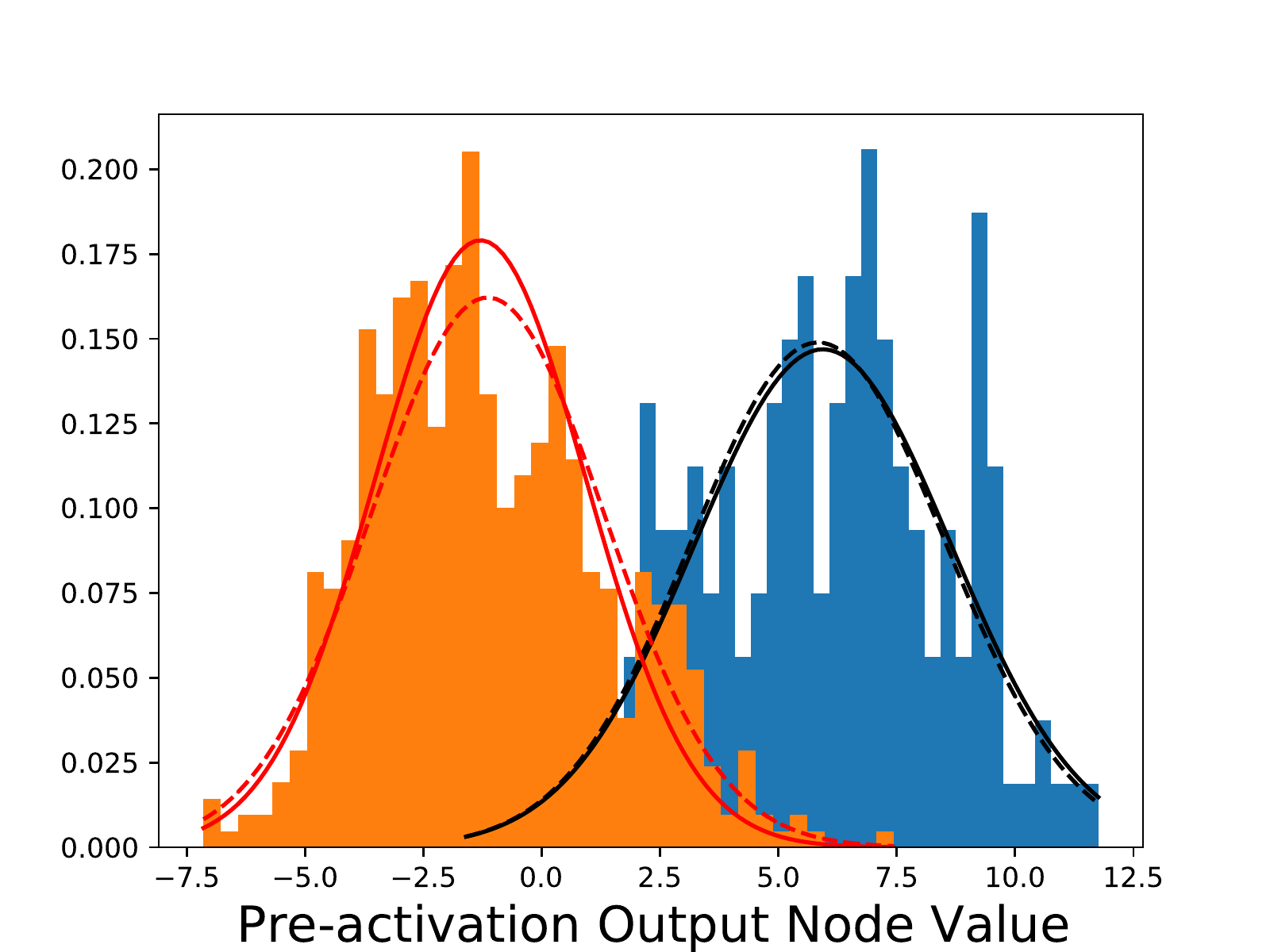} 
    \\
    Cls 1 & Cls 2 & Cls 3 & Cls 4 & Cls 5
    \end{tabular}
    \caption{Flowers: Classification layer node (pre-softmax activation) distributions. The predicted distribution is shown as a solid curve and is shown against the histogram obtained by direct observation of the corresponding output node values. The Gaussian fitted to the observed output values is shown in dotted lines, whilst the solid line is the predicted Gaussian based solely on last convolutional layer values. The red curve shows the distribution for the responses from the ``negative examples'' (i.e. examples with labels that do not match the output node class) and black curve (blue histogram) for ``positive examples'' (images with label corresponding to output node class).}
    \label{fig:flowers_gauss}
\end{figure*}



\subsection{Aggregation Layer Distribution Statistics}
\label{sec:exp_agg_layer_stats}
In this section, we use our proposed framework to analyse the underlying distributions of output values from the aggregation block layers. We visually show how the predicted distributions detailed in Section \ref{sec:agg_blocks} compare with the observed output value histograms for different layers in Figure \ref{fig:cifar_act_dist}. Here, the CIFAR10 dataset is used, and we have selected class 6 as the positive class. Shown are the output value distributions for the first filter of the last convolutional layer, exponential activation layer, GAP layer and deactivation layer. We can see how our proposed distribution formulations closely matches the corresponding histogram of observed values. Interestingly, we see how the initial Gamma distributed values of the last convolutional layer are squeezed into a smaller range by the exponential activation. The GAP and subsequent deactivation layer has the effect of pushing the mean value outwards. 

In order to comprehensively evaluate how close our proposed distributions are to the actual distributions, we compare the mean, variances and where applicable, covariances of the output values of each aggregation layer against those obtained by direct observed measurements. This can be visualised as scatter plots of predictions (x-axis) against observed values (y-axis). More specifically, for each dataset, each class set as ``positive'', every filter in each aggregation layer, we obtain a prediction of their respective output value distribution and subsequently predict the corresponding mean and variances. We also record these output values, and compute the corresponding observed mean and standard deviations. This enables us to produce a point (observed vs prediction) on the scatter plot described above. We next show and analyse these comparison scatter plots for each dataset separately.

\subsubsection{CIFAR10 Dataset}
The resulting scatter plots for the CIFAR10 dataset can be seen in Figure \ref{fig:cifar_mean_std}. Each point on a scatter plot represents the (observation,prediction) pair for a single filter. The different colours represent different classes set as positive. In all the plots, the diagonal line is also shown. Any point lying on the diagonal line indicates that the prediction and observation have equal values. 

The scatter plot for the mean and standard deviation of the exponential activation layer can be seen in Fig. \ref{fig:cifar_mean_std}a,b respectively. We see that there is very good agreement between the predicted values of the mean of exponentially activated responses, as shown by the points lying very close or on the diagonal line. In terms of the standard deviations, we find that there is a slight over prediction compared with the observed standard deviation. One possible reason is that the Gamma distribution is only an approximate (albeit closely fitting) of the last convolutional layer output distribution. 

The mean and standard deviation scatter plots for the GAP feature layer is shown in Figure \ref{fig:cifar_mean_std}c,d respectively. We find that the predicted and observed mean and standard deviation values match very well. This is the case across all the different classes and GAP features. Additionally, our proposed formulation is also able to estimate the covariance values between different GAP features as shown in Figure \ref{fig:cifar_cov_mat}a.


The predicted and observed mean and standard deviations for the deactivation layer scatter plots is shown in Figure \ref{fig:cifar_mean_std}e,f. We see a similar level of agreement between the predictions and observations of the deactivation layer output mean and standard deviation as those for the GAP layer. The covariance matrix between deactivated features is shown in Figure \ref{fig:cifar_cov_mat}b. We see that in general, the predicted and observed covariance matrix elements are in good agreement. Some variation is present in the predictions compared with its corresponding observed values. This is due to the error present in the use of a first order approximation for the covariance matrix equation on the deactivation layer. 

Finally, we can see the agreement between the output node Gaussians mean and standard deviations in Figure \ref{fig:cifar_act_dist}g,h respectively. The mean values for both the positive (circles) and negative class (thick cross) Gaussians lie on the diagonal line, showing very good agreement between the predicted and observed values. This can also be seen in the output Gaussian distributions shown in Figure \ref{fig:cifar_out_gauss}. The predicted Gaussian standard deviation exhibit some variation around the observed values due to the small inaccuracies present in the deactivated features covariance matrix estimation (as shown in Fig. \ref{fig:cifar_cov_mat}).


\subsubsection{Flowers Dataset}
The scatter plots for the Flowers dataset can be seen in Figure \ref{fig:flower_mean_std} and is shown in a similar format to that for the CIFAR10 dataset. The scatter plots for the predicted-vs-observed mean and standard deviation values of the exponential activated layers can be seen in Figure \ref{fig:flower_cov_mat}a,b. We find that there is also good agreement between the predicted and observed mean of exponentially activated features. However, there is a greater amount of variation present for the standard deviation predictions. 

As with the CIFAR dataset, the GAP layer shows much better agreement between our predicted mean and standard deviation values when compared with the corresponding observed values, as shown in Figure \ref{fig:flower_mean_std}c,d respectively. The scatter plot of the predicted-vs-observed GAP features covariance matrix elements is shown in Figure \ref{fig:flower_cov_mat}.

In the deactivation layer, the predicted mean again matches well to the observed values. However, the errors in the predicted GAP layer covariance matrix have caused the under-prediction of the deactivated features standard deviation. We also see that the first order approximation for estimating the covariance matrix has caused it to over predict the deactivation covariance matrix element values, as shown in Figure \ref{fig:flower_cov_mat}.

Despite the above issues, the predicted mean values for the output node Gaussians still match very well to the observed mean values,
as they mainly depends on the mean values of the deactivated features. However, we can see that the standard deviation values of the output node Gaussians is affected to a greater extent. However, the predicted standard deviation values still agree in general to those observed.

\begin{figure}
    \centering
    \begin{tabular}{cc}
             \includegraphics[width=0.35\linewidth]{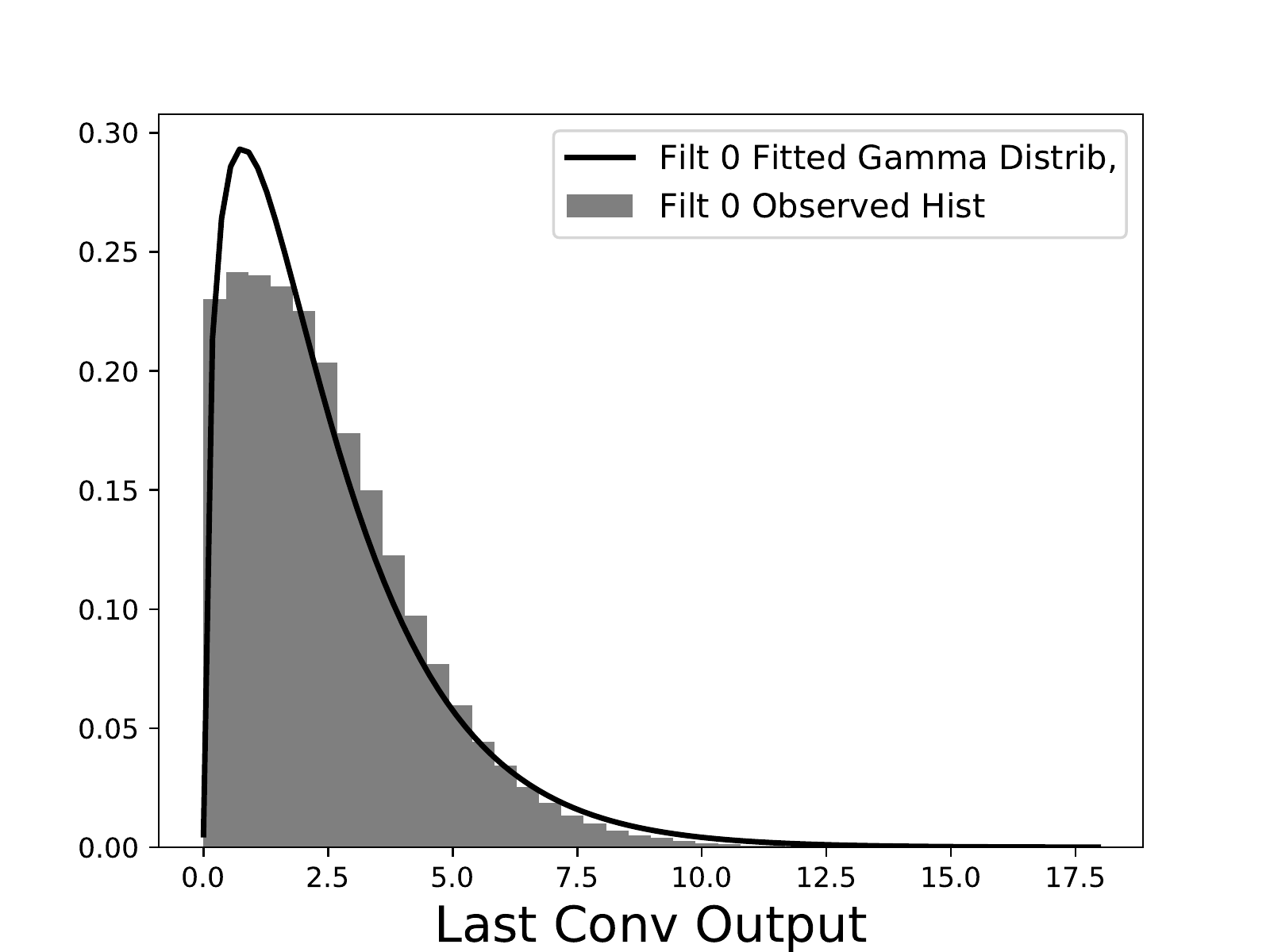}&  \includegraphics[width=0.45\linewidth]{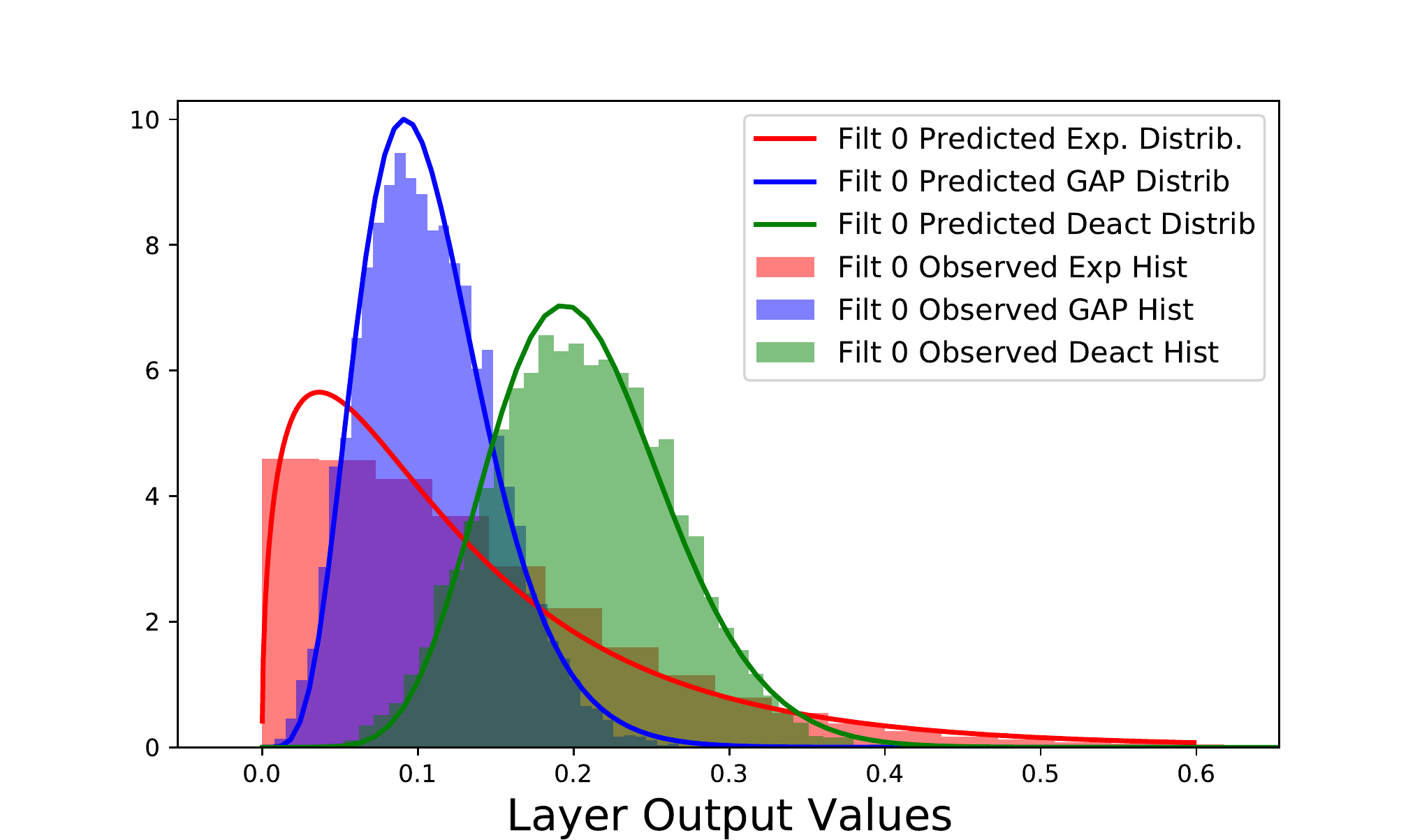}\\
             (Cls 6) Last Conv. Layer & Aggr. Layers \\
             (a) & (b) \\
        \hline
         \includegraphics[width=0.35\linewidth]{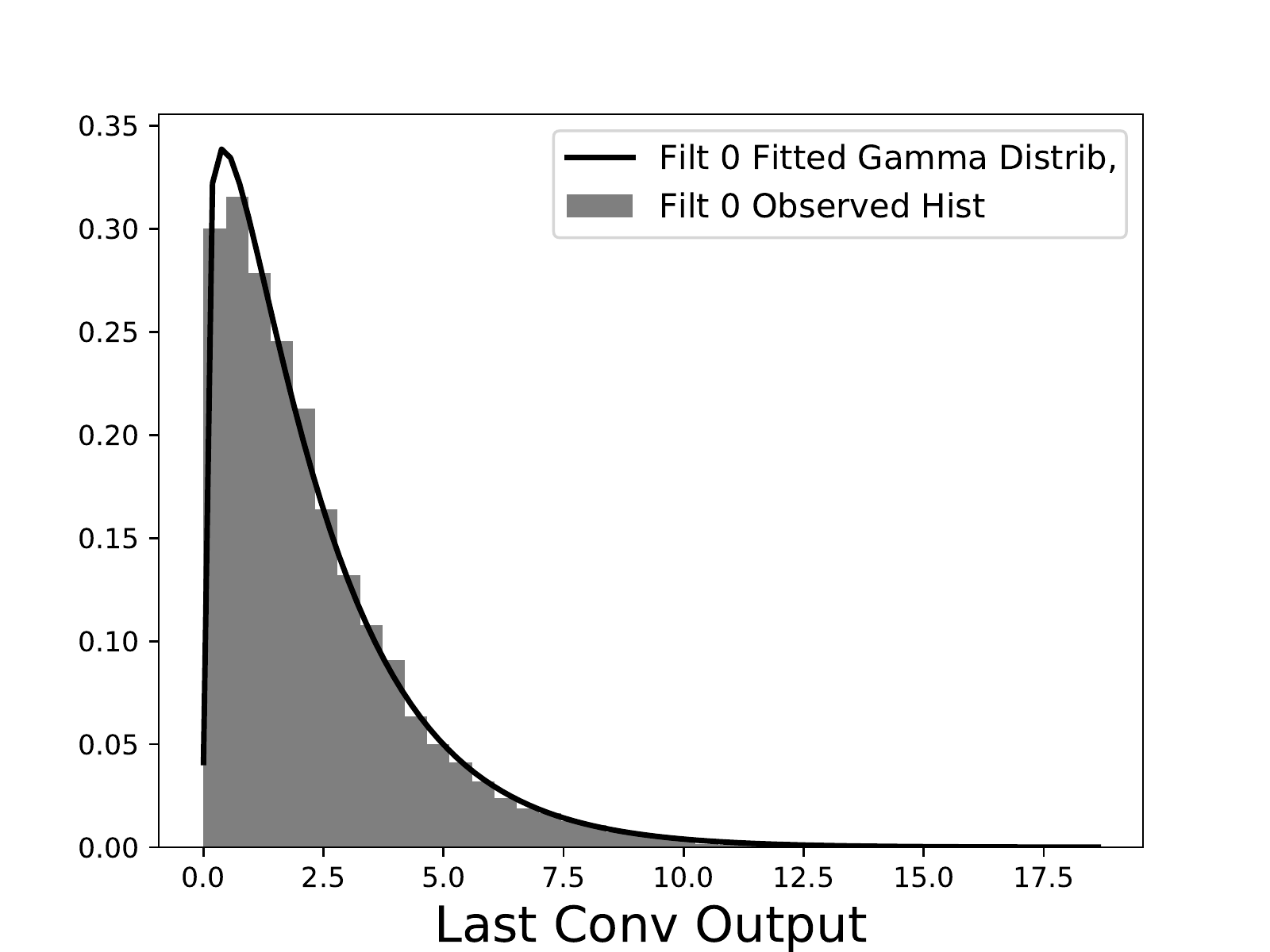}&
         \includegraphics[width=0.45\linewidth]{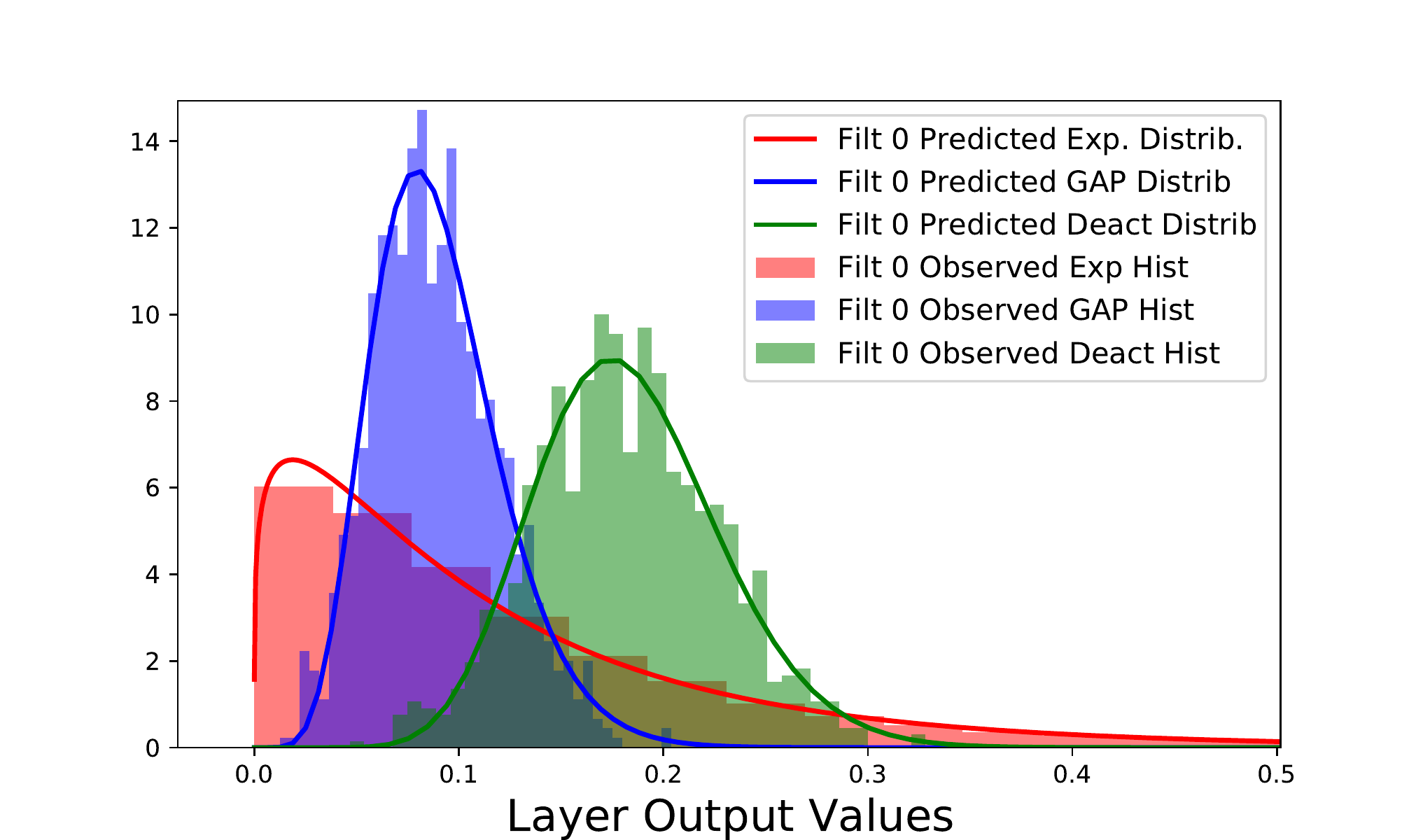} \\

          (Not Cls 6) Last Conv. Layer & Aggr. Layers \\
          (c) & (d)
    \end{tabular}

    \caption{Predicted and observed distributions of output values of different layers in the activation block for the CIFAR10 dataset. The curves show the predicted distribution of output values for corresponding activation block layers using the equations from Section \ref{sec:agg_blocks}. We also show the histogram of observed output values from corresponding layers for comparison. (a) and (b) show the distributions when class 6 examples are given to the DNN. (c) and (d) shows the layer output distributions when examples from other classes are given to the DNN.  }
    \label{fig:cifar_act_dist}
\end{figure}

\begin{figure}
    \centering
    \begin{tabular}{cc}
        \includegraphics[width=0.5\linewidth]{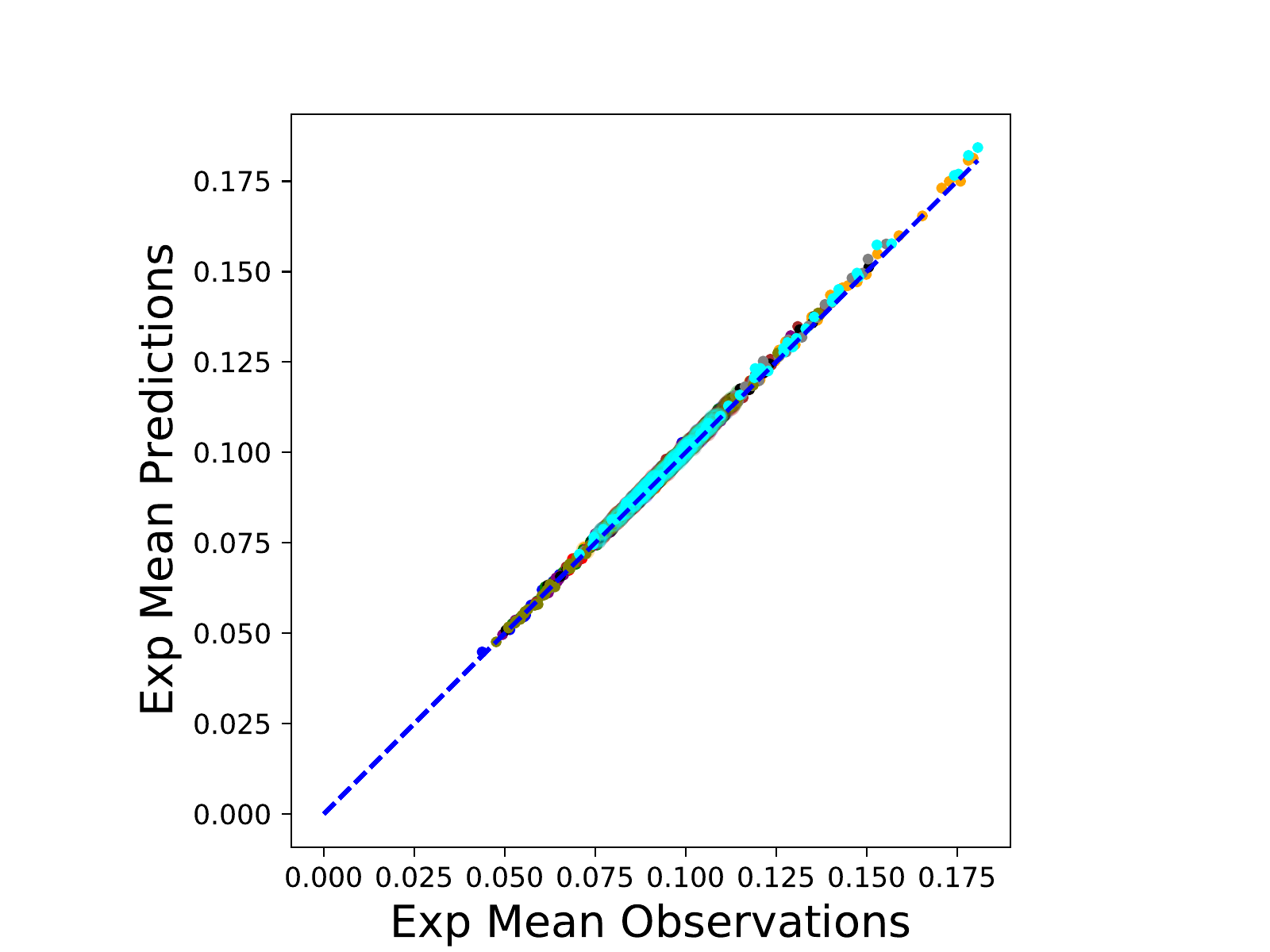}
     &  \includegraphics[width=0.5\linewidth]{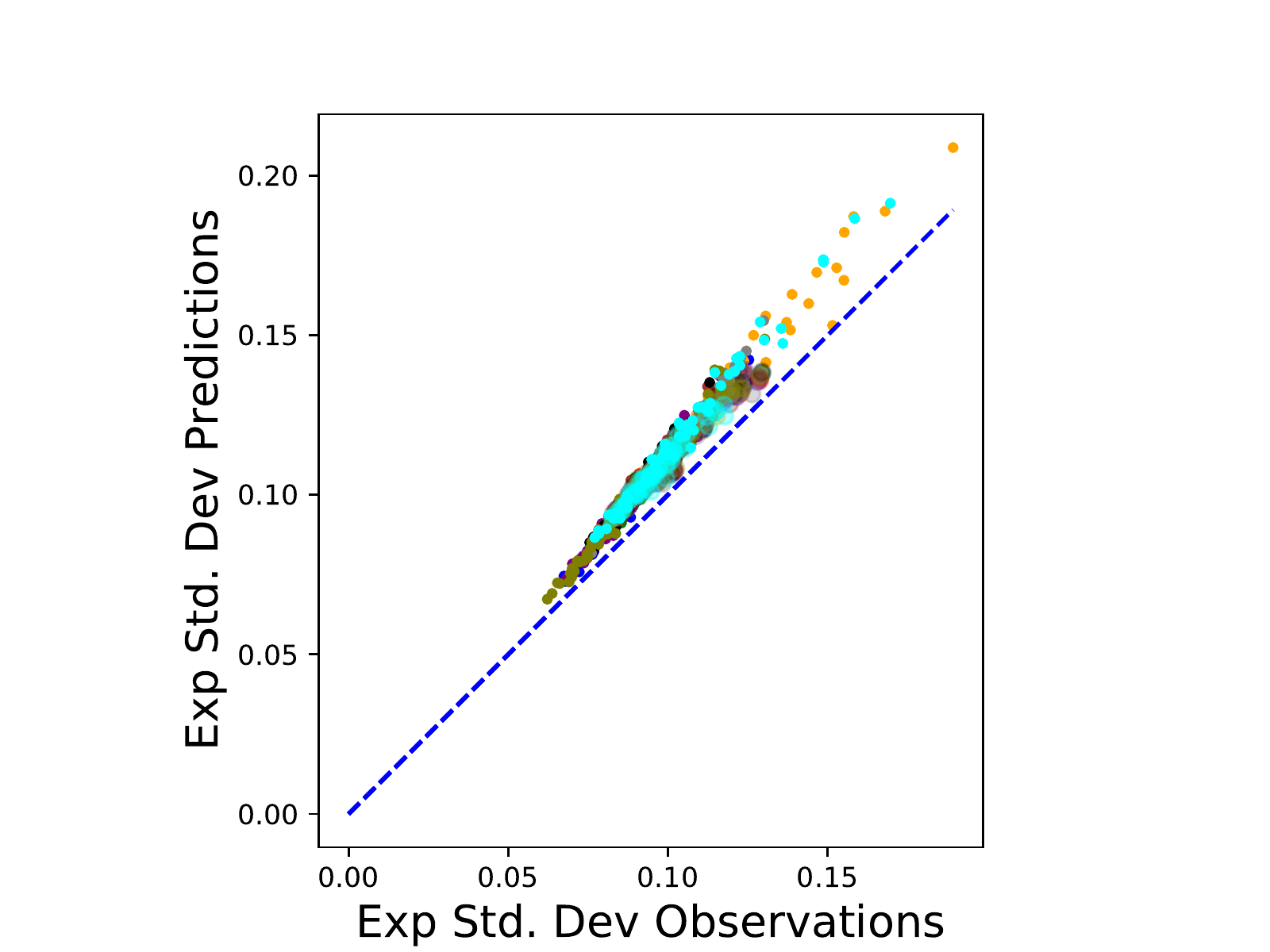}\\
       (a)  & (b)\\
       \includegraphics[width=0.5\linewidth]{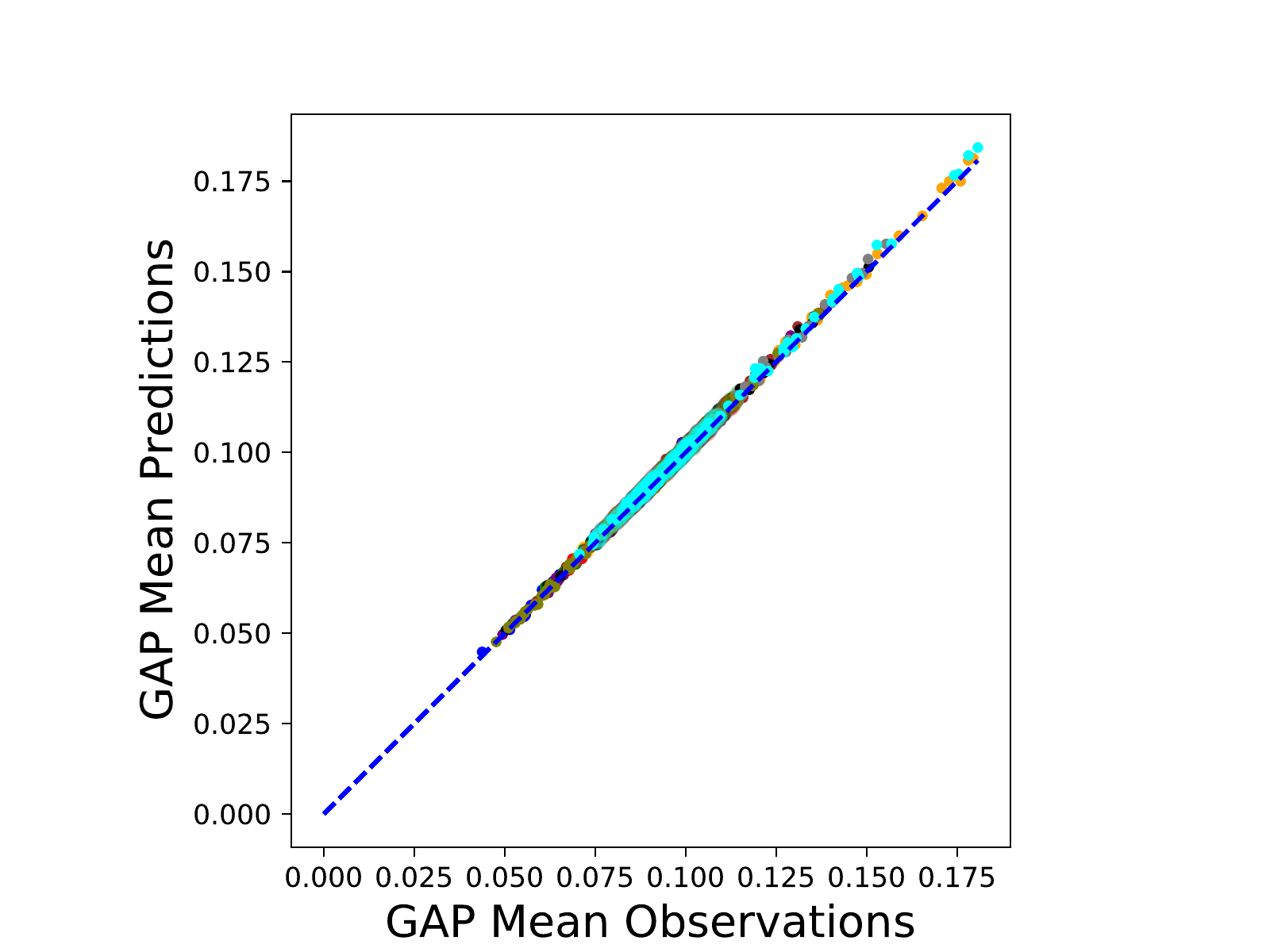}
     &  \includegraphics[width=0.5\linewidth]{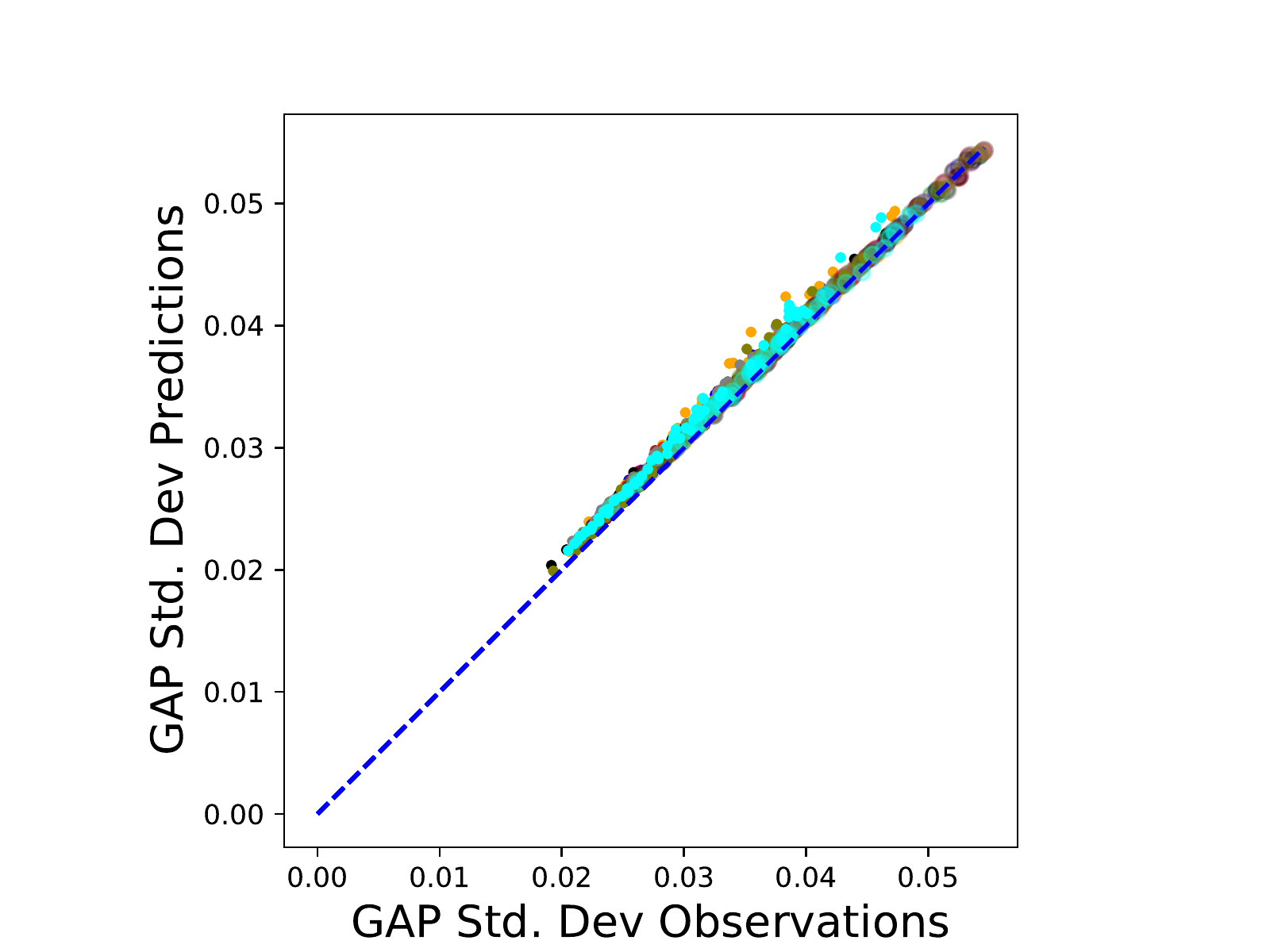}\\
       (c)  & (d) \\
       \includegraphics[width=0.5\linewidth]{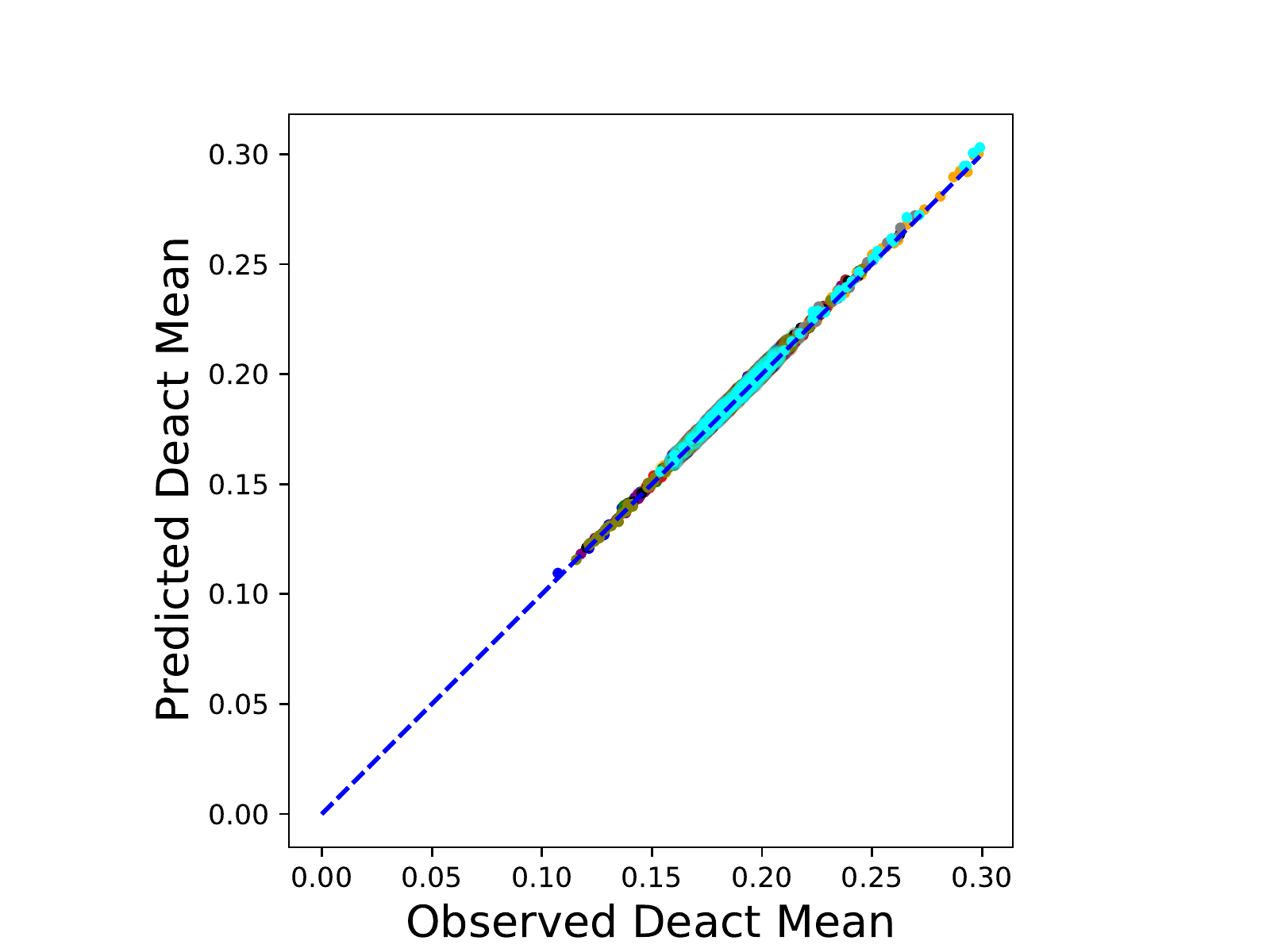}
     &  \includegraphics[width=0.5\linewidth]{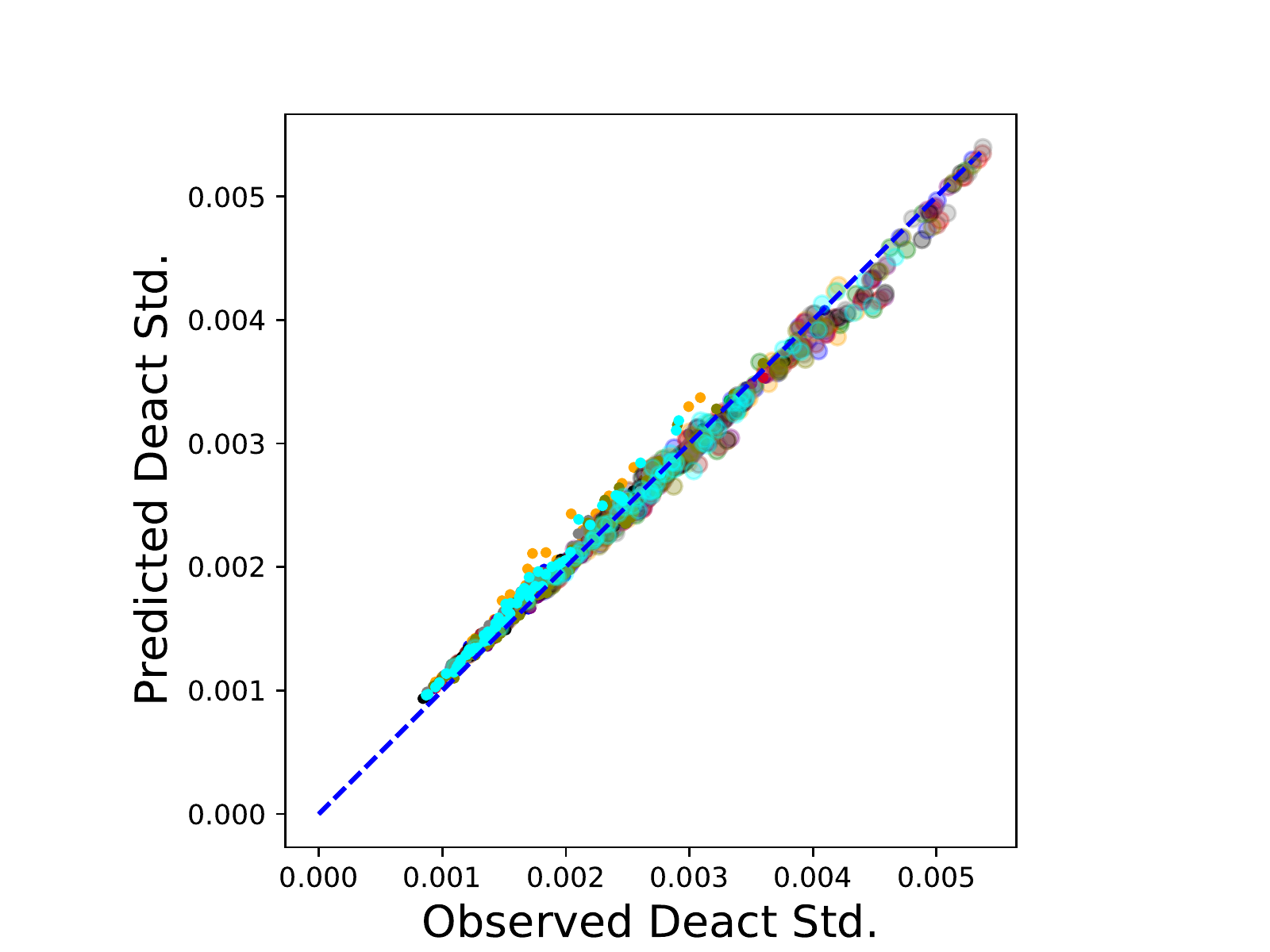}\\
       (e)  & (f) \\
       \includegraphics[width=0.5\linewidth]{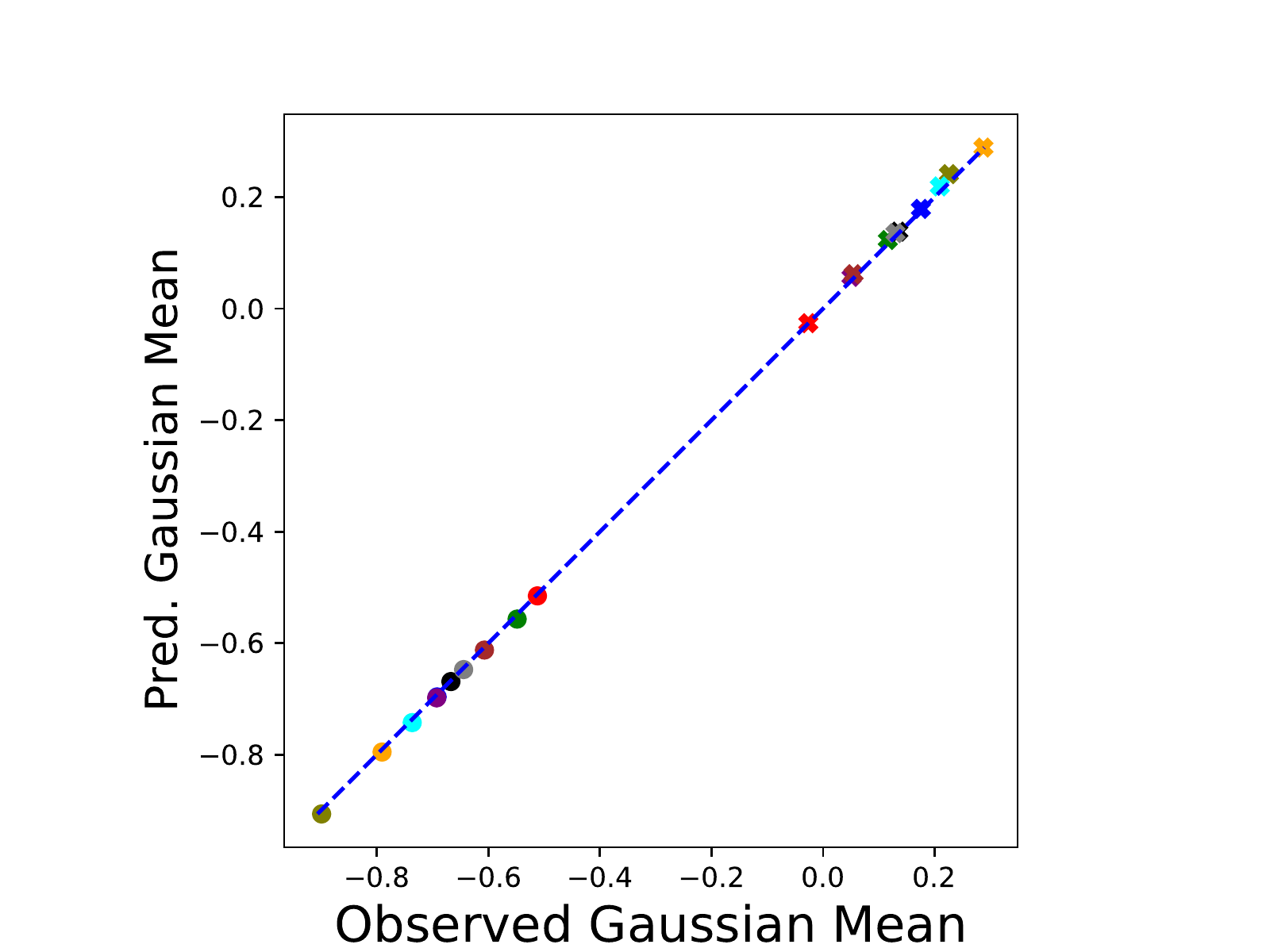}
     &  \includegraphics[width=0.5\linewidth]{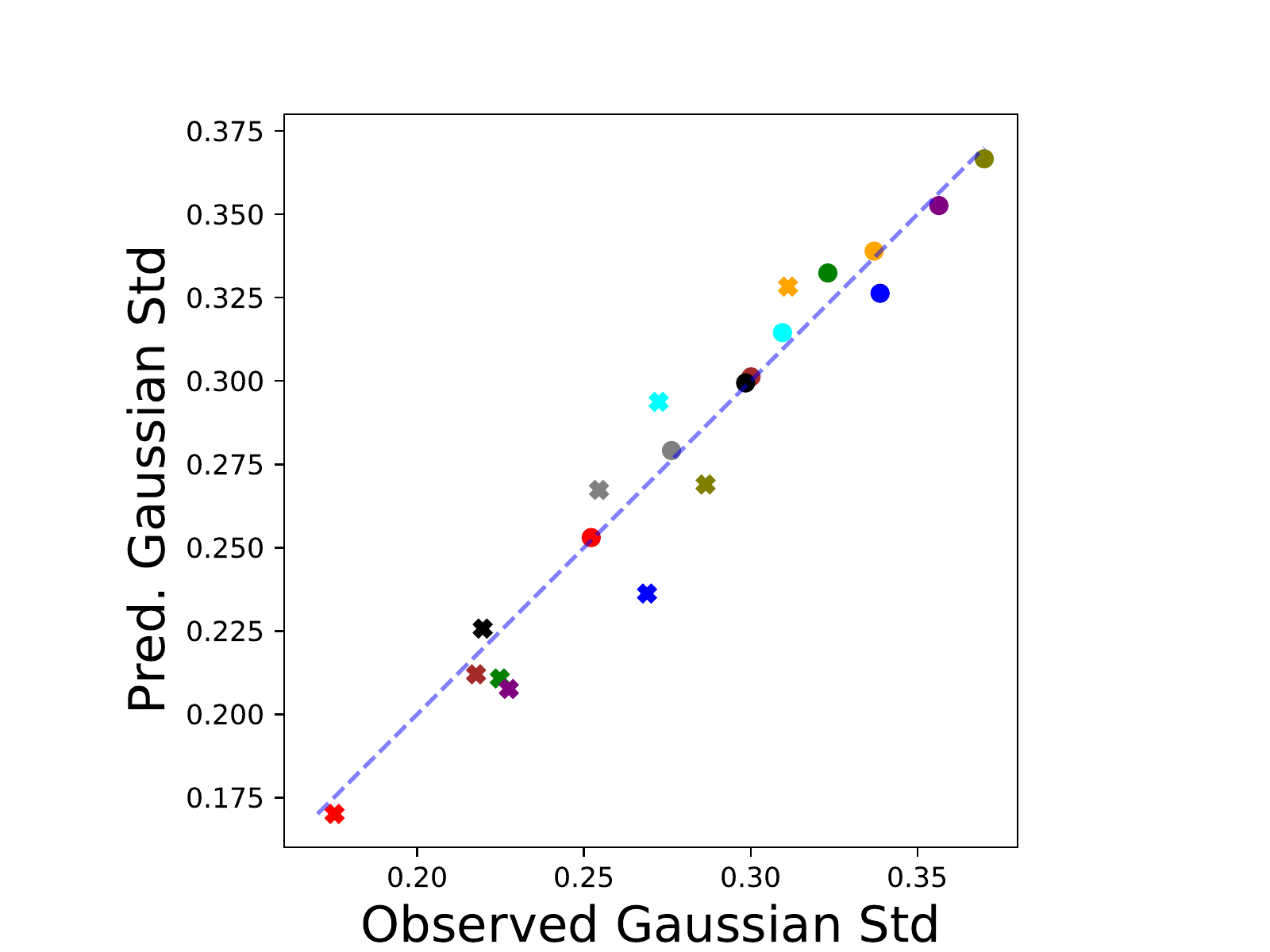}\\
       (g)  & (h)
    \end{tabular}
    \caption{CIFAR10: All plots show the observed statistics (x-axis) against predicted statistics (y-axis) of output value distributions for the different layers in the aggregation block. The left column shows the distribution mean values, whilst the right column shows the standard deviation values. Different colours represent different classes treated as positive. For reference, the diagonal line is shown as a dotted line.
    (a) and (b) for the Exponential Layer Activation Layer, (c) and (d) for the GAP layer, (e) and (f) for the deactivation layer and (g),(h) for the fully connected classification layer. }
    \label{fig:cifar_mean_std}
\end{figure}

\begin{figure}
    \centering
    \begin{tabular}{cc}
         \includegraphics[width=0.45\linewidth]{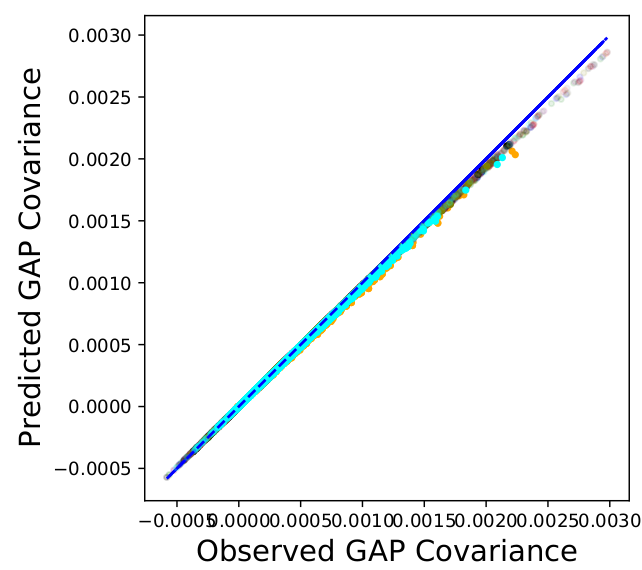} & 
         \includegraphics[width=0.45\linewidth]{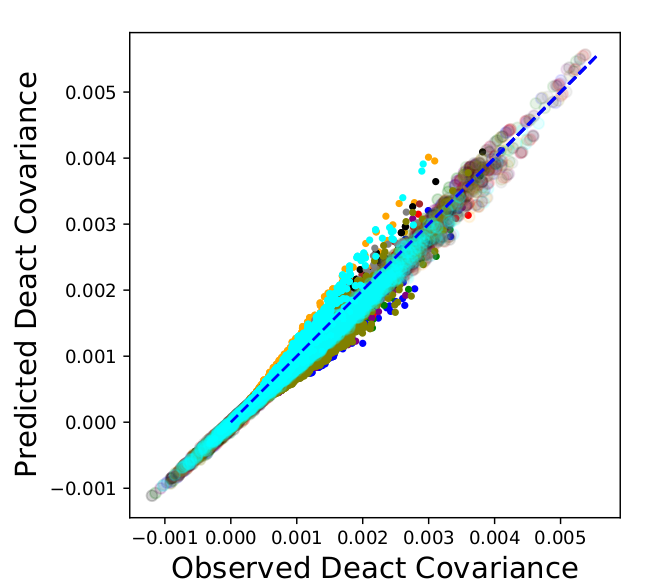}  \\
         (a) GAP Layer & (b) Deact. Layer 
    \end{tabular}
    \caption{CIFAR10: Aggregation layer covariance matrix scatter plot showing value of elements of the observed covariance matrix (x-axis) against corresponding element values of the predicted covariance matrix (y-axis).  }
    \label{fig:cifar_cov_mat}
\end{figure}

\begin{figure}
    \centering
    \begin{tabular}{cc}
        \includegraphics[width=0.5\linewidth]{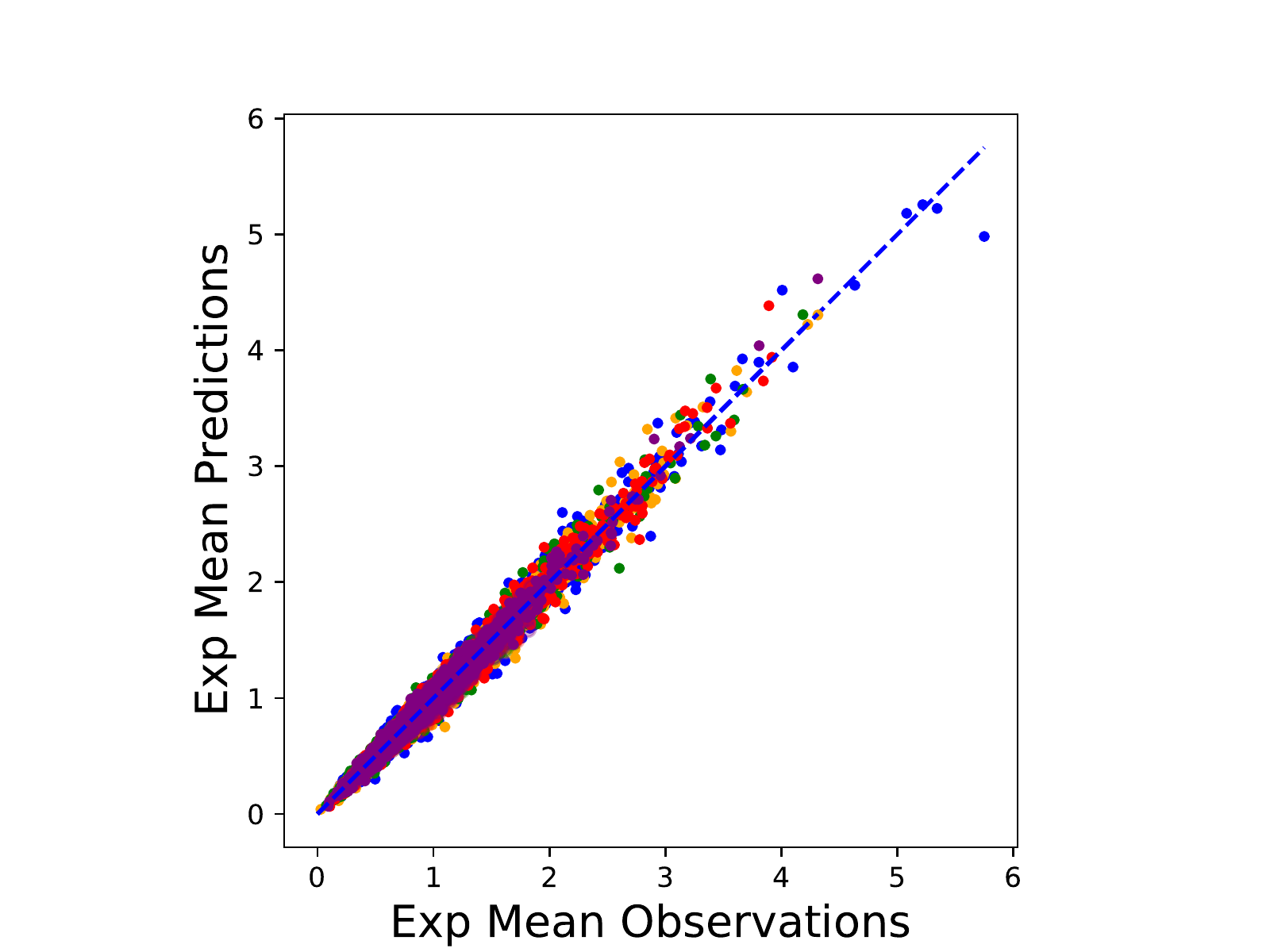}
     &  \includegraphics[width=0.5\linewidth]{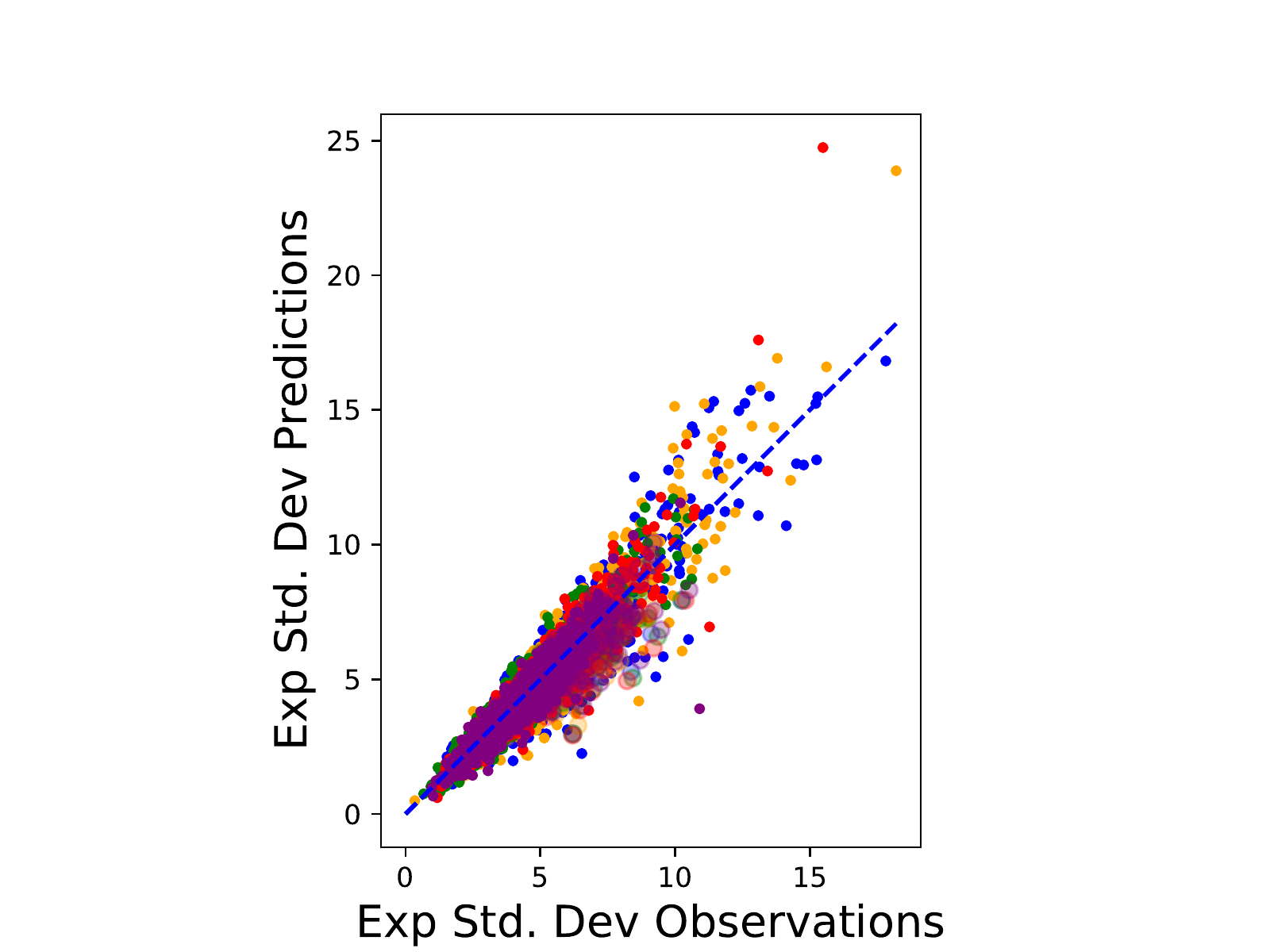}\\
       (a)  & (b)\\
       \includegraphics[width=0.5\linewidth]{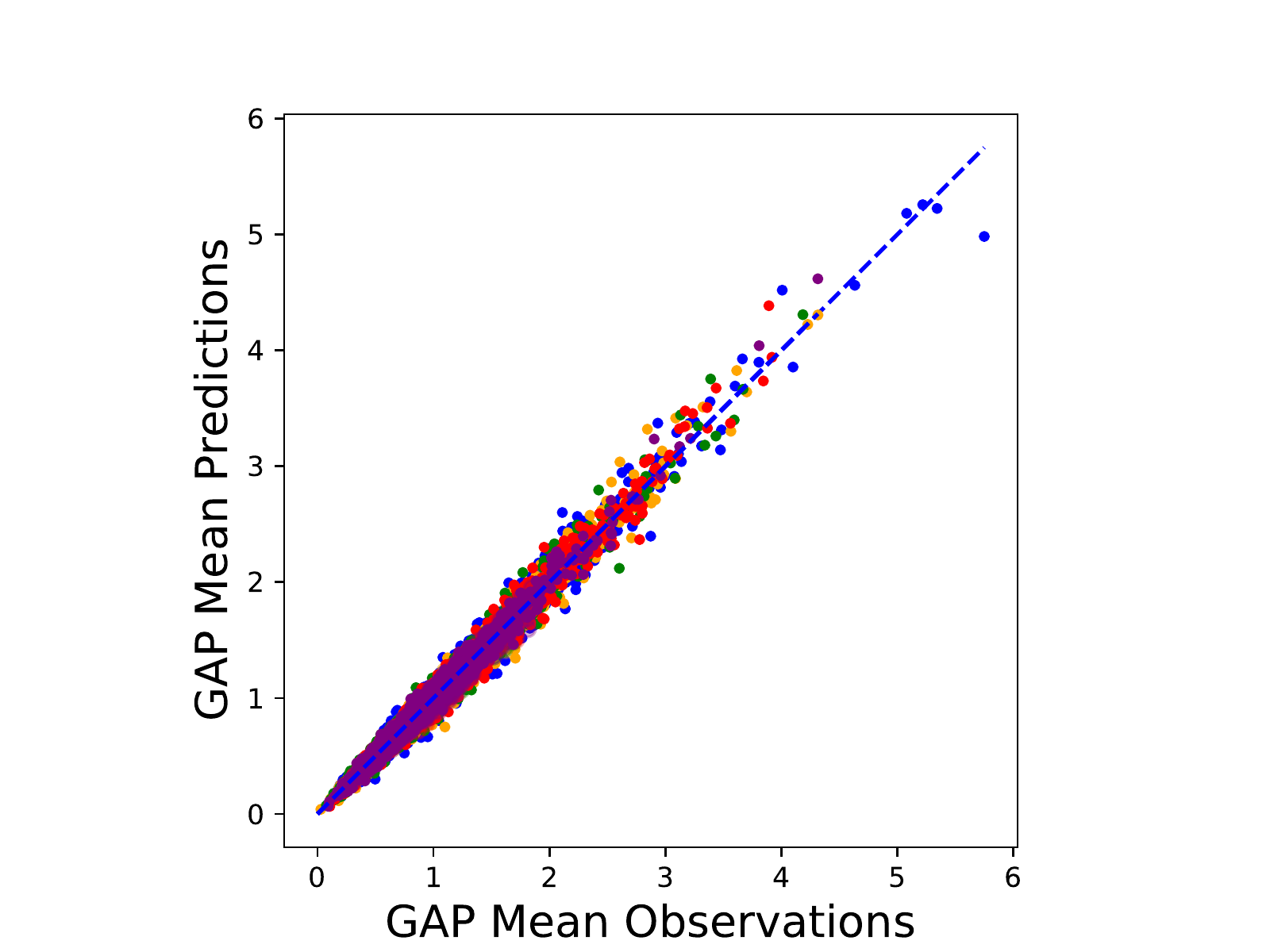}
     &  \includegraphics[width=0.5\linewidth]{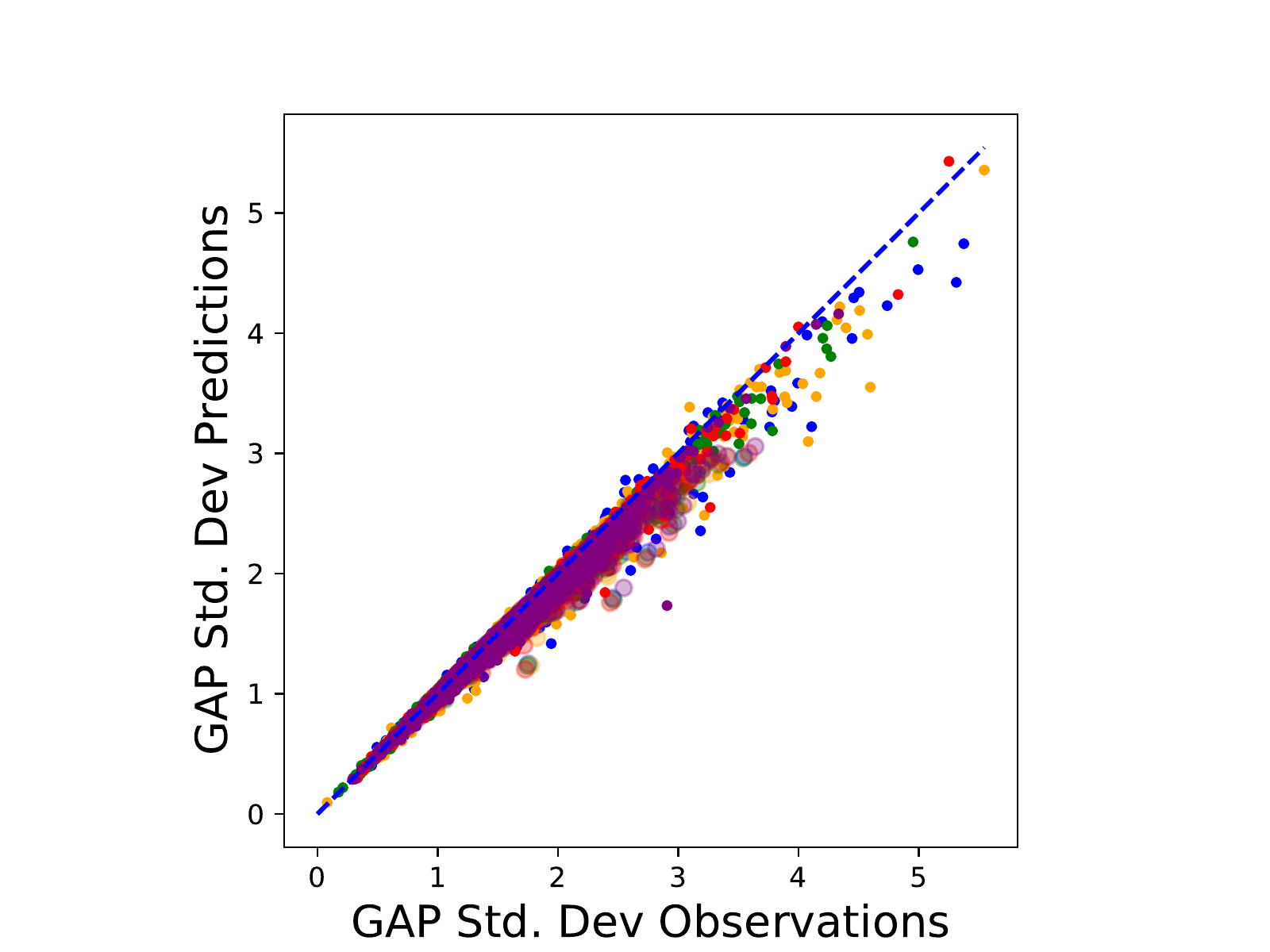}\\
       (c)  & (d) \\
       \includegraphics[width=0.5\linewidth]{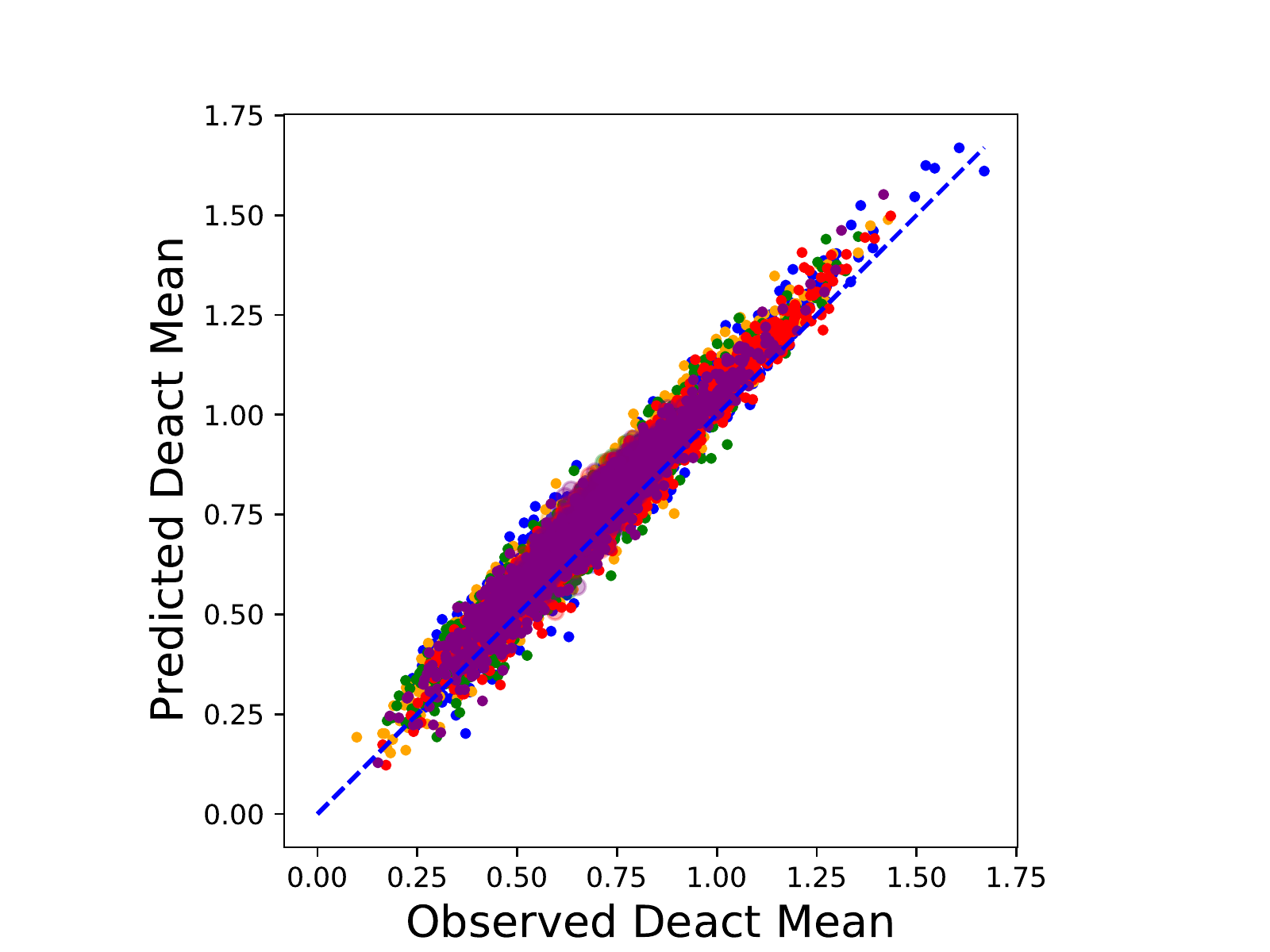}
     &  \includegraphics[width=0.5\linewidth]{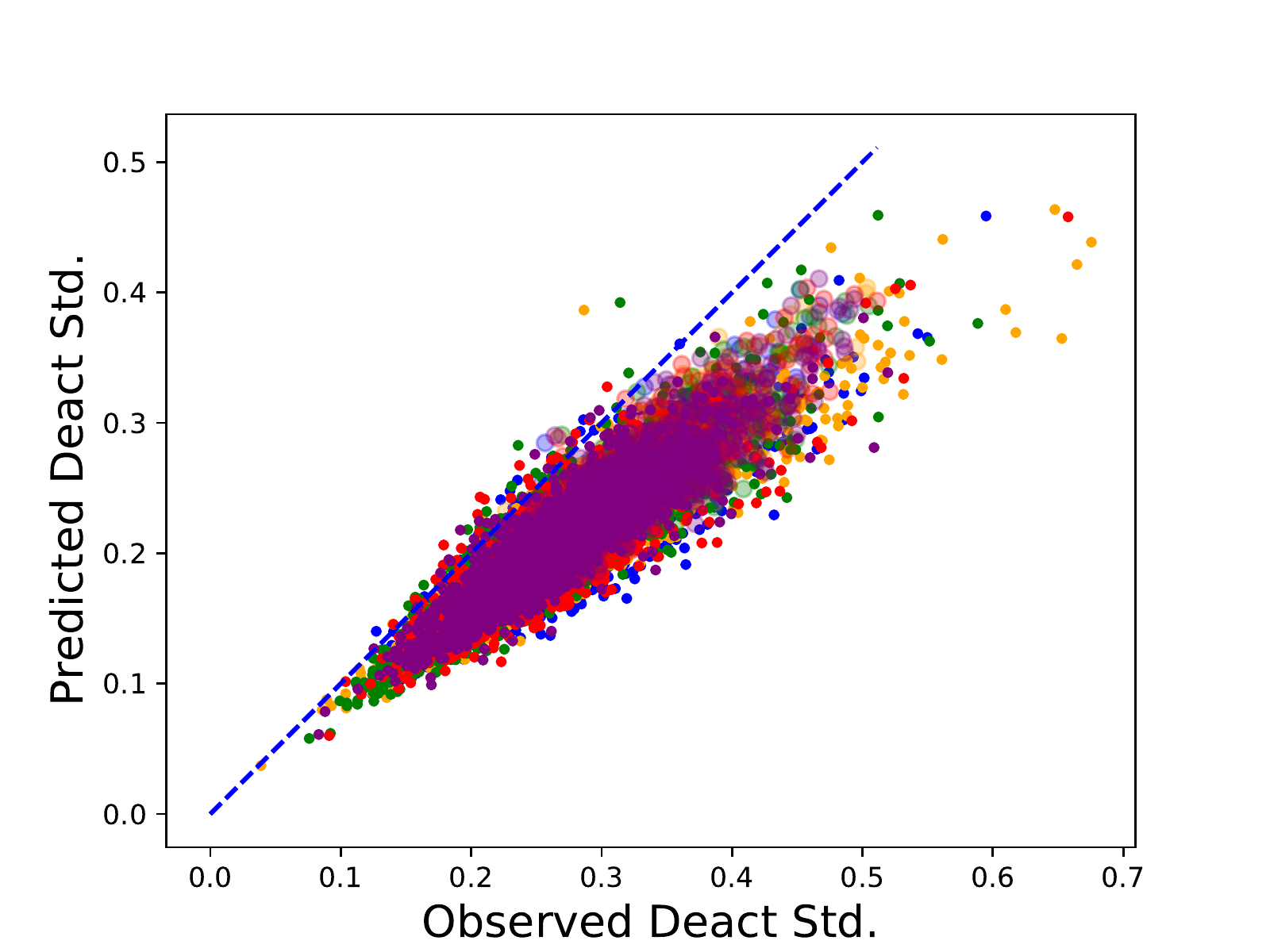}\\
       (e)  & (f) \\
       \includegraphics[width=0.5\linewidth]{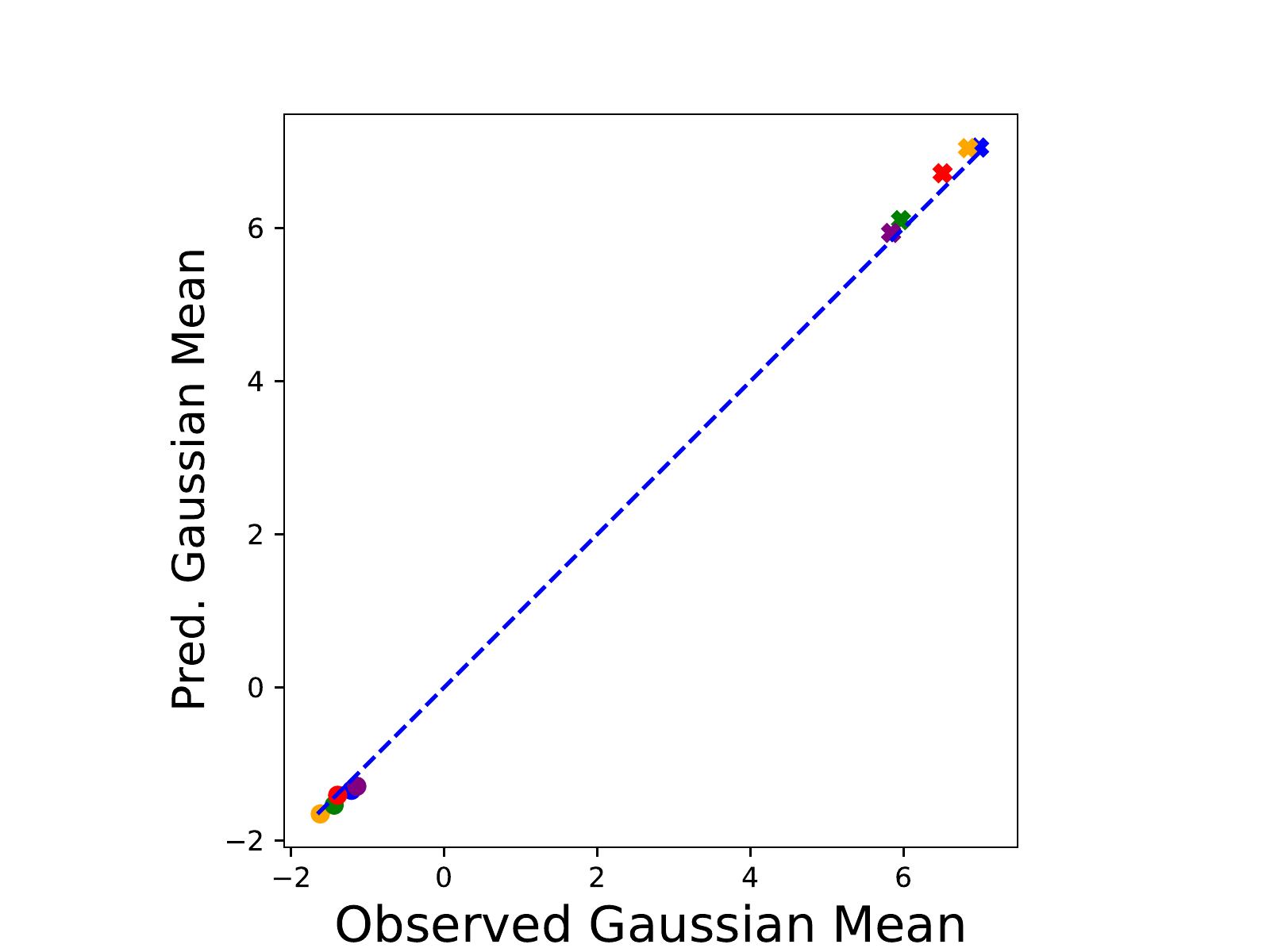}
     &  \includegraphics[width=0.5\linewidth]{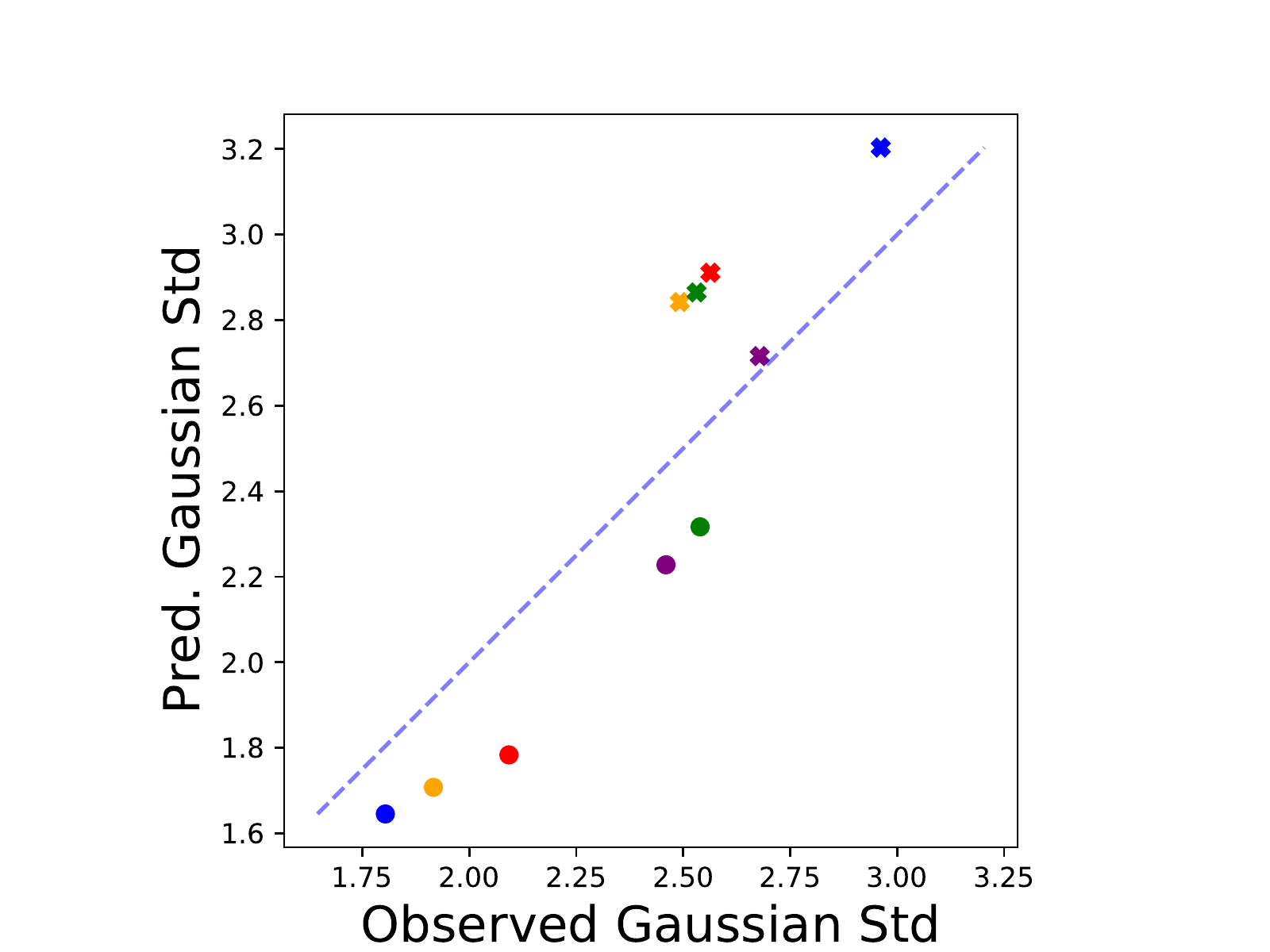}\\
       (g)  & (h)
    \end{tabular}
    \caption{Flowers: All plots show the observed statistics (x-axis) against predicted statistics (y-axis) of output value distributions for the different layers in the aggregation block. The left column shows the distribution mean values, whilst the right column shows the standard deviation values. Different colours represent different classes treated as positive. For reference, the diagonal line is shown as a dotted line.
    (a) and (b) for the Exponential Layer Activation Layer, (c) and (d) for the GAP layer, (e) and (f) for the deactivation layer and (g),(h) for the fully connected classification layer. }
    \label{fig:flower_mean_std}
\end{figure}

\begin{figure}
    \centering
    \begin{tabular}{cc}
         \includegraphics[width=0.45\linewidth]{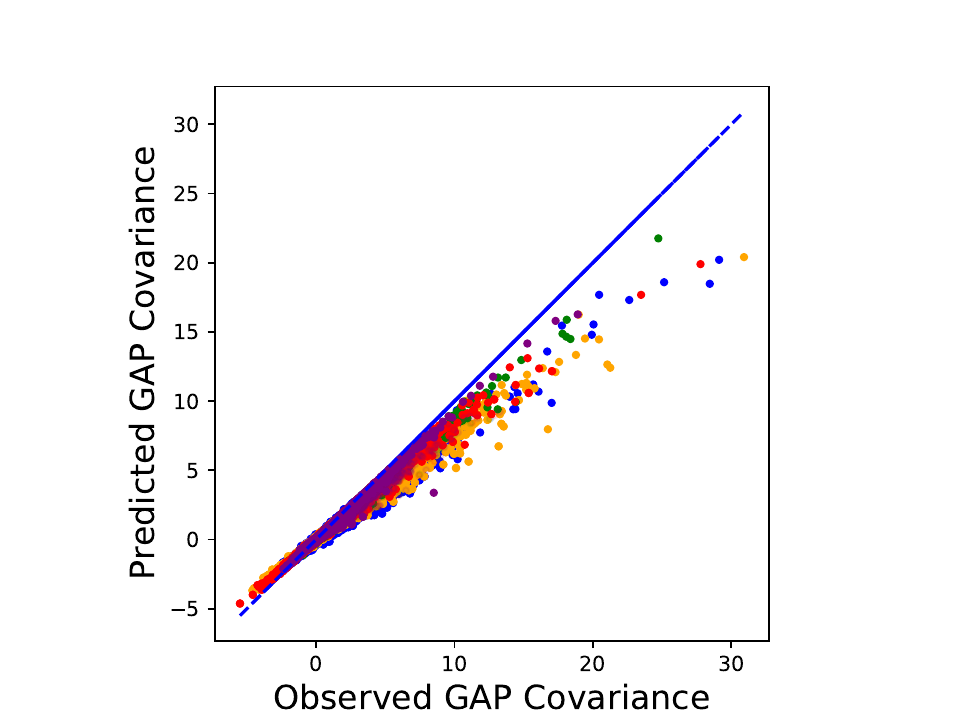} & 
         \includegraphics[width=0.75\linewidth]{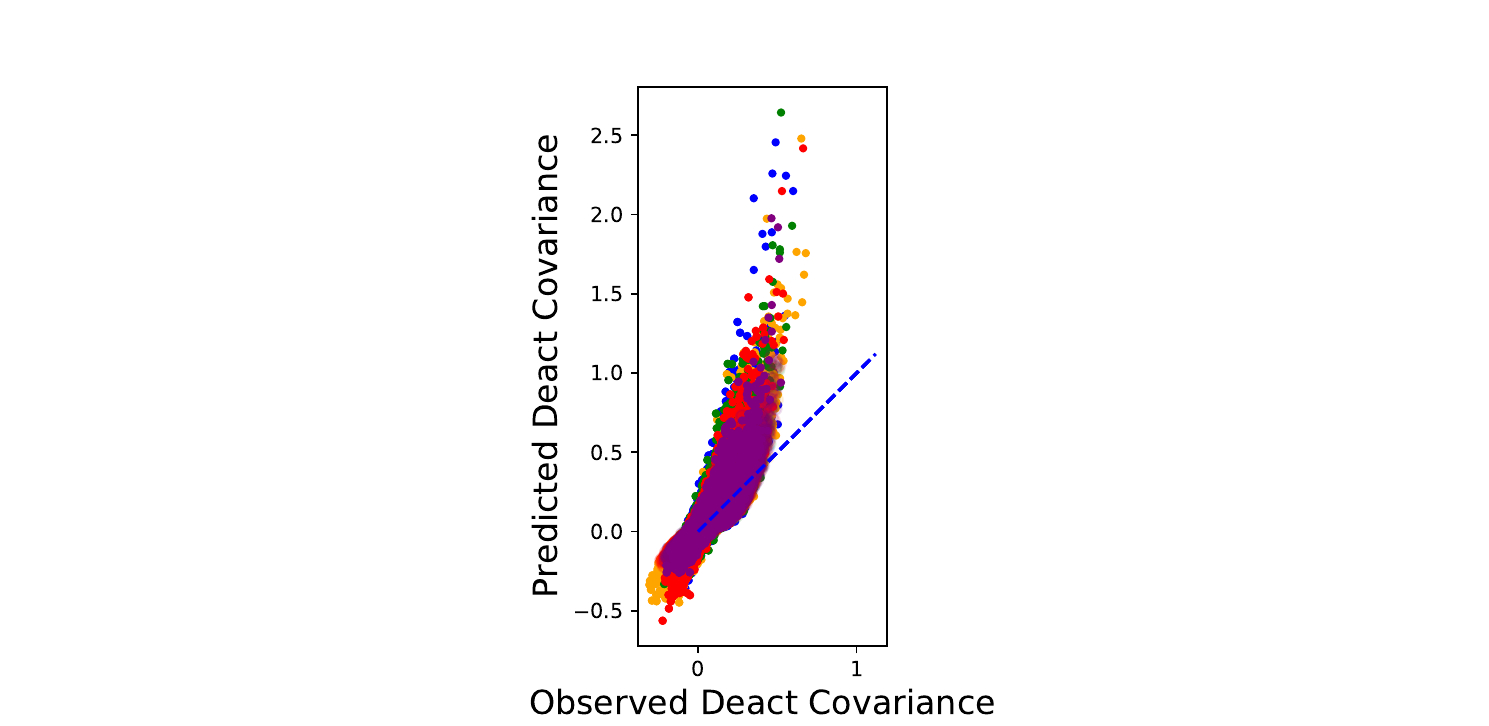}  \\
         (a) GAP Layer & (b) Deact. Layer 
    \end{tabular}
    \caption{Flowers: Aggregation layer covariance matrix scatter plot showing value of elements of the observed covariance matrix (x-axis) against corresponding element values of the predicted covariance matrix (y-axis).  }
    \label{fig:flower_cov_mat}
\end{figure}


\subsection{The Effect of Correlated Deep Features}
\label{sec:exp_cov_deep_feat}
We find that different pixels of the last convolutional layer output image are correlated, both within and between filters. Although the correlation is reduced as training progresses, it is still present and has a significant influence on the KL-divergence of the DNN classifier nodes.

Secondly, the covariance between deactivated GAP features plays a significant factor in determining the KL divergence of the DNN. This can be seen from the variance equations of Eq. \ref{eq:gapsum_var} for the GAP features and Eq. \ref{eq:out_sigma} for the output Gaussian. Here, we see that the covariance matrix directly impacts on the size of the variance of the final output distribution. The greater the covariance between different filters in the last convolutional layer, the bigger the classification node output value variances will be. This causes the KL divergence between the corresponding positive and negative distributions of an output classifier node to decrease. A consequence of ignoring the covariance matrix contributions results in a severe under-estimation of the variances. This can be seen in Figure \ref{fig:nocov}a. We can also visually see this issue in Figure \ref{fig:nocov}c-e. The resulting incorrect increase in the predicted KL divergence can be seen in Figure \ref{fig:nocov}b. This was found to be the case for all positive classes in the CIFAR10 dataset.

\begin{figure}
    \centering
    \begin{tabular}{cc}
    \includegraphics[width=0.45\linewidth]{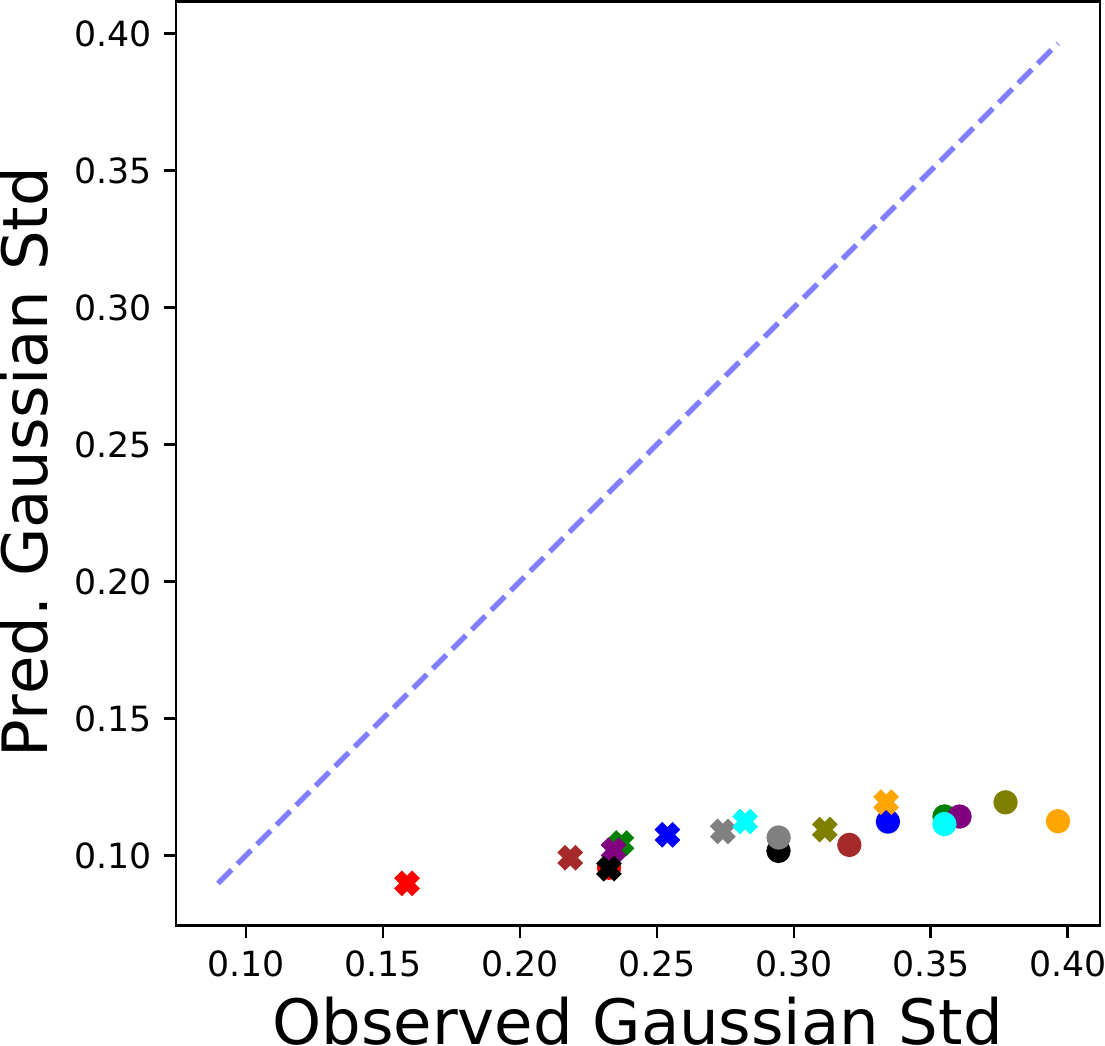}     &  \includegraphics[width=0.45\linewidth]{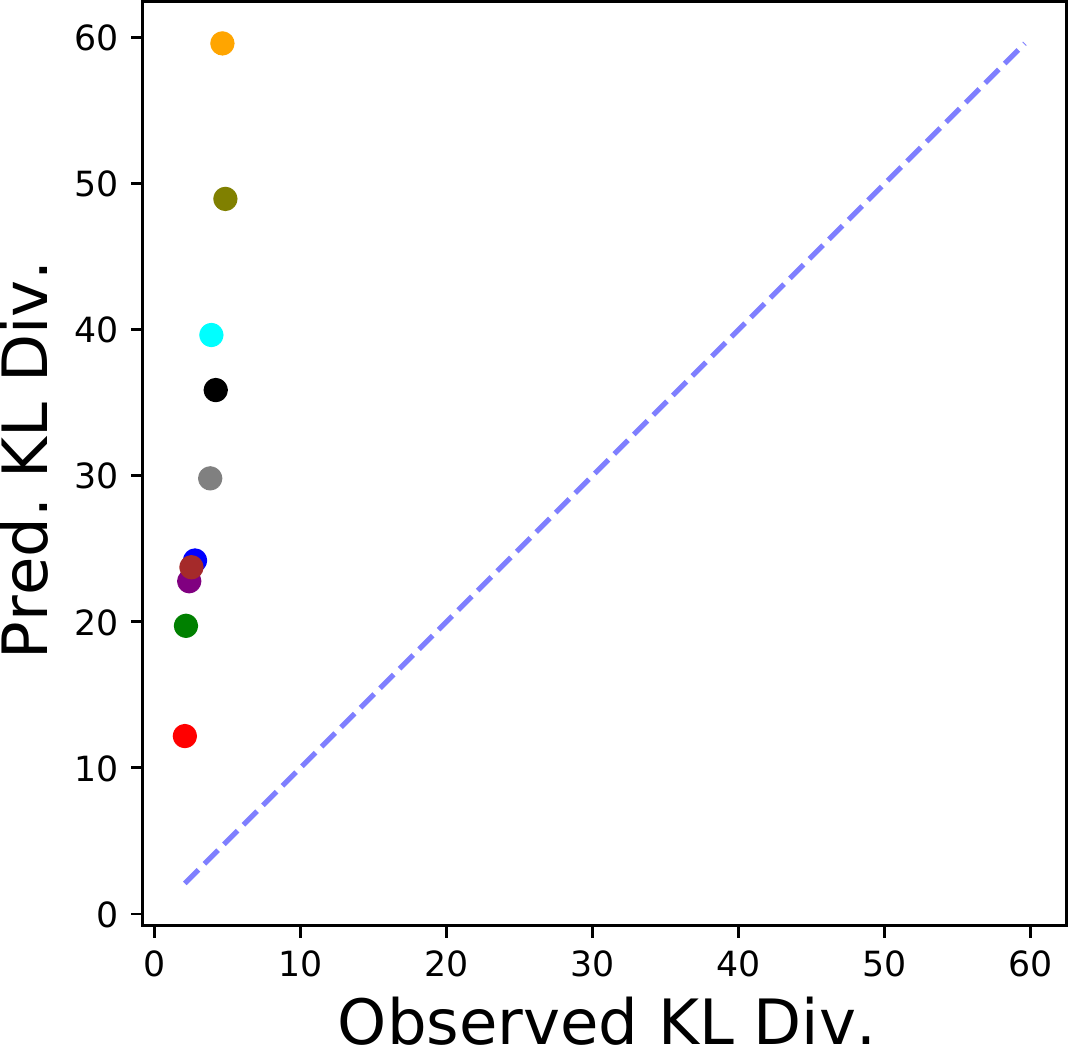}\\
       (a)  & (b)
    \end{tabular}
    \begin{tabular}{ccc}
         \includegraphics[width=0.3\linewidth]{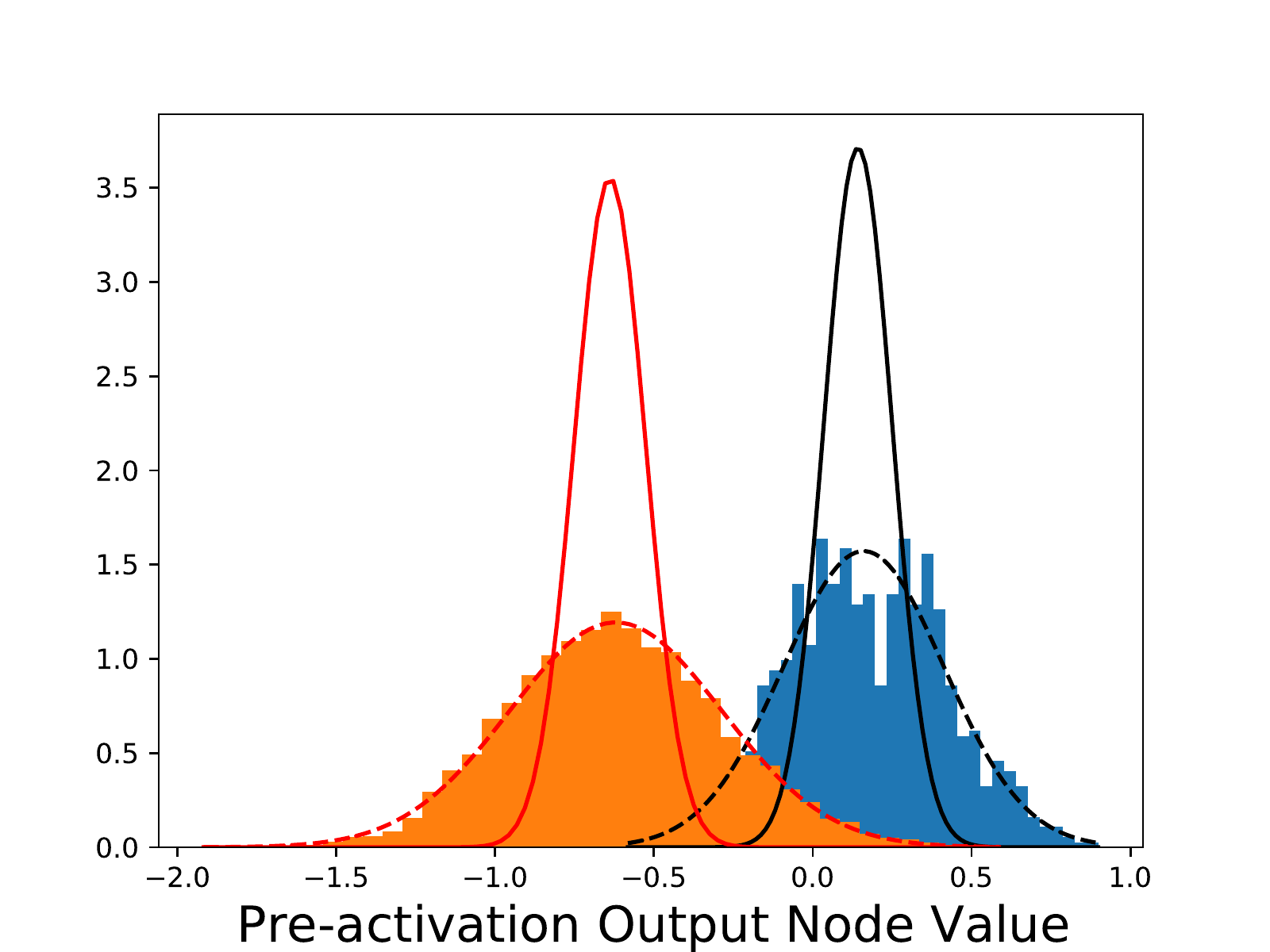} & \includegraphics[width=0.3\linewidth]{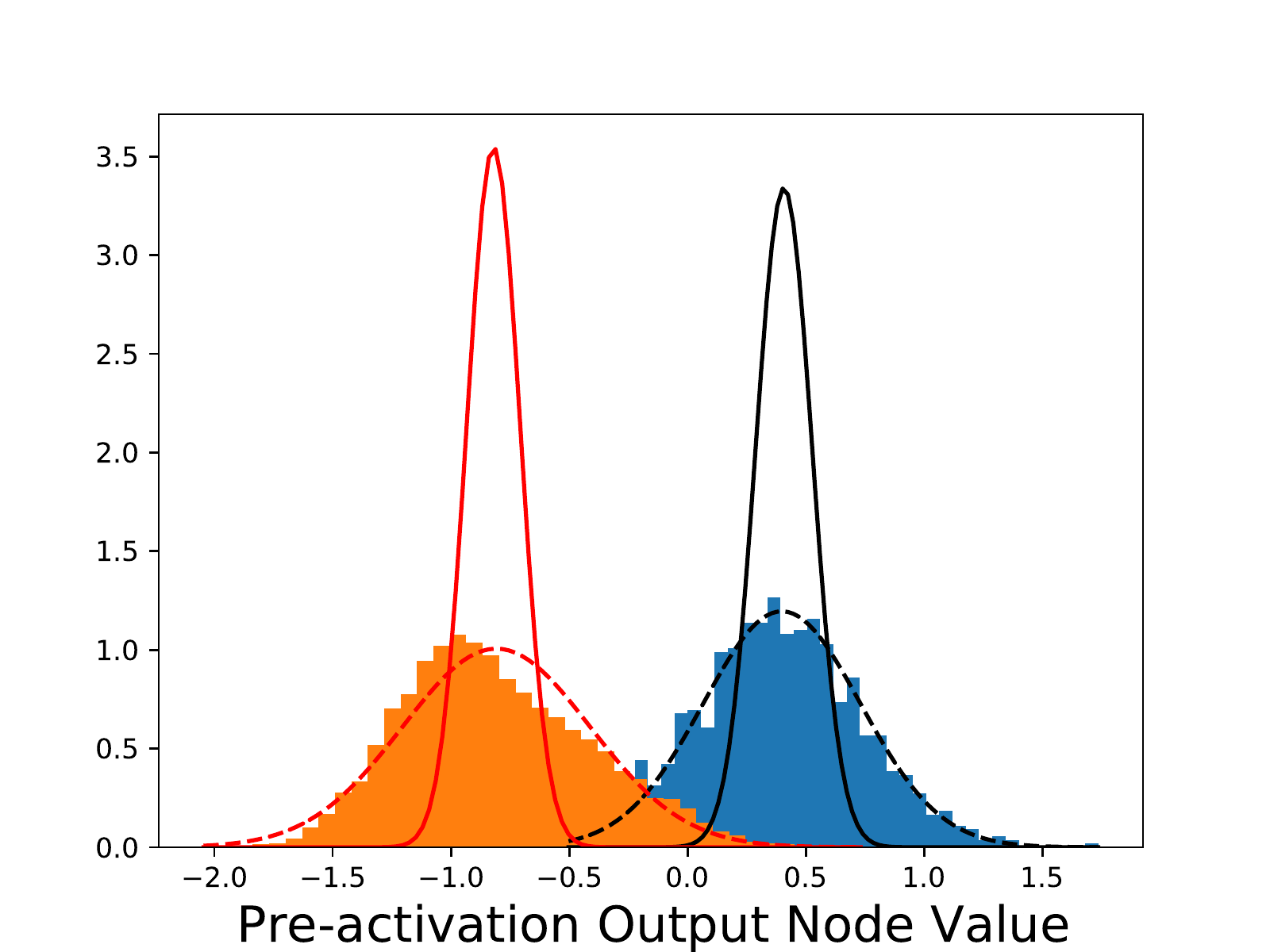}&\includegraphics[width=0.3\linewidth]{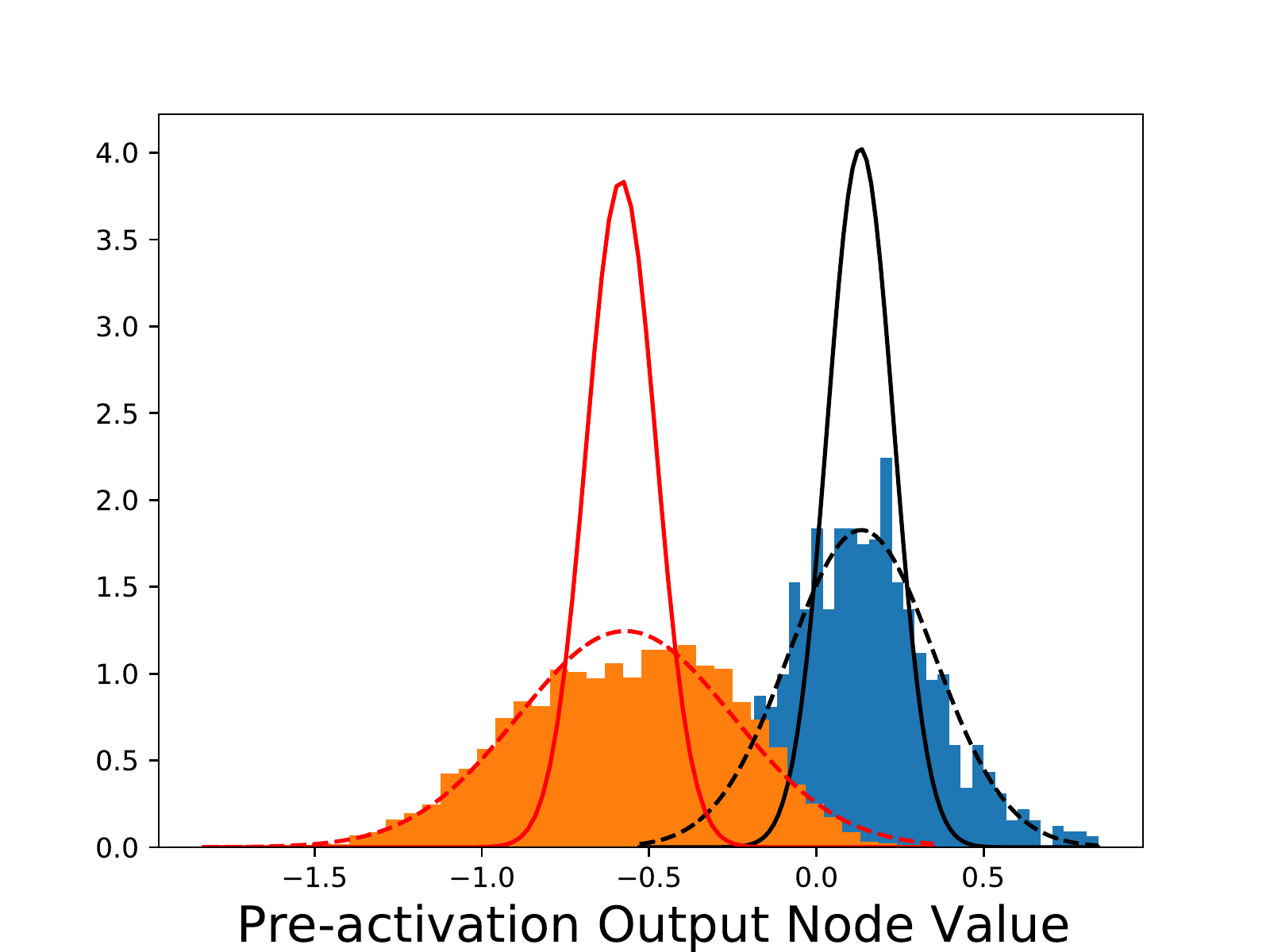}  \\
         (c) Cls. 1& (d) Cls. 6 & Cls. 9 (f) 
    \end{tabular}
    \caption{The consequence of ignoring the covariances between different GAP feature and deactivated features on the CIFAR10 dataset. The result are significant under estimation of the variances of the output node Gaussian distributions as shown in (a). Consequently, the predicted DNN KL divergence is much larger than its actual value (b) (compare with Fig. \ref{fig:cifar_flower_kl_div}a). Examples of the effect of underestimating the variances of the output node distributions can be seen in (c), (d) and (f). }
    \label{fig:nocov}
\end{figure}
\section{Conclusions}
\label{sec:conclusions}

In this paper, we have proposed a novel mathematical formulation that allowed for the analytical modelling of probability distributions of layer activations involved with nonlinear deep feature aggregation. 
In this paper, we have carried out a theoretical study of the distributions of activations in the aggregation process of a deep net. To achieve this, we proposed a novel mathematical formulation that allowed for the analytical modelling of probability distributions of layer activations involved with nonlinear deep feature aggregation.  
Using a mixed discrete-continuous distribution for modelling the output values of convolutional layers, we derived the output distributions for each layer in the aggregation block.
We then showed how these distributions can be used to model the output value distributions of a classification DNN. This immediately allowed us to obtain the KL divergence of the DNN. Importantly, we now have a direct link between the learnable parameters for layers starting from the aggregation block and the DNN KL divergence. From this, we made a theoretical observation of how the covariance of pre-aggregation deep features can influence the KL divergence and thus, classification performance of a deep net. In particular, we have found that an increase in covariance of convolutional layer activations leads to a decreased KL divergence, confirming existing intuition on the need for decorrelated deep features for good DNN accuracy. The model predictions were then verified on classification tasks from two different datasets and DNN models, where our models were shown to have good agreement with the experimental results.


\begin{thebibliography}{10}\itemsep=-1pt

\bibitem{NetVLAD}
R. Arandjelovic, P. Gronat, A. Torii, T. Pajdla, and J. Sivic.
\newblock Net\textsc{VLAD}: \textsc{CNN} architecture for weakly supervised
  place recognition.
\newblock {\em IEEE Transactions on Pattern Analysis and Machine Intelligence},
  40(6):1437--1451, June 2018.

\bibitem{MAC}
H. {Azizpour}, A.~S. {Razavian}, J. {Sullivan}, A. {Maki}, and S. {Carlsson}.
\newblock From generic to specific deep representations for visual recognition.
\newblock In {\em 2015 IEEE Conference on Computer Vision and Pattern
  Recognition Workshops (CVPRW)}, pages 36--45, 2015.

\bibitem{brock2021characterizing}
Andrew Brock, Soham De, and Samuel~L Smith.
\newblock Characterizing signal propagation to close the performance gap in
  unnormalized resnets.
\newblock In {\em International Conference on Learning Representations}, 2021.

\bibitem{Xception}
F. {Chollet}.
\newblock Xception: Deep learning with depthwise separable convolutions.
\newblock In {\em 2017 IEEE Conference on Computer Vision and Pattern
  Recognition (CVPR)}, pages 1800--1807, 2017.

\bibitem{matthews2018gaussian}
Alexander~G. de G.~Matthews, Mark Rowland, Jiri Hron, Richard~E. Turner, and
  Zoubin Ghahramani.
\newblock Gaussian process behaviour in wide deep neural networks, 2018.

\bibitem{Fukushima1980}
Kunihiko Fukushima.
\newblock Neocognitron: A self-organizing neural network model for a mechanism
  of pattern recognition unaffected by shift in position.
\newblock {\em Biological Cybernetics}, 36(4):193--202, Apr 1980.

\bibitem{resnet}
K. He, X. Zhang, S. Ren, and J. Sun.
\newblock Deep residual learning for image recognition.
\newblock In {\em IEEE Conference on Computer Vision and Pattern Recognition},
  June 2016.

\bibitem{HusainPAMI}
S.~S. Husain and M. Bober.
\newblock Improving large-scale image retrieval through robust aggregation of
  local descriptors.
\newblock {\em IEEE Transactions on Pattern Analysis and Machine Intelligence},
  39(9):1783--1796, Sept 2017.

\bibitem{REMAP}
S.~S. {Husain} and M. {Bober}.
\newblock \textsc{REMAP}: Multi-layer entropy-guided pooling of dense cnn
  features for image retrieval.
\newblock {\em IEEE Transactions on Image Processing}, 28(10):5201--5213, 2019.

\bibitem{cifar}
Alex Krizhevsky.
\newblock Learning multiple layers of features from tiny images.
\newblock {\em University of Toronto}, 05 2012.

\bibitem{krizhevsky2012imagenet}
Alex Krizhevsky, Ilya Sutskever, and Geoffrey~E Hinton.
\newblock Imagenet classification with deep convolutional neural networks.
\newblock In {\em Advances in neural information processing systems}, pages
  1097--1105, 2012.

\bibitem{lee18}
Jaehoon Lee, Yasaman Bahri, Roman Novak, Sam Schoenholz, Jeffrey Pennington,
  and Jascha Sohl-dickstein.
\newblock Deep neural networks as gaussian processes.
\newblock In {\em Proc. of ICLR}, 2018.

\bibitem{lin2014network}
Min Lin, Qiang Chen, and Shuicheng Yan.
\newblock Network in network.
\newblock In {\em International Conference on Learning Representations,
  {ICLR}}, 2014.

\bibitem{nealthesis}
Radford~M Neal.
\newblock {\em Bayesian Learning For Neural Networks}.
\newblock PhD thesis, University of Toronto, The address of the publisher, 7
  1995.
\newblock An optional note.

\bibitem{Novak19}
Roman Novak, Lechao Xiao, Jaehoon Lee, Yasaman Bahri, Greg Yang◦and~Jiri
  Hron, Daniel~A. Abolafia, Jeffrey Pennington, and Jascha Sohl-Dickstein.
\newblock Bayesian deep convolutional networks with many channels are gaussian
  processes.
\newblock In {\em Proc. of ICLR}, 2019.

\bibitem{GEM}
F. Radenovic, G. Tolias, and O. Chum.
\newblock Fine-tuning \textsc{CNN} image retrieval with no human annotation.
\newblock {\em IEEE Transactions on Pattern Analysis and Machine Intelligence},
  pages 1--1, 2018.

\bibitem{simonyan2014very}
Karen Simonyan and Andrew Zisserman.
\newblock Very deep convolutional networks for large-scale image recognition.
\newblock {\em arXiv preprint arXiv:1409.1556}, 2014.

\bibitem{EFFNET}
Mingxing Tan and Quoc Le.
\newblock {E}fficient{N}et: Rethinking model scaling for convolutional neural
  networks.
\newblock In {\em Proceedings of the 36th International Conference on Machine
  Learning}, volume~97, pages 6105--6114, 2019.

\bibitem{ROIP}
Giorgos Tolias, Ronan Sicre, and Herv{\'e} J{\'e}gou.
\newblock Particular object retrieval with integral max-pooling of \textsc{CNN}
  activations.
\newblock {\em CoRR}, 2015.

\bibitem{maxpool_weng}
J.J. Weng, N. Ahuja, and T.S. Huang.
\newblock Learning recognition and segmentation of 3-d objects from 2-d images.
\newblock In {\em 1993 (4th) International Conference on Computer Vision},
  pages 121--128, 1993.

\end{thebibliography}
\end{document}